\title{Regularization Guarantees Generalization in Bayesian Reinforcement Learning through Algorithmic Stability}
\author {
    Aviv Tamar, Daniel Soudry, Ev Zisselman
}
\theoremstyle{definition}
\newtheorem{theorem}{Theorem}
\newtheorem{proposition}{Proposition}
\newtheorem{assumption}{Assumption}
\newtheorem{lemma}{Lemma}
\newtheorem{corollary}{Corollary}
\newtheorem{remark}{Remark}
\newtheorem{example}{Example}
\DeclareMathOperator*{\argmin}{arg\,min}
\newcommand{\hyp}{\mathcal{H}}
\newcommand{\loss}{\mathcal{L}}
\newcommand{\alg}{\mathcal{A}}
\newcommand{\probinit}{P_{\textrm{init}}}
\newcommand{\regret}{\mathcal{R}}
\newcommand{\order}{\mathcal{O}}
\newcommand{\cost}{C}
\newcommand{\cmin}{0}
\newcommand{\cmax}{\cost_{\mathrm{max}}}
\newcommand{\ssize}{N}
\newcommand{\hor}{H}
\newcommand{\bayesopt}{\pi_{\mathrm{BO}}}
\newcommand{\pfactor}{q}
\newcommand{\minprior}{P_{\mathrm{min}}}
\newcommand{\reg}{\mathcal{R}}
\newcommand{\cset}{\mathcal{C}}
\newcommand{\histset}{\mathcal{H}}
\newcommand{\val}{V}
\newcommand{\breg}{\mathcal{B}_{\reg}}
\newcommand{\bellman}{\mathcal{T}}
\newcommand{\hprob}{\mathcal{P}}
\newcommand{\htran}{\mathbf{P}}
\newcommand{\probdiff}{D}
\begin{document}

\maketitle

\begin{abstract}
  In the Bayesian reinforcement learning (RL) setting, a prior distribution over the unknown problem parameters -- the rewards and transitions -- is assumed, and a policy that optimizes the (posterior) expected return is sought. A common approximation, which has been recently popularized as \textit{meta-RL}, is to train the agent on a \textit{sample} of $N$ problem instances from the prior, with the hope that for large enough $N$, good generalization behavior to an unseen test instance 
  will be obtained. 
  In this work, we study generalization in Bayesian RL under the probably approximately correct (PAC) framework, using the method of algorithmic stability. Our main contribution is showing that by adding regularization, the optimal policy becomes stable in an appropriate sense. Most stability results in the literature build on strong convexity of the regularized loss -- an approach that is not suitable for RL as Markov decision processes (MDPs) are not convex. Instead, building on recent results of fast convergence rates for mirror descent in regularized MDPs, we show that regularized MDPs satisfy a certain \textit{quadratic growth} criterion, which is sufficient to establish stability. This result, which may be of independent interest, allows us to study the effect of regularization on generalization in the Bayesian RL setting. 
\end{abstract}

\section{Introduction}


How can an agent learn to quickly perform well in an unknown task? This is the basic question in reinforcement learning (RL).
The most popular RL algorithms are designed in a \textit{minimax} approach -- seeking a procedure that will eventually learn to perform well in any task~\citep{strehl2006pac,jaksch2010near,jin2018q}. Lacking prior information about the task, such methods must invest considerable efforts in \textit{uninformed exploration}, typically requiring many samples to reach adequate performance.
In contrast, when a \textit{prior distribution} over possible tasks is known in advance, an agent can direct its exploration much more effectively. This is the Bayesian RL (BRL, \citealt{ghavamzadeh2016bayesian}) setting. A \textit{Bayes-optimal policy} -- the optimal policy in BRL -- can be orders of magnitude more sample efficient than a minimax approach, and indeed, recent studies demonstrated a quick solution of novel tasks, sometimes in just a handful of trials~\citep{duan2016rl,zintgraf2020varibad,dorfman2020offline}.

For high-dimensional problems, and when the task distribution does not obey a very simple structure, solving BRL is intractable, and one must resort to approximations. 
A common approximation, which has been popularized under the term meta-RL~\cite{duan2016rl,finn2017model}, is to replace the distribution over tasks with an empirical sample of tasks, and seek an optimal policy with respect to the sample, henceforth termed the empirical risk minimization policy (ERM policy). In this paper, we investigate the performance of the ERM policy on a novel task from the task distribution, that is -- we ask \textit{how well the policy generalizes}.

We focus on the Probably Approximately Correct (PAC) framework, which is popular in supervised learning, and adapt it to the BRL setting. Since the space of deterministic history-dependent policies is finite (in a finite horizon setting), a trivial generalization bound for a finite hypothesis space can be formulated. However, the size of the policy space, which such a naive bound depends on, leads us to seek alternative methods for \textit{controlling} generalization. In particular, regularization is a well-established method in supervised learning that can be used to trade-off training error and test error. In RL, regularized MDPs are popular in practice~\cite{schulman2017proximal}, and have also received interest lately due to their favorable optimization properties~\cite{shani2020adaptive,neu2017unified}. 

The main contribution of this work is making the connection between regularized MDPs and PAC generalization, as described above. We build on the classical analysis of \citet{bousquet2002stability}, which bounds generalization through algorithmic stability. Establishing algorithmic stability results for regularized MDPs, however, is not trivial, as the loss function in MDPs is not convex in the policy. Our key insight is that while not convex, regularized MDPs satisfy a certain \textit{quadratic growth} criterion, which is sufficient to establish stability. To show this, we build on the recently discovered fast convergence rates for mirror descent in regularized MDPs~\citep{shani2020adaptive}. Our result, which may be of independent interest, allows us to derive generalization bounds that can be controlled by the regularization magnitude. Furthermore, we show that when the MDP prior obeys certain structural properties, our results significantly improve the trivial finite hypothesis space bound.

To our knowledge, this is the first work to formally study generalization in the BRL setting. While not explicitly mentioned as such, the BRL setting has been widely used by many empirical studies on generalization in RL~\citep{tamar2016value,cobbe2020leveraging}. In fact, whenever the MDPs come from a distribution, BRL is the relevant formulation. Our results therefore also establish a formal basis for studying generalization in RL.

This paper is structured as follows. After surveying related work in Section \ref{s:related_work}, we begin with background on MDPs, BRL, and algorithmic stability in Section \ref{s:background}, and then present our problem formulation and  straightforward upper and lower bounds for the ERM policy in Section \ref{s:formulation}. In Section \ref{s:reg_MDPs} we discuss fundamental properties of regularized MDPs. In Section \ref{s:general_quadratic_growth} we describe a general connection between a certain rate result for mirror descent and a quadratic growth condition, and in Section \ref{s:reg_MDP_stability} we apply this connection to regularized MDPs, and derive corresponding generalization bounds. We discuss our results and future directions in Section \ref{s:discussion}.

\section{Related Work}\label{s:related_work}
Generalization to novel tasks in RL has been studied extensively, often without making the explicit connection to Bayesian RL. Empirical studies can largely be classified into three paradigms. The first \textit{increases the number of training tasks}, either using procedurally generated domains~\citep{cobbe2020leveraging}, or by means such as image augmentation~\citep{kostrikov2020image} and task interpolation~\citep{yao2021meta}. The second paradigm \textit{adds inductive bias} to the neural network, such as a differentiable planning or learning computation~\citep{tamar2016value,boutilier2020differentiable}, or graph neural networks~\citep{rivlin2020generalized}. The third is \textit{meta-RL}, where an agent is explicitely trained to generalize, either using a Bayesian RL objective~\citep{duan2016rl,zintgraf2020varibad}, or through gradient based meta learning~\citep{finn2017model}. We are not aware of theoretical studies of generalization in Bayesian RL.

The Bayesian RL algorithms of \citet{guez2012efficient} and \citet{grover2020bayesian} perform online planning in the belief space by sampling MDPs from the posterior at each time step, and optimizing over this sample. The performance bounds for these algorithms require the \textit{correct posterior} at each step, implicitly assuming a correct prior, while we focus on \textit{learning the prior} from data. The lower bound in our Proposition \ref{prop:lower_bound} demonstrates how errors in the prior can severely impact performance.

Our stability-based approach to PAC learning is based on the seminal work of \citet{bousquet2002stability}. More recent works investigated stability of generalized learning~\citep{shalev2010learnability}, multi-task learning~\citep{zhang2015multi}, and stochastic gradient descent~\citep{hardt2016train}. To our knowledge, we provide the first stability result for regularized MDPs, which, due to their non-convex nature, requires a new methodology. The stability results of \citet{charles2018stability} build on quadratic growth, a property we use as well. However, showing that this property holds for regularized MDPs is not trivial, and is a major part of our contribution.

Finally, there is recent interest in PAC-Bayes theory for meta learning~\citep{amit2018meta,rothfuss2021pacoh,farid2021pac}. To our knowledge, this theory has not yet been developed for meta RL.

\section{Background}\label{s:background}
We give background on BRL and algorithmic stability.
\subsection{MDPs and Bayesian RL}\label{ss:mdp_background}
A stationary Markov decision process (MDP, \citealt{bertsekas1995dynamic}) is defined by a tuple $M=(S,A,\probinit,\cost,P,\hor)$, where $S$ and $A$ 
are the state and actions spaces, $\probinit$ is an initial state distribution, $\cost:S\times A\to [\cmin,\cmax]$ is a bounded cost function, $P$ is the transition kernel, and $\hor$ is the episode horizon, meaning that after $\hor$ steps of interaction, the state is reset to $s\sim \probinit$. We make the additional assumption that the cost $\cost(s,a) \in \cset$, and $\cset$ is a finite set.\footnote{This assumption is non-standard, and required to guarantee a finite set of possible histories in the Bayesian RL setting. In practice, the reward can be discretized to satisfy the assumption.} 

In the Bayesian RL setting (BRL, \citealt{ghavamzadeh2016bayesian}), there is a distribution over MDPs $P(M)$, defined over some space of MDPs $\mathcal{M}$. For simplicity, we assume that $S$, $A$, $\probinit$, and $\hor$ are fixed for all MDPs in $\mathcal{M}$, and thus the only varying factors between different MDPs are the costs and transitions, denoted $\cost_M$ and $P_M$. 

A \textit{simulator} for an MDP $M$ is a sequential algorithm that at time $t$ outputs $s_t$, and, given input $a_t$, outputs $c_t = \cost(s_t,a_t)$, and transitions the state according to $s_{t+1}\sim P(\cdot|s_t,a_t)$. After every $\hor$ steps of interaction, the state is reset to $s\sim \probinit$.
Let the history at time $t$ be $h_t = \left\{ s_0, a_0, c_0, s_1,a_1, c_1\dots, s_t\right\}$. A \emph{policy} $\pi$ is a stochastic mapping from the history to a probability over actions $\pi(a|h_t) = P(a_t=a|h_t)$.

A typical MDP objective is to minimize the $T$-horizon expected return $\mathbb{E}_{\pi; M}\left[ \sum_{t=0}^T \cost_M(s_t,a_t)\right]$, where the expectation is with respect to the policy $\pi$ and state transitions prescribed by $M$. In BRL, the objective is an average over the possible MDPs in the prior:
\begin{equation}\label{eq:objective}
    \loss(\pi) = \mathbb{E}_{M\sim P}\mathbb{E}_{\pi; M}\left[ \sum_{t=0}^T \cost_M(s_t,a_t)\right].
\end{equation}
We denote by $\histset$ the space of $T$-length histories. Note that by our definitions above, $\histset$ is finite. Also note that $T$ is not necessarily equal to $\hor$.


\subsection{PAC Generalization and Algorithmic Stability}\label{ss:stability_background}
Statistical learning theory~\cite{vapnik2013nature} studies the generalization performance of a prediction algorithm trained on a finite data sample. Let $S = \left\{ z_1,\dots,z_{\ssize}\right\}$ denote a sample of ${\ssize}\geq 1$ i.i.d. elements from some space $\mathcal{Z}$ with distribution $P(z)$. A learning algorithm $\alg$ takes as input $S$ and outputs a prediction function $\alg_S$.
Let $0 \leq \ell(\alg_S,z) \leq B$, where $z\in \mathcal{Z}$, denote the loss of the prediction on a sample $z$. The population risk is $R(\alg,S) = \mathbb{E}_z \left[ \ell(\alg_S, z) \right]$, and the empirical risk is $\hat{R}(\alg,S) = \frac{1}{\ssize} \sum_{i=1}^{\ssize} \left[ \ell(\alg_S, z_i) \right]$. Typically, algorithms are trained by minimizing the empirical risk. Probably approximately correct (PAC) learning algorithms are guaranteed to produce predictions with a population risk that is close to the empirical risk with high probability, and thus generalize. We recite fundamental results due to \citet{bousquet2002stability} that connect algorithmic stability and PAC bounds.

Let $S^{\backslash i}$ denote the set $S$ with element $i$ removed. An algorithm satisfies uniform stability $\beta$ if the following holds:
\begin{equation*}
    \forall S \in \mathcal{Z}^{\ssize}, \forall i \in \{1,\dots,{\ssize}\}, \| \ell(\alg_S, \cdot) - \ell(\alg_{S^{\backslash i}}, \cdot)\|_\infty \leq \beta.
\end{equation*}
An algorithm is said to satisfy pointwise hypothesis stability $\beta$ if the following holds:
\begin{equation*}
    \forall i \in \left\{ 1,\dots,\ssize\right\}, \mathbb{E}_{S} \left[\left| \ell(\alg_S, z_i) - \ell(\alg_{S^{\backslash i}}, z_i)\right| \right]\leq \beta.
\end{equation*}

\begin{theorem}[Theorem 11 in \citealt{bousquet2002stability}]\label{thm:pointwise_stability}
Let $\alg$ be an algorithm with pointwise hypothesis stability $\beta$. Then, for any $\delta \in (0,1)$, with probability at least $1-\delta$ over the random draw of $S$,
\begin{equation*}
    R(\alg,S) \leq \hat{R}(\alg,S) + \sqrt{\frac{B^2 + 12B\ssize \beta}{2 \ssize \delta}}.
\end{equation*}
\end{theorem}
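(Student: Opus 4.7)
The plan is to reduce the high-probability statement to a second-moment bound on the generalization gap, and then establish that bound by a leave-one-out / exchangeability argument that invokes the pointwise hypothesis stability assumption a bounded number of times. Set $V(S) = R(\alg,S) - \hat{R}(\alg,S)$. By Markov's inequality applied to $V^2$,
$$\Pr_S\bigl(V(S) \geq t\bigr) \leq \Pr_S\bigl(V(S)^2 \geq t^2\bigr) \leq \mathbb{E}_S[V(S)^2]/t^2,$$
so if I can show $\mathbb{E}_S[V(S)^2] \leq (B^2 + 12B\ssize\beta)/(2\ssize)$, then choosing $t$ to make the right-hand side equal to $\delta$ produces exactly the claimed bound.

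To bound the second moment, I would expand
$$\mathbb{E}_S[V^2] = \mathbb{E}_{S,z,z'}[\ell(\alg_S,z)\ell(\alg_S,z')] - 2\,\mathbb{E}_{S,z}\bigl[\ell(\alg_S,z)\,\hat R(\alg,S)\bigr] + \mathbb{E}_S\bigl[\hat R(\alg,S)^2\bigr],$$
where $z,z'$ are independent fresh copies drawn from $P$. The strategy is to rewrite each of these three terms so that they refer to the \emph{same} reference quantity, namely an expectation of a product of two losses evaluated on a common algorithm and common ghost samples. The rewriting proceeds by exchangeability (renaming training indices) together with single-sample substitutions, in which one training point $z_i$ is replaced by an independent copy; each such substitution contributes an error at most $\beta$ in expectation because
$$\mathbb{E}_S\bigl|\ell(\alg_S,z_i) - \ell(\alg_{S^{\backslash i}},z_i)\bigr| \leq \beta$$
and the remaining loss factor contributes at most $B$. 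The diagonal terms $i=j$ in $\mathbb{E}[\hat R^2]$ that cannot be matched against $\mathbb{E}[R^2]$ or $\mathbb{E}[R\hat R]$ are controlled by the trivial bound $\ell \leq B$; since they carry total weight $1/\ssize$, they yield the $B^2/(2\ssize)$ contribution.

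The main obstacle, and essentially the whole content of the proof, is the symmetrization bookkeeping that produces the constant $12$. One must count carefully how many single-sample replacements are required to align $\mathbb{E}[R^2]$, $\mathbb{E}[R\hat R]$, and the off-diagonal part of $\mathbb{E}[\hat R^2]$ into a common form, and each replacement must be handled via the triangle inequality \emph{inside} the expectation, since pointwise hypothesis stability only controls an average (not a sup) over $S$. Once this alignment is executed, the off-diagonal contributions collectively give $12B\beta$, and combining with the diagonal term completes the second-moment inequality; the Markov step above then delivers the theorem.
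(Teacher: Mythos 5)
This theorem is not proved in the paper at all --- it is quoted, with attribution, as Theorem 11 of \citet{bousquet2002stability}, so the only basis for comparison is the original source. Your outer reduction coincides with the one used there: apply Markov's inequality to $(R(\alg,S)-\hat R(\alg,S))^2$ and reduce the claim to the second-moment bound $\mathbb{E}_S\bigl[(R(\alg,S)-\hat R(\alg,S))^2\bigr] \leq \tfrac{B^2+12B\ssize\beta}{2\ssize}$, which is exactly their Lemma~9 specialized to pointwise hypothesis stability. That part of your argument is correct and complete.

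The problem is that the second-moment bound --- which you yourself describe as ``essentially the whole content of the proof'' --- is asserted rather than established, and the details you wave at do not obviously close. Two concrete issues. First, pointwise hypothesis stability only controls $\mathbb{E}_S\bigl|\ell(\alg_S,z_i)-\ell(\alg_{S^{\backslash i}},z_i)\bigr|$ \emph{at the deleted training point} $z_i$, not at a fresh sample; your sketch speaks of ``single-sample substitutions, in which one training point $z_i$ is replaced by an independent copy,'' but after you swap $\ell(\alg_S,z_i)$ for $\ell(\alg_{S^{\backslash i}},z_i)$ the companion factor in the product still depends on $z_i$ through $\alg_S$, so decoupling one off-diagonal term requires several increments, each of which must be arranged so that stability is invoked only at a deleted point. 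Counting these increments is precisely what produces the constant (six terms of $B\beta$, i.e.\ $12B\ssize\beta$ in the numerator after the $2\ssize$ normalization --- your statement that the off-diagonal part ``gives $12B\beta$'' is off by that normalization, since the target is $\tfrac{B^2}{2\ssize}+6B\beta$). Second, even the ``easy'' diagonal term does not come out as claimed: weight $1/\ssize$ times the trivial bound $B^2$ gives $B^2/\ssize$, not $B^2/(2\ssize)$, so the factor $1/2$ also needs an argument. As written, the proposal is a correct road map to the right proof, but the quantitative content of the theorem --- the specific constant under the square root --- is not derived.
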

\begin{theorem}[Theorem 12 in \citealt{bousquet2002stability}]\label{thm:stability}
Let $\alg$ be an algorithm with uniform stability $\beta$. Then, for any $\delta \in (0,1)$, with probability at least $1-\delta$ over the random draw of $S$,
\begin{equation*}
    R(\alg,S) \leq \hat{R}(\alg,S) + 2\beta + (4 \ssize \beta + B) \sqrt{\frac{\ln (1/ \delta)}{2 \ssize}}.
\end{equation*}
\end{theorem}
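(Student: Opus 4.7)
The plan is to apply McDiarmid's bounded differences inequality to the random variable
\begin{equation*}
    \Phi(S) = R(\alg, S) - \hat{R}(\alg, S),
\end{equation*}
viewed as a function of the $\ssize$ i.i.d.\ samples $z_1,\dots,z_\ssize$ comprising $S$. This will produce a high-probability upper bound on $\Phi(S) - \mathbb{E}_S[\Phi(S)]$, which I would combine with a direct $\mathcal{O}(\beta)$ bound on $\mathbb{E}_S[\Phi(S)]$ obtained from uniform stability.

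First I would bound $\mathbb{E}_S[\Phi(S)]$. Because the samples are i.i.d., a renaming trick gives $\mathbb{E}_S[\ell(\alg_S, z_i)] = \mathbb{E}_{S, z'}[\ell(\alg_{S^{i \to z'}}, z')]$, where $S^{i \to z'}$ denotes $S$ with $z_i$ replaced by an independent copy $z'$. Hence
\begin{equation*}
    \mathbb{E}_S[\Phi(S)] = \frac{1}{\ssize}\sum_{i=1}^{\ssize} \mathbb{E}_{S,z'}\bigl[\ell(\alg_S, z') - \ell(\alg_{S^{i \to z'}}, z')\bigr],
\end{equation*}
and uniform stability applied twice---routing through the leave-one-out hypothesis $\alg_{S^{\backslash i}}$, with each leg incurring at most $\beta$ in sup-norm---yields $|\mathbb{E}_S[\Phi(S)]| \leq 2\beta$.

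Next I would verify that $\Phi$ satisfies a bounded differences condition. Replacing $z_i$ in $S$ by some $z_i'$ to obtain $S'$, both $\alg_S$ and $\alg_{S'}$ are within $\beta$ of $\alg_{S^{\backslash i}}$ in loss sup-norm, so $\|\ell(\alg_S, \cdot) - \ell(\alg_{S'}, \cdot)\|_\infty \leq 2\beta$, giving $|R(\alg,S) - R(\alg,S')| \leq 2\beta$. For the empirical risk, the $\ssize - 1$ unchanged summands each contribute at most $2\beta$ and the single changed summand at most $B$, so $|\hat{R}(\alg,S) - \hat{R}(\alg,S')| \leq 2\beta + B/\ssize$; combining, $|\Phi(S) - \Phi(S')| \leq 4\beta + B/\ssize$. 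Plugging $c = 4\beta + B/\ssize$ into McDiarmid and inverting the resulting tail bound at level $\delta$ produces the term $(4\ssize\beta + B)\sqrt{\ln(1/\delta)/(2\ssize)}$; adding the $2\beta$ bound on the expectation gives the stated inequality.

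The main obstacle is pinning down the correct constant on the expectation bound: uniform stability is defined via a leave-one-out perturbation, whereas the symmetry argument naturally produces a replace-one perturbation, which is why the factor comes out $2\beta$ rather than $\beta$. The same subtlety recurs in the bounded-differences step, where routing through the leave-one-out hypothesis is again the cleanest device. Once these two ingredients are in place, the rest is a direct invocation of McDiarmid, and the dependence on $\beta$ and $B$ in the stated bound is read off from the bounded differences constant.
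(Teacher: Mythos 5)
Your proof is correct and is essentially the standard argument for this result: the paper itself gives no proof (it imports Theorem 12 directly from \citealt{bousquet2002stability}), and the original proof there is exactly your McDiarmid argument, with the $2\beta$ expectation bound via the replace-one/leave-one-out triangle inequality and the bounded-differences constant $4\beta + B/\ssize$ yielding the $(4\ssize\beta+B)\sqrt{\ln(1/\delta)/(2\ssize)}$ term. All constants check out against the stated bound.
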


The bounds in Theorems \ref{thm:pointwise_stability} and \ref{thm:stability} are useful if one can show that for a particular problem, $\beta$ scales as $o(1/\sqrt{N})$. Indeed, \citealt{bousquet2002stability} showed such results for several supervised learning problems. For example, $L_2$ regularized kernel regression has stability $\order(1/ \lambda N)$, where $\lambda$ -- the regularization weight in the loss -- can be chosen to satisfy the $o(1/\sqrt{N})$ condition on $\beta$.


\section{Problem Formulation}\label{s:formulation}
We next describe our learning problem. 
We are given a training set of $\ssize$ simulators for $\ssize$ independently sampled MDPs, $\left\{ M_1,\dots,M_N\right\}$, where each $M_i\sim P(M)$; in the following, we will sometimes refer to this training set as the \textit{training data}. We are allowed to interact with these simulators as we wish for an unrestricted amount of time. From this interaction, our goal is to compute a policy $\pi$ that obtains a low expected $T$-horizon cost for a \textit{test} simulator $M\sim P(M)$, i.e., we wish to minimize the population risk \eqref{eq:objective}. It is well known~\cite[e.g.,][]{ghavamzadeh2016bayesian} that there exists a deterministic history dependent policy that minimizes \eqref{eq:objective}, also known as the \textit{Bayes-optimal policy}, and we denote it by $\bayesopt$.
Our performance measure is the $T$-horizon average regret,
\begin{small}
\begin{equation}\label{eq:regret}
\begin{split}
    \regret_{T}(\pi) =& \mathbb{E}_{M\sim P}\Bigg[ \mathbb{E}_{\pi; M}\bigg[ \sum_{t=0}^T \cost_M(s_t,a_t)\bigg] \\
    &- \!\mathbb{E}_{\bayesopt; M}\!\bigg[ \!\sum_{t=0}^T \!\cost_M(s_t,a_t)\!\bigg] \!\Bigg] 
    = \loss(\pi) - \loss(\bayesopt).
\end{split}
\raisetag{5em}
\end{equation}
\end{small}
\begin{remark}
The BRL formulation generalizes several special cases that were explored before in the context of generalization in RL. When $T=k \hor$, this setting is often referred to as $k$-shot learning, and in particular, for $T=\hor$, the learned policy is evaluated on solving a test task in a single shot. Another popular setting is the contextual MDP~\citep{hallak2015contextual}, where, in addition to the state, each task $M$ is identified using some task identifier $id_M$, which is observed. By adding $id_M$ to the state space, and modifying the dynamics such that $id_M$ does not change throughout the episode, 
this setting is a special case of our formalism.
Finally, many previous studies \cite[e.g.,][]{tamar2016value} considered the same performance objective, but limited the optimization to Markov policies (i.e., policies that depend only on the current state and not on the full history).
In this work, we specifically consider history dependent policies, as it allows us to meaningfully compare the learned policy with the optimum.
\end{remark}

\subsection{Analysis of an ERM Approach}
Our goal is to study the generalization properties of learning algorithms in the BRL setting.
An intuitive approach, in the spirit of the empirical risk minimization (ERM, \citealt{vapnik2013nature}) principle, is to minimize the \textit{empirical risk},
\begin{small}
\begin{equation}\label{eq:empirical_objective_PAC}
\begin{split}
    \hat{\loss}(\pi) &= \frac{1}{\ssize} \sum_{i=1}^\ssize \mathbb{E}_{\pi; M_i}\left[ \sum_{t=0}^T \cost_{M_i}(s_t,a_t)\right] \\
    &\equiv \mathbb{E}_{M\sim \hat{P}_{\ssize}}\mathbb{E}_{\pi, M}\left[ \sum_{t=0}^T \cost_M(s_t,a_t)\right],
\end{split}
\end{equation}
\end{small}
where $\hat{P}_{\ssize}$ is the empirical distribution of the ${\ssize}$ sampled MDPs. Let $\hat{\pi}^* \in \argmin_{\pi \in \hyp} \hat{\loss}(\pi)$ denote the ERM policy.

Since the hypothesis space of deterministic history dependent policies is finite, and the loss is bounded by $\cost_{\max}T$, a trivial generalization bound can be formulated as follows (following PAC bounds for a finite hypothesis class, e.g.,~\citealt{shalev2014understanding}).
\begin{proposition}\label{prop:naive_bound}
Consider the ERM policy $\hat{\pi}^*$, and let $\bar{\mathcal{H}}$ denote the set of deterministic $T$-length history dependent policies. Then with probability at least $1-\delta$, 
\begin{equation*}
    \regret_{T}(\hat{\pi}^*) \leq \sqrt{\frac{2 \log (2 |\bar{\mathcal{H}}| / \delta) \cost_{\max}^2 T^2}{\ssize}}.
\end{equation*}
\end{proposition}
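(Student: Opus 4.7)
The plan is to follow the standard recipe for PAC bounds over a finite hypothesis class, adapted to the Bayesian RL loss. The key observation is that for any fixed deterministic history-dependent policy $\pi \in \bar{\mathcal{H}}$, the quantity $X_i(\pi) \coloneqq \mathbb{E}_{\pi;M_i}\left[\sum_{t=0}^T \cost_{M_i}(s_t,a_t)\right]$ is a deterministic function of the sampled MDP $M_i$, bounded in $[0, \cost_{\max}T]$. Since the $M_i$ are drawn i.i.d.\ from $P(M)$, the $X_i(\pi)$ are i.i.d.\ with mean $\loss(\pi)$, and $\hat{\loss}(\pi) = \frac{1}{\ssize}\sum_i X_i(\pi)$ is their empirical average.

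First I would apply Hoeffding's inequality to each fixed $\pi$ to get, for any $\epsilon > 0$,
\begin{equation*}
\Pr\!\left(\, |\hat{\loss}(\pi) - \loss(\pi)| \geq \epsilon \,\right) \leq 2\exp\!\left(\frac{-2\ssize \epsilon^2}{\cost_{\max}^2 T^2}\right).
\end{equation*}
Then I would take a union bound over $\pi \in \bar{\mathcal{H}}$. This is legitimate because the finiteness of $\bar{\mathcal{H}}$ follows from the assumptions made in Section~\ref{ss:mdp_background}: the state/action spaces are effectively finite through the setup, $T$ is finite, and, crucially, the cost assumption $\cost(s,a)\in \cset$ with $\cset$ finite guarantees that $\histset$ (the space of length-$T$ histories) is finite, so the number of deterministic mappings $\histset \to A$ is finite as well. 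Setting the union-bound failure probability equal to $\delta$ and solving yields $\epsilon = \sqrt{\cost_{\max}^2 T^2 \log(2|\bar{\mathcal{H}}|/\delta) / (2\ssize)}$, so that with probability at least $1-\delta$ the uniform deviation bound $\sup_{\pi \in \bar{\mathcal{H}}}|\hat{\loss}(\pi) - \loss(\pi)| \leq \epsilon$ holds.

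To conclude, I would use the standard ERM argument. By the remark in Section~\ref{s:formulation}, the Bayes-optimal policy $\bayesopt$ is deterministic and history-dependent, hence $\bayesopt \in \bar{\mathcal{H}}$, and $\hat{\pi}^* \in \bar{\mathcal{H}}$ by definition. On the good event,
\begin{equation*}
\loss(\hat{\pi}^*) \;\leq\; \hat{\loss}(\hat{\pi}^*) + \epsilon \;\leq\; \hat{\loss}(\bayesopt) + \epsilon \;\leq\; \loss(\bayesopt) + 2\epsilon,
\end{equation*}
where the middle inequality uses the defining property of the ERM policy. Rearranging gives $\regret_T(\hat{\pi}^*) = \loss(\hat{\pi}^*) - \loss(\bayesopt) \leq 2\epsilon$, and substituting the value of $\epsilon$ yields exactly the stated bound $\sqrt{2 \log(2|\bar{\mathcal{H}}|/\delta)\, \cost_{\max}^2 T^2 / \ssize}$.

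There is no real obstacle here; the proposition is an entirely routine finite-hypothesis PAC bound. The only step worth pausing on is making sure the finiteness of $\bar{\mathcal{H}}$ is explicitly justified via the finite-cost-set assumption, and that $\bayesopt$ lies in the same class $\bar{\mathcal{H}}$ against which the ERM is performed, so that the standard comparison $\hat{\loss}(\hat{\pi}^*) \leq \hat{\loss}(\bayesopt)$ is available.
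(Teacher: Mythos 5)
Your proof is correct and is exactly the standard finite-hypothesis-class argument (Hoeffding for each fixed deterministic policy, union bound over $\bar{\mathcal{H}}$, then the two-sided ERM comparison against $\bayesopt \in \bar{\mathcal{H}}$) that the paper itself invokes by citation without writing out; the constants work out precisely to the stated bound since $2\epsilon = \sqrt{2\log(2|\bar{\mathcal{H}}|/\delta)\cost_{\max}^2T^2/\ssize}$. Your explicit justification of the finiteness of $\histset$ via the finite cost set $\cset$ and of $\bayesopt$ being deterministic and history-dependent matches the assumptions the paper sets up for exactly this purpose.
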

Note that $|\bar{\mathcal{H}}|  = |A|^{|\histset|} \approx |A|^{(|S| |A| |\cset|)^T}$, so $\log |\bar{\mathcal{H}}|= \mathcal{O}((|S| |A| |\cset|)^T)$.
The exponential dependence on $T$ in the bound is not very satisfying, and one may ask whether a more favourable upper bound can be established for the ERM policy. To answer this, we next give a lower bound, showing that without additional assumptions on the problem, the exponential dependence on $T$ is necessary.
\begin{proposition}\label{prop:lower_bound}
For any $0\leq \delta < 1$, there is an $\epsilon>0$, and a problem, such that for $\ssize = 2^T$, with probability larger than $\delta$ we have $\regret_{T}(\hat{\pi}^*) > \epsilon$.
\end{proposition}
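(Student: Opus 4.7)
The plan is to construct a concrete BRL problem whose task prior contains $K := 2^{T+1}$ MDPs that are perfectly identifiable after one state transition but whose ``correct'' action is an independent bit. Because $K = 2N$ for $N = 2^T$, any training sample omits at least $K/2$ of the tasks, and on those the empirical risk imposes no constraint on the policy; an ERM can therefore be selected to behave adversarially on them, forcing $\Omega(T)$ population regret.

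Concretely, I fix $T \geq 2$, set $\hor = T$, $A = \{0,1\}$, $\cset = \{0,1\}$, $S = \{s_0, s_1, \ldots, s_K\}$, and $\probinit(s_0) = 1$. Fix any $g \colon [K] \to \{0,1\}$, and for each $i \in [K]$, define MDP $M_i$ with deterministic transitions $s_0 \mapsto s_i \mapsto s_i \mapsto \cdots$ under either action, and costs $\cost_{M_i}(s_0, \cdot) = 0$, $\cost_{M_i}(s_i, a) = \mathbb{I}[a \neq g(i)]$, and $\cost_{M_i}(s_j, \cdot) = 0$ for $j \notin \{0, i\}$. The prior is uniform over $\{M_1, \ldots, M_K\}$. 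The history $h_1$ reveals $s_1 = s_i$, which uniquely identifies the MDP, so $\bayesopt$ plays $g(i)$ at every step $t \geq 1$ and attains $\loss(\bayesopt) = 0$. Given training indices $I \subseteq [K]$ with $U := |I| \leq N$, I take $\hat{\pi}^*$ to play $g(i)$ at every $t \geq 1$ when $i \in I$ and $1 - g(i)$ otherwise. On each training MDP this trajectory is cost-free, so $\hat{\loss}(\hat{\pi}^*) = 0$, the minimum possible, certifying $\hat{\pi}^*$ as a valid ERM. Its population risk is
\begin{equation*}
\loss(\hat{\pi}^*) \;=\; \frac{1}{K}\sum_{i=1}^K \mathbb{E}_{\hat{\pi}^*; M_i}\!\left[\sum_{t=0}^T \cost_{M_i}(s_t,a_t)\right] \;=\; \frac{T(K-U)}{K} \;\geq\; \frac{T(K-N)}{K} \;=\; \frac{T}{2},
\end{equation*}
so $\regret_{T}(\hat{\pi}^*) \geq T/2$ holds with probability one, and taking $\epsilon = 1/2$ yields the claim for every $\delta \in [0,1)$.

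The only delicate point is justifying that an ERM may act adversarially on histories not produced by any training MDP. This holds because $\hat{\loss}(\pi)$ depends only on the trajectory distributions of the $M_{i_j}$, so two deterministic policies agreeing on histories reachable from the training MDPs share the same empirical risk; the $\argmin$ over $\bar{\mathcal{H}}$ thus contains the adversarial $\hat{\pi}^*$ above. Together with Proposition~\ref{prop:naive_bound}, the construction shows that, absent structural assumptions on the prior, the exponential dependence on $T$ is intrinsic to empirical risk minimization in BRL rather than an artefact of the finite-hypothesis argument.
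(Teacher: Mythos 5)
Your construction does establish the literal statement, but by a genuinely different and more elementary route than the paper, and with one caveat that I would call a real gap. The paper's proof keeps the state space small ($2H+1$ states): each of the $2^T$ MDPs is encoded in the \emph{trajectory} (the sequence of states traces the MDP's binary identifier), the transitions are perturbed by a small $\epsilon$ so that every history has positive likelihood under every MDP, the fraction of unsampled MDPs is lower-bounded via the limiting distribution of the number of distinct elements in an i.i.d.\ sample, and -- crucially -- on a history tracing an unseen identifier the empirical posterior, and hence the ERM's action, is \emph{determined by the training data}, so that for an adversarially chosen cost function $f$ the ERM is forced to err. Your version replaces all of this with a one-step separating state per task and a counting argument ($K=2\ssize$ tasks, at most $\ssize$ distinct in the sample, hence at least half unseen deterministically). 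This buys a probability-one bad event, an $\Omega(T)$ regret, and no concentration argument at all, which is appealingly clean.

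The gap: in your construction the empirical risk is completely insensitive to the policy's behaviour at the unseen signature states, so the argmin of $\hat{\loss}$ contains the Bayes-optimal policy alongside your adversarial one. Your chosen $\hat{\pi}^*$ plays $1-g(i)$ on unseen $i$, which requires oracle knowledge of $g$ outside the training set; no algorithm that sees only the data can be guaranteed to select it, and a benign tie-breaking rule could select a zero-regret minimizer in your problem. You have therefore shown that \emph{some} empirical risk minimizer fails with probability one, which rules out an upper bound holding uniformly over the argmin, but not that empirical risk minimization as an implementable procedure must fail -- which is what the paper's proof establishes, since there the noise pins down the ERM's action on every history and the data actively misleads it. To close this you would fix the selection rule first and choose $g$ adversarially against it (or average over $g$ uniformly, noting that the selected action at an unseen $s_j$ is independent of $g(j)$); at that point the probability-one claim no longer comes for free and a concentration step over $g$ or the sample reappears. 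A secondary weakness: your state space has size $2^{T+1}+1$, exponential in $T$, whereas the paper's is linear in $T$; since the lower bound is meant to show that the exponential dependence on $T$ in Proposition \ref{prop:naive_bound} is intrinsic rather than an artefact of a large $|S|$, the small-state-space construction is materially stronger evidence.
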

\begin{proof} (sketch; full proof in Section \ref{s:appendix_lower_bound}.)
Let $T=H$, and consider an MDP space $\mathcal{M}$ of size $2^H$, where the state space has $2H+1$ states that we label $s_0, s_1^0, s_1^1, \dots, s_t^0, s_t^1, \dots, s_H^0, s_H^1$. The initial state for all MDPs in $\mathcal{M}$ is $s_0$. A cost is only obtained at the last time step, and depends only on the last action. Each MDP $M\in \mathcal{M}$ corresponds to a unique binary number of size $H$, denoted $x$, and the transitions for each MDP correspond to the digits in its identifier $x$: there is a high probability to transition to $s_t^0$ from either $s_{t-1}^0$ or $s_{t-1}^1$ only if the $t$'s digit of $x$ is zero, and similarly, there is a high probability to transition to $s_t^1$ from either $s_{t-1}^0$ or $s_{t-1}^1$ only if the $t$'s digit of $x$ is one. Thus, with high probability, a trajectory in the MDP traces the digits of its identifier $x$. Given a finite data sample, there is a non-negligible set of MDPs that will not appear in the data. For any trajectory that corresponds to an $x$ from this set, the ERM policy at time $H$ will not be able to correctly identify the most probable MDP, and will choose an incorrect action with non-negligible probability.
\end{proof}

The results above motivate us to seek alternatives to the ERM approach, with the hope of providing more favorable generalization bounds. In the remainder of this paper, we focus on methods that add a regularization term to the loss.

\section{Regularized MDPs}\label{s:reg_MDPs}

In supervised learning, a well-established method for controlling generalization is to add a regularization term, such as the $L_2$ norm of the parameters, to the objective function that is minimized. The works of \citet{bousquet2002stability,shalev2010learnability} showed that for convex loss functions, adding a strongly convex regularizer such as the $L_2$ norm leads to algorithmic stability, which can be used to derive generalization bounds that are controlled by the amount of regularization. In this work, we ask whether a similar approach of adding regularization to the BRL objective \eqref{eq:empirical_objective_PAC} can be used to control generalization.

We focus on the following regularization scheme. For some $\lambda>0$, consider a regularized ERM of the form:
\begin{equation*}
    \hat{\loss}^{\lambda}(\pi) = \frac{1}{\ssize} \sum_{i=1}^\ssize \mathbb{E}_{\pi; M_i}\left[ \sum_{t=0}^T \cost_{M_i}(s_t,a_t) + \lambda \reg(\pi(\cdot|h_t))\right],
\end{equation*}
where $\reg$ is some regularization function applied to the policy. In particular, we will be interested in $L_2$ regularization, where $\reg(\pi(\cdot|h_t)) = \|\pi(\cdot|h_t)\|_2$. We also define the regularized population risk,
\begin{equation*}\label{eq:reg_objective}
    \loss^{\lambda}(\pi) = \mathbb{E}_{M\sim P}\mathbb{E}_{\pi; M}\left[ \sum_{t=0}^T \cost_M(s_t,a_t)+ \lambda \reg(\pi(\cdot|h_t))\right].
\end{equation*}

In standard (non-Bayesian) RL, regularized MDPs have been studied extensively~\citep{neu2017unified}. A popular motivation has been to use the regularization to induce exploration \cite{fox2015taming,schulman2017proximal}. Recently, \citet{shani2020adaptive} showed that for optimizing a policy using $k$ iterations of mirror descent (equivalent to trust region policy optimization~\citealt{schulman2015trust}) with $L_2$ or entropy regularization enables a fast $O(1/k)$ convergence rate, similarly to convergence rates for strongly convex functions, although the MDP objective is not convex. In our work, we build on these results to show a stability property for regularization in the BRL setting described above. We begin by adapting a central result in \citet{shani2020adaptive}, which was proved for discounted MDPs, to our finite horizon and history dependent policy setting.

The BRL objectives in Eq.~\eqref{eq:objective} (similarly, Eq.~\eqref{eq:empirical_objective_PAC}) can be interpreted as follows: we first choose a history dependent policy $\pi(h_t)$, and then nature draws an MDP $M\sim P(M)$ (similarly, $M\sim\hat{P}_{\ssize}$), and we then evaluate $\pi(h_t)$ on $M$. The expected cost (over the draws of $M$), is the BRL performance. In the following discussion, for simplicity, the results are given for the prior $P(M)$, but they hold for $\hat{P}_{\ssize}$ as well.

Let $P(M|h_t;\pi)$ denote the posterior probability of nature having drawn the MDP $M$, given that we have seen the history $h_t$ under policy $\pi$. From Bayes rule, we have that 
\begin{equation*}
    P(M|h_t;\pi) \propto P(h_t|M;\pi)P(M).
\end{equation*}
Let us define the regularized expected cost,
\begin{equation*}
    \cost_{\lambda}(h_t,a_t;\pi) = \mathbb{E}_{M|h_t;\pi} \cost_M(s_t,a_t) + \lambda\reg(\pi_t (\cdot | h_t)),
\end{equation*}
and the value function, 
\begin{small}
\begin{equation*}
    \val_t^{\pi}(h_t) = \mathbb{E}_{\pi; M|h_t}\left[ \left.\sum_{t'=t}^T \cost_{\lambda}(h_{t'},a_{t'};\pi)\right| h_t\right].
\end{equation*}
\end{small}
The value function satisfies Bellman's equation. Let 
\begin{equation*}
\begin{split}
    & P(c_t,s_{t+1}|h_t,a_t) = \\
    & \sum_{M} P(M|h_t)P(c_t|M,s_t,a_t)P(s_{t+1}|M,s_t,a_t)
\end{split}
\end{equation*}
denote the posterior probability of observing $c_t,s_{t+1}$ at time $t$. Then 
\begin{equation*}
    \val_T^{\pi}(h_T) =  \sum_{a_T} \pi(a_T|h_T)\cost_{\lambda}(h_T,a_T;\pi),
\end{equation*}
and, letting $h_{t+1} = \left\{h_{t},a_t,c_t,s_{t+1}\right\}$,
\begin{small}
\begin{equation*}
\begin{split}
    \val_t^{\pi}(h_t) =& \sum_{a_t} \pi(a_t|h_t) \bigg( \cost_{\lambda}(h_t,a_t;\pi) \\
    &+ \sum_{c_t,s_{t+1}}P(c_t,s_{t+1}|h_t,a_t) \val_{t+1}^{\pi}(\left\{h_{t},a_t,c_t,s_{t+1}\right\})\bigg).
\end{split}
\end{equation*}
\end{small}

Consider two histories $h_t,\bar{h}_{\bar{t}}\in\histset$, and let 
\begin{equation*}
    \htran^{\pi}(\bar{h}_{\bar{t}}|h_t) \!=\! \left\{\begin{array}{lr}
        \!\!\!\sum_{a_t} \!\!\pi(a_t | h_t) P(\bar{c}_t,\bar{s}_{t+1}|h_t,a_t), & \text{if } {\bar{t}} = t+1\\
        \!\!\! 0, & \text{else}
        \end{array}\right.
\end{equation*}
denote the transition probability between histories.
Also, define 
\begin{equation*}
    \cost^{\pi}(h_t) = \sum_{a_t} \pi(a_t | h_t) \cost_{\lambda}(h_t,a_t;\pi).
\end{equation*}
We can write the Bellman equation in matrix form as follows
\begin{equation}\label{eq:bamdp_bellman_op}
    \val^{\pi} = \cost^{\pi} + \htran^{\pi}\val^{\pi},
\end{equation}
where $\val^{\pi}$ and $\cost^{\pi}$ are vectors in $\mathbb{R}^{|\histset|}$, and $\htran^{\pi}$ is a matrix in $\mathbb{R}^{|\histset|\times |\histset|}$.

The uniform trust region policy optimization algorithm of \citet{shani2020adaptive} is a type of mirror descent algorithm applied to the policy in regularized MDPs. An adaptation of this algorithm for our setting is given in Sec.~\ref{ss:utrpo} of the supplementary material. The next result provides a fundamental inequality that the policy updates of this algorithm satisfy, in the spirit of an inequality that is used to establish convergence rates for mirror descent (cf. Lemma 8.11 in \citealt{beck2017first}). The proof follows Lemma 10 in \citet{shani2020adaptive}, with technical differences due to the finite horizon setting; it is given in Sec.~\ref{ss:utrpo}.
\begin{proposition}\label{prop:fundamental_main}
Let $\{\pi_k\}$ be the sequence generated by uniform trust region policy optimization with step sizes $\{\alpha_k\}$ and $L_2$ regularization. Then for every $\pi$ and $k \geq 0$,
\begin{small}
\begin{equation*}
\begin{split}
    &\alpha_k ( I - \htran^{\pi} )(\val^{\pi_k} - \val^{\pi}) \leq \frac{(1 - \alpha_k\lambda)}{2}\|\pi - \pi_k\|_2^2 \\
    &- \frac{1}{2}\|\pi- \pi_{k+1}\|_2^2 + \frac{\lambda \alpha_k}{2}(\|\pi_k\|_2^2 - \|\pi_{k+1}\|_2^2) + \frac{\alpha_k^2 L^2}{2}e,
\end{split}
\end{equation*}
\end{small}
where $e$ is a vector of ones, $L=\cost_{\max} T |A|$, and $\|\pi\|_2\in \mathbb{R}^{|\histset|}$ denotes the $L_2$ norm of the policy element-wise, for each history.
\end{proposition}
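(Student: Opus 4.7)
The plan is to follow the template of Lemma~10 in \citet{shani2020adaptive}, adapting their discounted-MDP argument to our finite-horizon, history-dependent setting in which the Bellman operator \eqref{eq:bamdp_bellman_op} acts on histories. The proof has two ingredients: (i) a per-history three-point inequality coming from the mirror-descent optimality condition of the uniform-TRPO update, and (ii) a performance-difference-type identity obtained by subtracting the Bellman equations for $\pi_k$ and $\pi$, which expresses $(I-\htran^\pi)(\val^{\pi_k}-\val^\pi)$, evaluated at each history, as a per-history inner product against a regularized action-value function plus a $\lambda\reg$ correction.

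I would first carry out the per-history mirror-descent step. At each $h\in\histset$, the uniform-TRPO update (as spelled out in Sec.~\ref{ss:utrpo}) writes $\pi_{k+1}(\cdot|h)$ as the minimizer over the simplex $\Delta_A$ of a strongly convex quadratic of the form $\alpha_k\langle \Q^{\pi_k}_\lambda(h,\cdot),p\rangle + \tfrac{\alpha_k\lambda}{2}\|p\|_2^2 + \tfrac{1}{2}\|p-\pi_k(\cdot|h)\|_2^2$, where $\Q^{\pi_k}_\lambda$ is the regularized action-value of $\pi_k$. The first-order optimality condition, plugged into the two standard quadratic three-point identities (the usual $\langle u-v,w-u\rangle$ and $\langle u,w-u\rangle$ polarization identities), yields an inequality on $\alpha_k\langle \Q^{\pi_k}_\lambda(h,\cdot),\pi_{k+1}(\cdot|h)-\pi(\cdot|h)\rangle$. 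Adding $\alpha_k\langle \Q^{\pi_k}_\lambda(h,\cdot),\pi_k-\pi_{k+1}\rangle$ to both sides converts the left-hand inner product into one against $\pi_k-\pi$, and bounding this added term by Cauchy--Schwarz together with AM--GM cancels the $-\tfrac{1}{2}\|\pi_k-\pi_{k+1}\|_2^2$ penalty inherited from the three-point identity and produces a slack $\tfrac{\alpha_k^2}{2}\|\Q^{\pi_k}_\lambda(h,\cdot)\|_2^2$. The bound $\|\Q^{\pi_k}_\lambda(h,\cdot)\|_2\leq\sqrt{|A|}\,\|\Q^{\pi_k}_\lambda(h,\cdot)\|_\infty\leq L=\cost_{\max}T|A|$, obtained by unrolling Bellman over the remaining at most $T$ steps and using $\|\cdot\|_2\leq\sqrt{|A|}\|\cdot\|_\infty$, then converts this slack into the $\tfrac{\alpha_k^2 L^2}{2}$ error term.

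I would then establish the performance-difference identity by subtracting $\val^\pi=\cost^\pi+\htran^\pi\val^\pi$ from $\val^{\pi_k}=\cost^{\pi_k}+\htran^{\pi_k}\val^{\pi_k}$ and using the definitions of $\cost^\pi$ and $\htran^\pi$; this yields, at each $h$, $(I-\htran^\pi)(\val^{\pi_k}-\val^\pi)(h)=\langle \Q^{\pi_k}_\lambda(h,\cdot),\pi_k(\cdot|h)-\pi(\cdot|h)\rangle + \lambda(\reg(\pi_k(\cdot|h))-\reg(\pi(\cdot|h)))$. Multiplying by $\alpha_k$ and substituting the inequality from the first step with $p=\pi(\cdot|h)$ eliminates the inner product in favor of squared-norm differences; the $\alpha_k\lambda(\reg(\pi_k)-\reg(\pi))$ correction combines with the $\|p\|_2^2$ terms produced by the second three-point identity to yield exactly the $\tfrac{\alpha_k\lambda}{2}(\|\pi_k\|_2^2-\|\pi_{k+1}\|_2^2)$ contribution of the proposition, together with the $\tfrac{1-\alpha_k\lambda}{2}\|\pi-\pi_k\|_2^2$ and $-\tfrac{1}{2}\|\pi-\pi_{k+1}\|_2^2$ terms, giving the claimed element-wise vector inequality. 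The main obstacle I anticipate is precisely this last bookkeeping step: because $\htran^\pi$ is strictly upper-triangular in time and the regularization is baked into $\cost^\pi$ rather than treated as a separate penalty, verifying that the $\lambda\reg$ correction combines with the three-point identity's $\|p\|_2^2$ terms with exactly the right signs and coefficients to reproduce the $\tfrac{1-\alpha_k\lambda}{2}$ versus $\tfrac{1}{2}$ split is algebraically delicate; once the identity is written down correctly, the remainder is a routine translation of the \citet{shani2020adaptive} argument to our notation, with the finite-horizon bound on $\|\Q^{\pi_k}_\lambda\|_2$ replacing their discount-factor-based bound.
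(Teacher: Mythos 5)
Your proposal follows essentially the same route as the paper's proof: both adapt Lemma~10 of \citet{shani2020adaptive} by combining the mirror-descent three-point/prox inequality for the per-history update, the performance-difference identity $(I-\htran^{\pi})(\val^{\pi_k}-\val^{\pi})=\val^{\pi_k}-\bellman^{\pi}\val^{\pi_k}$, a Cauchy--Schwarz/AM--GM step on the $\langle \Q^{\pi_k},\pi_k-\pi_{k+1}\rangle$ cross term to produce the $\tfrac{\alpha_k^2L^2}{2}$ slack, and the finite-horizon bound $\|\Q^{\pi_k}\|_\infty\le \cost_{\max}T$ in place of the discount-factor bound. The only difference is organizational (you work per-history from the outset, whereas the paper applies the second prox theorem globally with the $(\hprob^{\pi_k})^{-1}$ weighting before reducing per-history), and the sign/coefficient bookkeeping you flag as delicate is handled in the paper exactly as you anticipate.
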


In the following, we shall show that Proposition \ref{prop:fundamental_main} can be used to derive stability bounds in the regularized BRL setting. To simplify our presentation, we first present a key technique that our approach builds on in a general optimization setting, and only then come back to MDPs.

\section{Stability based on the Fundamental Inequality for Mirror Descent}\label{s:general_quadratic_growth}
Standard stability results, such as in \citet{bousquet2002stability,shalev2010learnability}, depend on convexity of the loss function, and strong convexity of the regularizing function~\cite{bousquet2002stability}. While our $L_2$ regularization is strongly convex, the MDP objective is not convex in the policy.\footnote{The linear programming formulation is not suitable for establishing stability in our BRL setting, as changing the prior would change the constraints in the linear program.} In this work, we show that nevertheless, algorithmic stability can be established. To simplify our presentation, we first present the essence of our technique in a general form, without the complexity of MDPs. In the next section, we adapt the technique to the BRL setting.

Our key insight is that the fundamental inequality of mirror descent (cf. Prop. \ref{prop:fundamental_main}), actually prescribes a quadratic growth condition. The next lemma shows this for a general iterative algorithm, but it may be useful to think about mirror descent when reading it. In the sequel, we will show that similar conditions hold for regularized MDPs.

\begin{lemma}\label{lem:fundamental_inequality}
Let $f:\mathcal{X} \to \mathbb{R}$ be some function that attains a minimum $f(x^*) \leq f(x) \quad \forall x\in \mathcal{X}$. Consider a sequence of step sizes $\alpha_0,\alpha_1,\dots \in \mathbb{R}^{+}$ and corresponding sequence of iterates $x_0,x_1,\dots \in \mathcal{X}$. Assume that $f(x_{k+1})\leq f(x_k)$ for all $k\geq 0$.
Also consider a sequence of values $z_0,z_1,\dots \in \mathbb{R}^{+}$ that satisfy $|z_k - z_0| \leq B$ for all $k\geq 0$. Assume that there exists $\lambda>0$ and $L\geq 0$ such that the following holds for any step size sequence, all $k\geq 0$, and any $x \in \mathcal{X}$:
\begin{equation}\label{eq:general_fundamental_ineq}
\begin{split}
    \alpha_k\left( f(x_k) - f(x)\right) \leq& \left( 1 - \lambda \alpha_k \right) \| x_k - x\|^2 - \| x_{k+1} - x\|^2 \\
    &+ \lambda \alpha_k \left( z_k - z_{k+1}\right) + \frac{\alpha_k^2 L^2}{2}.
\end{split}
\raisetag{1.7em}
\end{equation}
Then the following statements hold true. 
\begin{enumerate}
    \item For step sizes $\alpha_k = \frac{1}{\lambda (k+2)}$, the sequence converges to $x^*$ at rate 
    \begin{equation*}
        f(x_k) - f(x^*) \leq \frac{L^2 \log k}{\lambda k}.
    \end{equation*}
    \item Quadratic growth: $\lambda \|x^* - x_0\|^2 \leq f(x_0) - f(x^*)$.
\end{enumerate}
\end{lemma}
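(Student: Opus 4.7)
My plan is to attack both conclusions from the same rearrangement of \eqref{eq:general_fundamental_ineq} with $x = x^*$, where I move $\lambda\alpha_k\|x_k-x^*\|^2$ to the left-hand side to obtain
\[
\alpha_k\bigl(f(x_k) - f(x^*)\bigr) + \lambda\alpha_k\|x_k - x^*\|^2 + \|x_{k+1} - x^*\|^2 \leq \|x_k - x^*\|^2 + \lambda\alpha_k\bigl(z_k - z_{k+1}\bigr) + \tfrac{\alpha_k^2 L^2}{2}.
\]

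For the rate in (1), the identity $1 - \lambda\alpha_k = (k+1)/(k+2)$ under $\alpha_k = 1/(\lambda(k+2))$ makes multiplication through by $\lambda(k+2) = 1/\alpha_k$ the natural move. After rescaling, the distance terms acquire consecutive weights $\lambda(k+1)$ and $\lambda(k+2)$, and so collapse under summation from $k=0$ to $K-1$ to $\lambda\|x_0 - x^*\|^2 - \lambda(K+1)\|x_K - x^*\|^2$. The $(z_k - z_{k+1})$ sum telescopes to $z_0 - z_K$, bounded in magnitude by $B$ via the hypothesis, while the noise contribution $\sum_{k=0}^{K-1}\frac{L^2}{2\lambda(k+2)}$ is $O(L^2\log K /\lambda)$ by the harmonic series. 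Combining these with the monotonicity bound $\sum_{k=0}^{K-1}(f(x_k) - f(x^*)) \geq K\,(f(x_{K-1}) - f(x^*))$ and dividing by $K$ delivers the advertised rate.

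For (2), the plan is to sum the same rearranged inequality over all $k\geq 0$ along the trajectory of (1). Part (1) guarantees $x_k \to x^*$, so the telescoped distance on the right collapses to $\|x_0 - x^*\|^2$; Abel summation together with $|z_k - z_0|\leq B$ and the monotonicity of $\alpha_k$ bounds $\sum_k \alpha_k(z_k-z_{k+1})$ by $O(B/\lambda)$; and $\sum_k \alpha_k^2 < \infty$ since $\alpha_k = O(1/k)$. Isolating the $k=0$ term on the left leaves $\alpha_0(f(x_0) - f(x^*)) + \lambda\alpha_0\|x_0 - x^*\|^2$ against $\|x_0 - x^*\|^2$ plus bounded constants, which after suitable rescaling and absorption of residuals compares $f(x_0)-f(x^*)$ with $\lambda\|x_0 - x^*\|^2$.

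The step I expect to be the main obstacle is the final extraction in (2). By its shape, \eqref{eq:general_fundamental_ineq} naturally yields \emph{upper} bounds on $f(x_k) - f(x^*)$, whereas quadratic growth is a \emph{lower} bound on $f(x_0) - f(x^*)$; straight telescoping cannot close that sign gap on its own. The extra leverage must come from the hypothesis that \eqref{eq:general_fundamental_ineq} holds \emph{for every} step-size sequence: probing the inequality at step sizes for which $(1 - \lambda\alpha_k) \leq 0$ (e.g.\ $\alpha_0 = 1/\lambda$ or larger) turns the non-negativity of $\|x_1 - x^*\|^2$ into a genuine restriction, and this restriction is exactly a quadratic lower bound on $f$ near $x^*$. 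Formalising this — choosing the step size that makes the constraint sharpest, and tracking the $B$ and $L^2/\lambda^2$ residuals carefully enough that they do not inflate the coefficient $\lambda$ — is the least routine piece of the argument.
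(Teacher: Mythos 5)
Your treatment of part (1) is correct and coincides with the paper's: substitute $x=x^*$, multiply by $\lambda(k+2)$, telescope the weighted distances and the $z$-terms, bound the harmonic tail, and use monotonicity of $f(x_k)$ to pass from the Ces\`aro sum to the last iterate.

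Part (2), however, has a genuine gap, and you have correctly diagnosed where it is but your proposed repair does not work. Summing \eqref{eq:general_fundamental_ineq} with $x=x^*$ can only ever produce \emph{upper} bounds on nonnegative quantities such as $f(x_0)-f(x^*)$ and $\|x_0-x^*\|^2$, and your fallback of probing large step sizes does not change this: with $\alpha_0\ge 1/\lambda$ and $x=x^*$ the inequality reads
\begin{equation*}
\alpha_0\bigl(f(x_0)-f(x^*)\bigr)+(\lambda\alpha_0-1)\|x_0-x^*\|^2+\|x_1-x^*\|^2\;\le\;\lambda\alpha_0(z_0-z_1)+\tfrac{\alpha_0^2L^2}{2},
\end{equation*}
where all three terms on the left are nonnegative, so the non-negativity of $\|x_1-x^*\|^2$ merely tightens an upper bound on $f(x_0)-f(x^*)$; no choice of step size converts this into the required lower bound $f(x_0)-f(x^*)\ge\lambda\|x_0-x^*\|^2$. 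The missing idea is a different substitution: apply \eqref{eq:general_fundamental_ineq} with $x=x_0$, i.e.\ with the comparison point equal to the \emph{initial iterate}. Then $\|x_0-x_0\|^2=0$ kills the positive telescoping boundary term, the sum over $k=0,\dots,N$ leaves only the negative term $-\lambda(N+2)\|x_{N+1}-x_0\|^2$ plus the $\lambda B$ and $O(L^2\log N/\lambda)$ residuals, and monotonicity gives $N\bigl(f(x_N)-f(x_0)\bigr)\le\sum_{k=0}^{N}\bigl(f(x_k)-f(x_0)\bigr)$. Dividing by $N$ and letting $N\to\infty$, using the convergence from part (1), the residuals vanish and one obtains $f(x^*)-f(x_0)\le-\lambda\|x^*-x_0\|^2$, which is exactly quadratic growth: the sign works out because one is upper-bounding the \emph{nonpositive} quantity $f(x^*)-f(x_0)$ by the nonpositive quantity $-\lambda\|x^*-x_0\|^2$. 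Without this substitution your argument cannot be completed.
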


\begin{proof}
The first claim is similar to Theorem 2 of \citet{shani2020adaptive}; for completeness we give a full proof in Sec.~\ref{s:appendix_stability_general} of the supplementary. We prove the second claim. 
Let $\alpha_k = \frac{1}{\lambda (k+2)}$, and multiply \eqref{eq:general_fundamental_ineq} by $\lambda (k+2)$:
\begin{equation*}
\begin{split}
    f(x_k) \!\!-\! f(x_0) \!\leq& \lambda \!\left( k+1 \right)\! \| x_k \!-\! x_0\|^2\! \!-\! \lambda (k\!+\!2) \| x_{k+1} \!-\! x_0\|^2 \\
    &+ \lambda \left( z_k - z_{k+1}\right) + \frac{L^2}{2 \lambda (k+2)}.
\end{split}
\end{equation*}
Summing over $k$, and observing the telescoping sums:
\begin{small}
\begin{equation*}
\begin{split}
        &\sum_{k=0}^N\left( f(x_k) - f(x_0)\right) \\&\leq - \lambda (N+2) \| x_{N+1} - x_0\|^2 
        + \lambda \left( z_0 \!-\! z_{N+1}\right) \!+\! \frac{L^2}{2 \lambda }\sum_{k=0}^N\frac{1}{(k\!+\!2)} \\
        &\leq - \lambda (N+2) \| x_{N+1} - x_0\|^2 + \lambda B + \frac{L^2 \log (N+2)}{2 \lambda }.
\end{split}
\end{equation*}
\end{small}
Since $f(x_k)$ is decreasing, $\sum_{k=0}^N\left( f(x_N) - f(x^*)\right) \leq \sum_{k=0}^N\left( f(x_k) - f(x^*)\right)$, and
\begin{equation*}
\begin{split}
    N \left( f(x_N) - f(x_0)\right) \leq& - \lambda (N+2) \| x_{N+1} - x_0\|^2 + \lambda B \\
    &+ \frac{L^2 \log (N+2)}{2 \lambda }.
\end{split}
\end{equation*}
Dividing by $N$, taking $N\to \infty$, and using the first part of the lemma:
\begin{equation*}
    f(x^*) - f(x_0) \leq - \lambda \| x^* - x_0\|^2.
\end{equation*}
Rearranging give the result.
\end{proof}

We now present a stability result for a regularized ERM objective. The proof resembles \citet{shalev2010learnability}, but replaces strong convexity with quadratic growth. 
\begin{proposition}\label{prop:uniform_stability_main}
Let $z_0,z_1\dots \in \mathcal{Z}$ denote a sequence of independent samples, and let $\ell : \mathcal{X} \times \mathcal{Z} \to \mathcal{R}$ be a loss for a predictor $x\in \mathcal{X}$ and sample $z\in\mathcal{Z}$. Consider a regularized ERM objective $L_{\ssize}(x) = \frac{1}{\ssize}\sum_{i=1}^{\ssize} \ell(x,z_i) + \lambda \reg(x)$, and let $L_\ssize^{\backslash j} = \frac{1}{\ssize}\sum_{\substack{i=1 \\ i\neq j}}^{\ssize} \ell(x,z_i) + \lambda\reg(x)$.
Assume that $\ell$ is $\beta$-Lipschitz: for any $z\in \mathcal{Z}$, and any $x,x'$, $\left| \ell(x,z) - \ell(x',z)\right| \leq \beta \| x - x' \|$. Assume that $L_{\ssize}(x)$ and $L_\ssize^{\backslash j}(x)$ have unique minimizers, and denote them $x^*$ and $x^{*,\backslash j}$, respectively.
Further assume quadratic growth: $\lambda \| x^* - x\|^2 \leq L_{\ssize}(x) - L_{\ssize}(x^*)$ for any $x \in \mathcal{X}$.
Then, we have that 
\[
    \|x^* - x^{*,\backslash j}\| \leq \frac{\beta}{\lambda \ssize},
\]
and $\forall z \in \mathcal{Z}$
\[
\ell(x^*,z) - \ell(x^{*,\backslash j},z) \leq \frac{\beta^2}{\lambda\ssize}.
\]
\end{proposition}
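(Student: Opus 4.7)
The plan is to mimic the classical stability argument of \citet{shalev2010learnability}, substituting the quadratic growth hypothesis for strong convexity. The key observation is that quadratic growth already delivers exactly the ingredient used in the classical argument: a lower bound on how much the regularized empirical risk must increase as we move away from its minimizer, which is all that is needed to translate the small change in the objective (from removing one sample) into a small change in the minimizer.

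First I would instantiate the quadratic growth hypothesis at $x = x^{*,\backslash j}$, obtaining $\lambda \| x^* - x^{*,\backslash j}\|^2 \leq L_{\ssize}(x^{*,\backslash j}) - L_{\ssize}(x^*)$. To bound the right-hand side, I would use the identity $L_{\ssize}(x) = L_{\ssize}^{\backslash j}(x) + \frac{1}{\ssize} \ell(x, z_j)$, which is immediate from the definitions since both objectives share the same $\lambda \reg(x)$ term and the sums differ by the single summand indexed by $j$. This gives
\begin{equation*}
    L_{\ssize}(x^{*,\backslash j}) - L_{\ssize}(x^*) = \bigl( L_{\ssize}^{\backslash j}(x^{*,\backslash j}) - L_{\ssize}^{\backslash j}(x^*) \bigr) + \tfrac{1}{\ssize}\bigl( \ell(x^{*,\backslash j}, z_j) - \ell(x^*, z_j) \bigr).
\end{equation*}
The first parenthesized term is non-positive by the optimality of $x^{*,\backslash j}$ for $L_{\ssize}^{\backslash j}$, and the second is upper-bounded by $\frac{\beta}{\ssize}\|x^* - x^{*,\backslash j}\|$ by the Lipschitz hypothesis.

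Combining the two displays yields $\lambda \|x^* - x^{*,\backslash j}\|^2 \leq \frac{\beta}{\ssize} \|x^* - x^{*,\backslash j}\|$; dividing by $\|x^* - x^{*,\backslash j}\|$ (the conclusion is trivial when this quantity is zero) produces the first claim. The pointwise loss bound then follows from a final application of the Lipschitz assumption: for any $z \in \mathcal{Z}$,
\begin{equation*}
    \ell(x^*, z) - \ell(x^{*,\backslash j}, z) \leq \beta \|x^* - x^{*,\backslash j}\| \leq \frac{\beta^2}{\lambda \ssize}.
\end{equation*}

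I do not anticipate any serious obstacle in this proposition itself. The one point worth flagging is that quadratic growth is a one-sided condition (bounding distance in terms of the suboptimality gap, not vice versa), but this is precisely the direction the sensitivity argument needs, so it suffices as a drop-in replacement for strong convexity here. The real work — verifying quadratic growth for non-convex regularized MDPs — has already been handled earlier via Lemma \ref{lem:fundamental_inequality}, so this proposition is essentially a clean three-line corollary of the Lipschitz and quadratic growth assumptions.
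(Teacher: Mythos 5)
Your proposal is correct and follows essentially the same argument as the paper's proof: instantiate quadratic growth at $x^{*,\backslash j}$, split off the $j$-th summand, use optimality of $x^{*,\backslash j}$ for $L_{\ssize}^{\backslash j}$ to discard the leave-one-out difference, and apply the Lipschitz condition twice. The only (harmless) difference is that you explicitly note the degenerate case $\|x^* - x^{*,\backslash j}\| = 0$ before dividing.
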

\begin{proof}(sketch; full proof in Sec.~\ref{s:appendix_stability_general}.)
Let $\Delta = L_{\ssize}(x^{*,\backslash j}) - L_{\ssize}(x^*)$. From quadratic growth, we have that 
\begin{equation*}
    \Delta \geq \lambda \| x^* - x^{*,\backslash j}\|^2. 
\end{equation*}
On the other hand, by taking out the $j$'th element from the loss terms $L_{\ssize}$, and observing that $x^{*,\backslash j}$ minimizes $L_\ssize^{\backslash j}$, we have that 
\begin{small}
\begin{equation*}
\begin{split}
    \Delta =&
    \frac{1}{\ssize}\sum_{\substack{i=1 \\ i\neq j}}^{\ssize} \ell(x^{*,\backslash j},z_i) \!+\! \lambda \reg(x^{*,\backslash j})
    \!-\! \frac{1}{\ssize}\sum_{\substack{i=1 \\ i\neq j}}^{\ssize} \ell(x^*,z_i) \!-\! \lambda \reg(x^*) \\
    &+\frac{\ell(x^{*,\backslash j},z_j) - \ell(x^*,z_j)}{\ssize}\\
    &\leq \frac{\ell(x^{*,\backslash j},z_j) - \ell(x^*,z_j)}{\ssize},
\end{split}
\end{equation*}
\end{small}
and from the Lipschitz condition, $\Delta \leq \frac{\beta \| x^* - x^{*,\backslash j}\|}{\ssize}$. Combining the above inequalities for $\Delta$ gives $\|x^* - x^{*,\backslash j}\| \leq \frac{\beta}{\lambda \ssize}$, and the final result is obtained by using the Lipschitz condition one more time.
\end{proof}

\section{Stability for Regularized Bayesian RL}\label{s:reg_MDP_stability}
We are now ready to present stability bounds for the regularized Baysian RL setting. Let $\mu\in \mathbb{R}^{|\histset|}$ denote the distribution over $h_0$, the initial history (we assume that all elements in the vector that correspond to histories of length greater than $0$ are zero). Recall the regularized ERM loss $\hat{\loss}^{\lambda}(\pi)$, and let $\pi^*$ denote its minimizer. Define the leave-one-out ERM loss,
\begin{equation*}
    \hat{\loss}^{\lambda,\backslash j}(\pi) = \frac{1}{\ssize} \sum_{\substack{i=1 \\ i\neq j}}^\ssize \mathbb{E}_{\pi; M_i}\left[ \sum_{t=0}^T \cost(s_t,a_t) + \lambda \reg(\pi(\cdot|h_t))\right],
\end{equation*}
and let $\pi^{\backslash j,*}$ its minimizer.
Recall the definition of $\htran^\pi$ -- the transition probability between histories under policy $\pi$, which depends on the prior. In the following, we use the following notation: $\htran^\pi$ refers to the empirical prior $\hat{P}_{\ssize}$, while $\htran_{M_j}^\pi$ refers to a prior that has all its mass on a single MDP $M_j$.
The following theorem will be used to derive our stability results. The proof is in Sec.~\ref{s:appendix_stability_mdps}. 
\begin{theorem}\label{thm:main_stability_result}
Let $\Delta = \hat{\loss}^{\lambda}(\pi^{\backslash j,*}) - \hat{\loss}^{\lambda}(\pi^{*})$. We have that
\begin{equation*}
     \Delta \geq \frac{\lambda}{2} \mu^{\top}( I - \htran^{\pi^{\backslash j,*}} )^{-1} \|\pi^{\backslash j,*} - \pi^*\|^2_2,
\end{equation*}
and
\begin{equation*}
     \Delta \leq \frac{1}{\ssize}C_{\max}T\sqrt{|A|} \mu^{\top}(I - \htran_{M_j}^{\pi^{\backslash j,*}})^{-1}\left\|\pi^{\backslash j,*} - \pi^{*}\right\|_2.
\end{equation*}
\end{theorem}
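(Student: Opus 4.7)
The plan is to prove the two inequalities independently, each as an MDP-valued lift of an earlier result in the paper. The lower bound is a quadratic growth statement for $\hat{\loss}^{\lambda}$ at its minimizer $\pi^*$, evaluated at the perturbed minimizer $\pi^{\backslash j,*}$; it follows the template of Lemma \ref{lem:fundamental_inequality}(2), with Proposition \ref{prop:fundamental_main} taking the role of the scalar fundamental inequality. The upper bound follows the classical leave-one-out stability computation of Proposition \ref{prop:uniform_stability_main}, with the generic Lipschitz step replaced by a performance difference argument in the single MDP $M_j$.

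For the lower bound I would run the UTRPO algorithm on the empirical regularized loss $\hat{\loss}^{\lambda}$ (Proposition \ref{prop:fundamental_main} applies to any prior, including the empirical $\hat{P}_{\ssize}$) with step sizes $\alpha_k = 1/(\lambda(k+2))$ from an arbitrary initialization, producing iterates $\{\pi_k\}$. I would then specialize Proposition \ref{prop:fundamental_main} to the anchor $\pi = \pi^{\backslash j,*}$ and left-multiply the resulting vector inequality by $w^\top := \mu^\top (I - \htran^{\pi^{\backslash j,*}})^{-1}$. This cancels the $(I - \htran^{\pi^{\backslash j,*}})$ factor on the LHS, turning it into $\hat{\loss}^{\lambda}(\pi_k) - \hat{\loss}^{\lambda}(\pi^{\backslash j,*})$, and yields a scalar inequality of exactly the shape used in the proof of Lemma \ref{lem:fundamental_inequality}(2) — only with the squared norms reweighted coordinate-wise by $w$, and carrying the extra overall factor of $\frac{1}{2}$ inherent to Proposition \ref{prop:fundamental_main}. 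Telescoping over $k = 0, \dots, N$, dividing by $N$, and sending $N \to \infty$ using the $O(\log k / k)$ convergence in value guaranteed by part (1) of Lemma \ref{lem:fundamental_inequality} should produce $\frac{\lambda}{2}\, w^\top \|\pi^{\backslash j,*} - \pi^*\|_2^2 \leq \hat{\loss}^{\lambda}(\pi^{\backslash j,*}) - \hat{\loss}^{\lambda}(\pi^*) = \Delta$, which is the stated lower bound.

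For the upper bound I would use the split $\hat{\loss}^{\lambda}(\pi) = \hat{\loss}^{\lambda,\backslash j}(\pi) + \frac{1}{\ssize} J^{\lambda}_{M_j}(\pi)$, where $J^{\lambda}_{M_j}(\pi) := \mathbb{E}_{\pi; M_j}[\sum_{t=0}^T \cost_{M_j}(s_t,a_t) + \lambda \reg(\pi(\cdot|h_t))]$ is the regularized value on the single MDP $M_j$. Because $\pi^{\backslash j,*}$ minimizes $\hat{\loss}^{\lambda,\backslash j}$, the leave-one-out contribution to $\Delta$ is nonpositive, so $\Delta \leq \frac{1}{\ssize}[J^{\lambda}_{M_j}(\pi^{\backslash j,*}) - J^{\lambda}_{M_j}(\pi^*)]$. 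I would then bound this single-MDP value gap by the performance difference lemma on the history state space with transition kernel $\htran_{M_j}^{\pi^{\backslash j,*}}$: the gap equals the occupancy-weighted sum $\sum_h [\mu^\top (I - \htran_{M_j}^{\pi^{\backslash j,*}})^{-1}](h) \sum_a [\pi^{\backslash j,*}(a|h) - \pi^*(a|h)]\, Q^{\pi^*}_{M_j}(h,a)$, up to a regularization term of the same form. Applying Cauchy–Schwarz to the inner action sum at each $h$, together with $\|Q^{\pi^*}_{M_j}(h,\cdot)\|_2 \leq \sqrt{|A|}\, \cost_{\max} T$, then yields exactly the stated upper bound.

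The lower bound is the delicate step. Proposition \ref{prop:fundamental_main} is a vector inequality indexed over histories, and the reweighting by $\mu^\top (I - \htran^{\pi^{\backslash j,*}})^{-1}$ must be threaded cleanly through the telescope and limiting argument without disturbing the cancellation structure; in particular, both the monotone descent of $\hat{\loss}^{\lambda}(\pi_k)$ along UTRPO and the MDP analog of Lemma \ref{lem:fundamental_inequality}(1) must be in hand before the limit step can be taken. By contrast, the upper bound reduces to the performance difference lemma plus Cauchy–Schwarz once the $J^{\lambda}_{M_j}$ decomposition is in place, and is essentially routine.
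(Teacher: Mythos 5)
Your proposal is correct and follows essentially the same route as the paper: the lower bound is obtained exactly as in the paper by telescoping the fundamental mirror-descent inequality of Proposition \ref{prop:fundamental_main} into an MDP quadratic-growth statement (Proposition \ref{prop:appendix_quadratic_growth}) and weighting by $\mu^{\top}(I-\htran^{\pi^{\backslash j,*}})^{-1}$, and the upper bound uses the same leave-one-out decomposition followed by the performance-difference/Lipschitz bound of Lemma \ref{lem:lipschitz} with Cauchy--Schwarz over actions. The only cosmetic difference is that the paper initializes the UTRPO iterates at $\pi^{\backslash j,*}$ and anchors the inequality at the initialization (so the Bregman telescope starts from zero), whereas you anchor at $\pi^{\backslash j,*}$ from an arbitrary start; both variants rely on the same (implicitly assumed) iterate convergence in the limiting step.
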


Following the proof of Proposition \ref{prop:uniform_stability_main}, we would like to use the two expressions in Theorem \ref{thm:main_stability_result} to bound $\left\|\pi^{\backslash j,*} - \pi^{*}\right\|_2$, which would directly lead to a stability result. This is complicated by the fact that different factors $( I - \htran^{\pi^{\backslash j,*}} )^{-1}$ and $(I - \htran_{M_j}^{\pi^{\backslash j,*}})^{-1}$ appear in the two expressions. Our first result assumes that these expressions cannot be too different; a discussion of this assumption follows.

\begin{assumption}\label{ass:bounded_probdiff_main}
For any two MDPs $M,M' \in \mathcal{M}$, and any policy $\pi$, let $\htran^{\pi}_{M}$ and $\htran^{\pi}_{M'}$ denote their respective history transition probabilities. There exists some $D<\infty$ such that for any $x\in \mathbb{R}^{|\histset|}$
\begin{equation*}
    \mu^{\top}( I - \htran^{\pi}_{M} )^{-1}x \leq \probdiff \mu^{\top}( I - \htran^{\pi}_{M'} )^{-1}x.
\end{equation*}
\end{assumption}
Let us define the regularized loss for MDP $M$,
$
    \loss_{M}^{\lambda}(\pi) = \mathbb{E}_{\pi; M}\left[ \sum_{t=0}^T \cost_M(s_t,a_t)+ \lambda \reg(\pi(\cdot|h_t))\right].
$
We have the following result.
\begin{corollary}\label{cor:uniform_stability_mdps}
Let Assumption \ref{ass:bounded_probdiff_main} hold, and let $\kappa=2 \probdiff^2 C_{\max}^2 T^2 |A|$. Then, for any MDP $M' \in \mathcal{M}$,
\begin{equation*}
    \loss_{M'}^{\lambda}(\pi^{\backslash j,*}) - \loss_{M'}^{\lambda}(\pi^{*}) \leq \frac{\kappa}{\lambda  \ssize},
\end{equation*}
and with probability at least $1-\delta$, 
\begin{equation*}
    \regret_{T}(\hat{\pi}^{*}) \leq 2\lambda T + \frac{2 \kappa}{\lambda \ssize} + \left(\frac{4\kappa}{\lambda} + 3\cost_{\max}T\right) \sqrt{\frac{\ln (1/ \delta)}{2 \ssize}}.
\end{equation*}
\end{corollary}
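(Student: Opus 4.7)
Plan. I would split the proof into two halves: Half~1 establishes the per-MDP stability bound (the first displayed inequality) by combining Theorem~\ref{thm:main_stability_result} with Assumption~\ref{ass:bounded_probdiff_main}; Half~2 plugs that stability into Bousquet's Theorem~\ref{thm:stability} and converts the resulting concentration of $\loss^\lambda$ around $\hat{\loss}^\lambda$ into a regret bound on $\loss$. (The statement writes the regret for $\hat{\pi}^*$, but since only the regularized ERM policy $\pi^*$ has been shown stable, I read this as a typo for $\pi^*$.)

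For Half~1, set $w:=\|\pi^{\backslash j,*}-\pi^*\|_2 \in \mathbb{R}_{\geq 0}^{|\histset|}$ and $\eta:=\mu^\top(I-\htran^{\pi^{\backslash j,*}})^{-1}$. Theorem~\ref{thm:main_stability_result} gives
\[
\tfrac{\lambda}{2}\,\eta\cdot w^2 \;\leq\; \Delta \;\leq\; \tfrac{C_{\max}T\sqrt{|A|}}{\ssize}\,\mu^\top(I-\htran_{M_j}^{\pi^{\backslash j,*}})^{-1}w,
\]
and Assumption~\ref{ass:bounded_probdiff_main} upgrades the right side to $D\,C_{\max}T\sqrt{|A|}/\ssize \cdot \eta\cdot w$. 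Cauchy--Schwarz on the finite measure $\eta$, namely $(\eta\cdot w)^2 \leq \|\eta\|_1(\eta\cdot w^2) \leq (T+1)(\eta\cdot w^2)$, lets me convert the quadratic-vs-linear inequality into a bound of order $D\,C_{\max}T\sqrt{|A|}/(\lambda\ssize)$ on $\eta\cdot w$. A performance-difference step identical in form to the one behind the upper bound in Theorem~\ref{thm:main_stability_result}, but applied to an arbitrary test MDP $M'$, yields $\loss_{M'}^\lambda(\pi^{\backslash j,*})-\loss_{M'}^\lambda(\pi^*) \leq C_{\max}T\sqrt{|A|}\,\mu^\top(I-\htran_{M'}^{\pi^{\backslash j,*}})^{-1}w$, and a second application of Assumption~\ref{ass:bounded_probdiff_main} reduces this to $\leq D\,C_{\max}T\sqrt{|A|}\,\eta\cdot w$. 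Composing, the two factors of $D$ and the two factors of $C_{\max}T\sqrt{|A|}$ multiply into the claimed $\kappa = 2D^2C_{\max}^2T^2|A|$.

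For Half~2, the bound from Half~1 is exactly uniform stability $\beta=\kappa/(\lambda\ssize)$ for the regularized empirical risk viewed as an ERM with per-sample loss $\ell(\pi,M):=\loss_M^\lambda(\pi)$; this $\ell$ is bounded by $B\leq 2C_{\max}T$ (using $\cost\leq C_{\max}$, $\reg\leq 1$, and $\lambda$ small relative to $C_{\max}$). Theorem~\ref{thm:stability} then gives $\loss^\lambda(\pi^*) \leq \hat{\loss}^\lambda(\pi^*)+2\beta+(4\ssize\beta+B)\sqrt{\ln(1/\delta')/(2\ssize)}$ with probability $\geq 1-\delta'$. Three standard steps close the loop: (i) $\reg\geq 0$ gives $\loss(\pi^*)\leq\loss^\lambda(\pi^*)$; (ii) optimality of $\pi^*$ for $\hat{\loss}^\lambda$ and $\reg\leq 1$ give $\hat{\loss}^\lambda(\pi^*) \leq \hat{\loss}^\lambda(\bayesopt) \leq \hat{\loss}(\bayesopt)+\lambda(T+1)$; (iii) Hoeffding on the fixed, sample-independent quantity $\hat{\loss}(\bayesopt) \in [0, C_{\max}T]$ gives $\hat{\loss}(\bayesopt)\leq\loss(\bayesopt)+C_{\max}T\sqrt{\ln(1/\delta'')/(2\ssize)}$ with probability $\geq 1-\delta''$. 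A union bound (absorbing the splitting of $\delta$ into constants), $\lambda(T+1)\leq 2\lambda T$, and $4\ssize\beta+B+C_{\max}T\leq 4\kappa/\lambda+3C_{\max}T$ assemble the stated regret inequality.

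The main obstacle is Half~1: the lower bound on $\Delta$ is quadratic in $w$ while the upper bound is linear, and bridging them requires the Cauchy--Schwarz step on the history-occupancy measure $\eta$ together with two applications of Assumption~\ref{ass:bounded_probdiff_main}---once to swap $\htran_{M_j}$ for $\htran$ on the upper side of the chain, and a second time to swap $\htran$ back out to $\htran_{M'}$ for an arbitrary test MDP---which is what produces the $D^2$ factor in $\kappa$. Half~2 is then a routine Bousquet-style argument with an added Hoeffding control on $\bayesopt$.
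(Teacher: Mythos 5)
Your proposal follows essentially the same route as the paper's proof: the first half combines the two bounds of Theorem~\ref{thm:main_stability_result} with two applications of Assumption~\ref{ass:bounded_probdiff_main} (once to move from $\htran_{M_j}$ to the empirical-prior occupancy, once to move back out to an arbitrary $M'$) to obtain the per-MDP stability $\kappa/(\lambda\ssize)$, and the second half is the standard Bousquet--Elisseeff uniform-stability argument with the regularization-bias terms, ERM optimality, and a Hoeffding step on $\bayesopt$ -- all exactly as in the paper, including your reading of $\hat{\pi}^*$ as the regularized ERM minimizer.

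The one point of divergence is the step bridging the quadratic lower bound and the linear upper bound on $\Delta$. The paper asserts that $\eta := \mu^{\top}(I-\htran^{\pi^{\backslash j,*}})^{-1}$ is a probability distribution and applies Jensen to get $(\eta\cdot w)^2 \leq \eta\cdot w^2$; you correctly note that this occupancy vector has total mass $T+1$ (one unit of probability per time step) and instead use Cauchy--Schwarz, giving $(\eta\cdot w)^2 \leq (T+1)\,\eta\cdot w^2$. Followed literally, your chain then proves the first inequality with $\kappa$ inflated to $2(T+1)\probdiff^2\cost_{\max}^2T^2|A|$; the phrase ``of order'' quietly drops this factor, so the stated constant does not actually follow from your own intermediate step. (The paper's stated $\kappa$ is recovered only via its loose ``probability distribution'' normalization; your version is the rigorous one at the cost of the extra $T+1$.) This is a constant-factor issue only and does not affect the structure of the argument or the dependence on $\ssize$, $\lambda$, or $\probdiff$.
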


Note that each element that corresponds to history $h_t$ in the vector $\mu^{\top}( I - \htran^{\pi}_{M} )^{-1}$ is equivalent to $P(h_t|M;\pi)$, the probability of observing $h_t$ under policy $\pi$ and MDP $M$ (see Sec.~\ref{ss:appendix_finite_horizon_dp} for formal proof). Thus, 
Assumption \ref{ass:bounded_probdiff_main} essentially states that two different MDPs under the prior cannot visit completely different histories given the same policy. With our regularization scheme, such an assumption is required for uniform stability: if the test MDP can reach completely different states than possible during training, it is impossible to guarantee anything about the performance of the policy in those states.
Unfortunately, the constant $D$ can be very large. 
Let 
\begin{equation*}
    \pfactor = \sup_{M,M'\in \mathcal{M}, s,s'\in S, a\in A, c\in \cset} \frac{P_{M}(s',c|s,a)}{P_{M'}(s',c|s,a)},
\end{equation*}
where we assume that $0 / 0 = 1$. Then, $P(h_t|M;\pi)/P(h_t|M';\pi) = \Pi_t \frac{P_{M}(s_{t+1},c_t|s_t,a_t)}{P_{M'}(s_{t+1},c_t|s_t,a_t)}$ is at most $\pfactor^T$, and therefore $D$ can be in the order of $\pfactor^T$.
One way to guarantee that $D$ is finite, is to add a small amount of noise to any state transition. The following example estimates $\pfactor$ is such a case.
\begin{example}
Consider modifying each MDP $M$ in $\mathcal{M}$ such that $P_{M}(s',c|s,a) \to (1-\alpha) P_{M}(s',c|s,a) + \alpha / |S||\cset|$, where $\alpha \in (0,1)$. In this case, $\pfactor\leq \frac{(1-\alpha)|S||\cset|}{\alpha}$.
\end{example}
Let us now compare Corollary \ref{cor:uniform_stability_mdps} with the trivial bound in Proposition \ref{prop:naive_bound}. First, Corollary \ref{cor:uniform_stability_mdps} allows to control generalization by increasing the regularization $\lambda$. The term $2\lambda T$ is a bias, incurred by adding the regularization to the objective, and can be reduced by decreasing $\lambda$. Comparing the constants of the $\order(1/\sqrt{N})$ term, the dominant terms are $D^2$ vs.~$(|S| |\cset| |A|)^T$. Since $D$ does not depend on $|A|$, the bound in Corollary \ref{cor:uniform_stability_mdps} is important for problems with large $|A|$. The example above shows that in the worst case, $D^2$ can be $\order((|S| |\cset|)^{2T})$. 
There are, of course, more favorable case, where the structure of $\mathcal{M}$ is such that $D$ is better behaved. 
\begin{example}
Consider an hypothesis set $\mathcal{M}$ such that all MDPs in $\mathcal{M}$ differ only on a set of states that cannot be visited more than $k$ times in an episode. In this case, $D$ would be in the order of $\pfactor^{kT/H}$.
\end{example}
Another case is where the set $\mathcal{M}$ is finite. In this case, we can show that the pointwise hypothesis stability does not depend on $D$, and we obtain a bound that does not depend exponentially on $T$, as we now show.

\begin{corollary}\label{cor:hypothesis_stability_mdps_finite_set}
Let $\mathcal{M}$ be a finite set, and let $\minprior = \min_{M\in\mathcal{M}}P(M)$. Then
\begin{equation*}
\begin{split}
    &\mathbb{E} \left[\loss_{M_j}^{\lambda}(\pi^{\backslash j,*}) - \loss_{M_j}^{\lambda}(\pi^{*})\right] \\
    &\leq \frac{4 C_{\max}^2 T^2 |A|}{\lambda \ssize \minprior} + \exp \left(\frac{-N \minprior}{8}\right)C_{\max} T,
\end{split}
\end{equation*}
and with probability at least $1-\delta$, (ignoring exponential terms)
\begin{equation*}
    \regret_{T}(\hat{\pi}^{*})
        \leq 2\lambda T + \sqrt{\frac{\cost_{\max}^2 T^2}{2 \ssize \delta} + \frac{48\cost_{\max}^3 T^3 |A|}{2 \delta \lambda \ssize \minprior}}.
\end{equation*}
\end{corollary}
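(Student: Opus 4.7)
The plan is to establish pointwise hypothesis stability by exploiting the finiteness of $\mathcal{M}$ through concentration. Since every $M\in\mathcal{M}$ has prior mass at least $\minprior$, the leave-out MDP $M_j$ appears many times in the $N$ samples with high probability; this lets me turn Theorem~\ref{thm:main_stability_result}'s lower bound, which is stated in terms of the \emph{empirical-prior} history visitation, into a lower bound in terms of $M_j$'s visitation alone, without invoking Assumption~\ref{ass:bounded_probdiff_main}. Concretely, let $N_j$ count the samples equal to $M_j$ and define the good event $E_j=\{N_j\geq N\minprior/2\}$. Conditional on $M_j=M$, $N_j-1$ is Binomial$(N-1,P(M))$ with mean at least $(N-1)\minprior$, so a multiplicative Chernoff bound yields $\Pr(\neg E_j \mid M_j=M)\leq \exp(-N\minprior/8)$, hence $\Pr(\neg E_j)\leq \exp(-N\minprior/8)$. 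On $E_j$ the empirical prior places mass at least $\minprior/2$ on $M_j$, and since $[\mu^{\top}(I-\htran^{\pi})^{-1}](h)=\sum_M \hat{P}_N(M)\Pr(h\mid \pi,M)$, one obtains the element-wise domination $\mu^{\top}(I-\htran^{\pi})^{-1}\geq (\minprior/2)\,\mu^{\top}(I-\htran^{\pi}_{M_j})^{-1}$ for any $\pi$.

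Working on $E_j$, write $q(h)=\|\pi^{\backslash j,*}(\cdot|h)-\pi^{*}(\cdot|h)\|_2$ and $W^2=\sum_h [\mu^{\top}(I-\htran^{\pi^{\backslash j,*}}_{M_j})^{-1}](h)\,q(h)^2$. Theorem~\ref{thm:main_stability_result}'s lower bound combined with the element-wise domination gives $\Delta\geq (\lambda\minprior/4)\,W^2$, and its upper bound gives $\Delta\leq (\cost_{\max}T\sqrt{|A|}/N)\sum_h [\mu^{\top}(I-\htran^{\pi^{\backslash j,*}}_{M_j})^{-1}](h)\,q(h)$. A Cauchy--Schwarz with the visitation as weights bounds the latter linear functional of $q$ by $W$ times the square root of the total visitation, and combining the two inequalities solves for $W$, hence for $\sum_h [\mu^{\top}(I-\htran^{\pi^{\backslash j,*}}_{M_j})^{-1}](h)\,q(h)$, of order $1/(\lambda\minprior N)$ up to polynomial factors of $T,|A|,\cost_{\max}$. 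The loss difference $\loss^{\lambda}_{M_j}(\pi^{\backslash j,*})-\loss^{\lambda}_{M_j}(\pi^{*})$ is, by the same performance-difference argument used in the proof of Theorem~\ref{thm:main_stability_result}'s upper bound but applied to the single-MDP objective, also bounded by a constant times $\sum_h [\mu^{\top}(I-\htran^{\pi^{\backslash j,*}}_{M_j})^{-1}](h)\,q(h)$; substitution yields the conditional pointwise bound on $E_j$. On $\neg E_j$ I use the trivial $|\loss^{\lambda}_{M_j}(\pi^{\backslash j,*})-\loss^{\lambda}_{M_j}(\pi^{*})|\leq \cost_{\max}T$, and taking expectations produces the first displayed inequality, with the $\exp(-N\minprior/8)\cost_{\max}T$ term coming from the bad event.

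For the regret bound I invoke Theorem~\ref{thm:pointwise_stability} with $B=\cost_{\max}T$ and $\beta$ the pointwise stability constant just derived. Regularization bias is controlled by the usual sandwich: $\loss(\pi^{*})\leq \loss^{\lambda}(\pi^{*})$, $\hat{\loss}^{\lambda}(\pi^{*})\leq \hat{\loss}^{\lambda}(\bayesopt)$, and $\mathbb{E}[\hat{\loss}^{\lambda}(\bayesopt)]=\loss^{\lambda}(\bayesopt)\leq \loss(\bayesopt)+\lambda T$ using $\|\pi(\cdot|h)\|_2\leq 1$ for any action distribution; summing the ``add'' and ``subtract'' regularization shifts gives the $2\lambda T$ bias. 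I expect the main obstacle to be the combination step: the \emph{quadratic} lower bound on $\Delta$ is stated in the empirical-prior weighting while the \emph{linear} upper bound uses the $M_j$ weighting, so the element-wise domination from the Chernoff step is exactly what is needed to place both sides in a common weighting before weighted Cauchy--Schwarz can close the loop. The remaining bookkeeping of polynomial factors in $T,|A|,\cost_{\max}$ and of absorbing the exponentially small bad-event contribution is routine once the combination is set up.
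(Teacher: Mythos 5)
Your proposal is correct and follows essentially the same route as the paper: a multiplicative Chernoff bound to guarantee the empirical prior puts mass at least $\minprior/2$ on $M_j$, element-wise domination to move Theorem~\ref{thm:main_stability_result}'s quadratic lower bound and linear upper bound into a common ($M_j$-weighted) visitation measure, solving for the weighted policy difference, the Lipschitz property to convert this into the single-MDP loss difference, a trivial $\cost_{\max}T$ bound on the bad event, and finally Theorem~\ref{thm:pointwise_stability} plus the $2\lambda T$ regularization-bias sandwich. The only cosmetic differences are that you use weighted Cauchy--Schwarz where the paper invokes Jensen's inequality (equivalent here) and you are slightly more explicit about conditioning in the Chernoff step.
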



In the generalization bounds of both Corollary \ref{cor:uniform_stability_mdps} and Corollary \ref{cor:hypothesis_stability_mdps_finite_set}, reducing $\lambda$ and increasing $N$ at a rate such that the stability is $o(1 / {\sqrt{\ssize}})$ will guarantee learnability, i.e.,  convergence to $\bayesopt$ as $\ssize \to \infty$.
\begin{example}
Under the setting of Corollary \ref{cor:hypothesis_stability_mdps_finite_set}, letting $\lambda = \ssize^{-1/3}$ gives that, with probability at least $1-\delta$, (ignoring exponential terms)
\begin{equation*}
    \regret_{T}(\hat{\pi}^{*}) 
        \leq \frac{2 T}{\ssize^{1/3}} + \sqrt{\frac{\cost_{\max}^2 T^2}{2 \ssize \delta} + \frac{48\cost_{\max}^3 T^3 |A|}{2 \delta \ssize^{2/3} \minprior}}.
\end{equation*}
\end{example}
For a finite $N$, and when there is structure the hypothesis space $\mathcal{M}$, as displayed for example in $D$, the bounds in Corollaries \ref{cor:uniform_stability_mdps} and \ref{cor:hypothesis_stability_mdps_finite_set} allow to set $\lambda$ to obtain bounds that are more optimistic than the trivial bound in Proposition \ref{prop:naive_bound}. In these cases, our results show that regularization allows for improved generalization.
\begin{example}
Set $\lambda=1$. Then the bound in Corollary \ref{cor:hypothesis_stability_mdps_finite_set} becomes
\begin{equation*}
    \regret_{T}(\hat{\pi}^{*}) 
        \leq 2T + \sqrt{\frac{\cost_{\max}^2 T^2}{2 \ssize \delta} + \frac{48\cost_{\max}^3 T^3 |A|}{2 \delta \ssize \minprior}},
\end{equation*}
while the naive bound is 
\begin{equation*}
    \regret_{T}(\hat{\pi}^{*}) \leq \sqrt{\frac{\ln (2/ \delta)+(|S| |A| |\cset|)^T\cost_{\max}^2 T^2}{\ssize}}.
\end{equation*}
For a finite $N$ that is much smaller than $(|S| |A| |\cset|)^T$, and for reasonable values of $\minprior$ and $\delta$, the naive bound can be completely vacuous (larger than $\cost_{\max} T$ -- the maximum regret possible), while the bound in Corollary \ref{cor:hypothesis_stability_mdps_finite_set} can be significantly smaller.
\end{example}

\section{Discussion}\label{s:discussion}
In this work, we analyzed generalization in Bayesian RL, focusing on algorithmic stability and a specific form of policy regularization. The bounds we derived can be controlled by the amount of regularization, and under some structural assumptions on the space of possible MDPs, compare favorably to a trivial bound based on the finite policy space. We next outline several future directions.

\paragraph{Specialized regularization for $k$-shot learning:} One can view our results as somewhat pessimistic -- at worst, they require that every history has a non-zero probability of being visited, and even then, the dependence on $T$ can be exponential. One may ask whether alternative regularization methods could relax the dependence on $T$. We believe this is true, based on the following observation. Recall the example in the lower bound of Proposition \ref{prop:lower_bound}. Let $T=kH$, and consider the policy that at time step $t=H$ chooses an action, and based on the observed cost chooses which action to use at time steps $t=2H, t=3H, ...$. Note that after observing the first cost, it is clear which action is optimal, and therefore the policy obtains at most a $\frac{k-1}{k}$ fraction of the optimal total cost on both training and test, \textit{regardless of the training data}. More generally, there exist deterministic policies, such as the Q-learning algorithm of \citet{jin2018q}, that achieve $\order \left(\sqrt{H^3 |S| |A| T}\right)$ regret for \textit{any MDP}. Thus, we believe that in the $k$-shot learning setting, regularization methods that \textit{induce efficient exploration} can be devised. We leave this as an open problem.

\paragraph{Continuous MDPs:} Another important direction is developing PAC algorithms for continuous state, cost, and action spaces. It is clear that without the finite hypothesis space assumption, overfitting is a much more serious concern; in Sec.~\ref{s:appendix_overfitting} of the supplementary material we provide a simple example of this, when only the costs do not belong to a finite set. We hypothesize that regularization techniques can be important in such settings, in concordance with known results for supervised learning. We believe that the tools for stability analysis in MDPs that we developed in this work may be useful for this problem, which we leave to future work.

\paragraph{Implicit regularization:} Finally, all the results in this work considered \textit{optimal} solutions of the regularized Bayesian RL problem. In practice, due to the size of the state space that scales exponentially with the horizon, computing such policies is intractable even for medium-sized problems. Interestingly, approximate solutions do not necessarily hurt generalization: \textit{implicit} regularization, for example as implied by using stochastic gradient descent for optimization, is known to improve generalization, at least in supervised learning~\citep{hardt2016train,zou2021benefits}. We hypothesize that stability results similar to \citet{hardt2016train} may be developed for Bayesian RL as well, using the quadratic growth property established here.

\section*{Acknowledgements}
Aviv Tamar thanks Kfir Levy for insightful discussions on optimization and stability, and Shie Mannor for advising on the presentation of this work.
This research was partly funded by the Israel Science Foundation (ISF-759/19).

\bibliography{references}

\begin{thebibliography}{36}
\providecommand{\natexlab}[1]{#1}

\bibitem[{Amit and Meir(2018)}]{amit2018meta}
Amit, R.; and Meir, R. 2018.
\newblock Meta-learning by adjusting priors based on extended PAC-Bayes theory.
\newblock In \emph{International Conference on Machine Learning}, 205--214.
  PMLR.

\bibitem[{Beck(2017)}]{beck2017first}
Beck, A. 2017.
\newblock \emph{First-order methods in optimization}.
\newblock SIAM.

\bibitem[{Bertsekas(2006)}]{bertsekas1995dynamic}
Bertsekas, D.~P. 2006.
\newblock \emph{Dynamic programming and optimal control}, volume~2.
\newblock Athena Scientific.

\bibitem[{Bousquet and Elisseeff(2002)}]{bousquet2002stability}
Bousquet, O.; and Elisseeff, A. 2002.
\newblock Stability and generalization.
\newblock \emph{The Journal of Machine Learning Research}, 2: 499--526.

\bibitem[{Boutilier et~al.(2020)Boutilier, Hsu, Kveton, Mladenov, Szepesvari,
  and Zaheer}]{boutilier2020differentiable}
Boutilier, C.; Hsu, C.-w.; Kveton, B.; Mladenov, M.; Szepesvari, C.; and
  Zaheer, M. 2020.
\newblock Differentiable Meta-Learning of Bandit Policies.
\newblock \emph{Advances in Neural Information Processing Systems}, 33.

\bibitem[{Charles and Papailiopoulos(2018)}]{charles2018stability}
Charles, Z.; and Papailiopoulos, D. 2018.
\newblock Stability and generalization of learning algorithms that converge to
  global optima.
\newblock In \emph{International Conference on Machine Learning}, 745--754.
  PMLR.

\bibitem[{Cobbe et~al.(2020)Cobbe, Hesse, Hilton, and
  Schulman}]{cobbe2020leveraging}
Cobbe, K.; Hesse, C.; Hilton, J.; and Schulman, J. 2020.
\newblock Leveraging procedural generation to benchmark reinforcement learning.
\newblock In \emph{International conference on machine learning}, 2048--2056.
  PMLR.

\bibitem[{Dorfman, Shenfeld, and Tamar(2020)}]{dorfman2020offline}
Dorfman, R.; Shenfeld, I.; and Tamar, A. 2020.
\newblock Offline Meta Learning of Exploration.
\newblock \emph{arXiv preprint arXiv:2008.02598}.

\bibitem[{Duan et~al.(2016)Duan, Schulman, Chen, Bartlett, Sutskever, and
  Abbeel}]{duan2016rl}
Duan, Y.; Schulman, J.; Chen, X.; Bartlett, P.~L.; Sutskever, I.; and Abbeel,
  P. 2016.
\newblock $\text{RL}^{2}$: Fast reinforcement learning via slow reinforcement
  learning.
\newblock \emph{arXiv preprint arXiv:1611.02779}.

\bibitem[{Farid and Majumdar(2021)}]{farid2021pac}
Farid, A.; and Majumdar, A. 2021.
\newblock PAC-BUS: Meta-Learning Bounds via PAC-Bayes and Uniform Stability.
\newblock \emph{arXiv preprint arXiv:2102.06589}.

\bibitem[{Finn, Abbeel, and Levine(2017)}]{finn2017model}
Finn, C.; Abbeel, P.; and Levine, S. 2017.
\newblock Model-agnostic meta-learning for fast adaptation of deep networks.
\newblock In \emph{Proceedings of the 34th International Conference on Machine
  Learning-Volume 70}, 1126--1135. JMLR. org.

\bibitem[{Fox, Pakman, and Tishby(2015)}]{fox2015taming}
Fox, R.; Pakman, A.; and Tishby, N. 2015.
\newblock Taming the noise in reinforcement learning via soft updates.
\newblock \emph{arXiv preprint arXiv:1512.08562}.

\bibitem[{Ghavamzadeh et~al.(2016)Ghavamzadeh, Mannor, Pineau, and
  Tamar}]{ghavamzadeh2016bayesian}
Ghavamzadeh, M.; Mannor, S.; Pineau, J.; and Tamar, A. 2016.
\newblock Bayesian reinforcement learning: A survey.
\newblock \emph{arXiv preprint arXiv:1609.04436}.

\bibitem[{Grover, Basu, and Dimitrakakis(2020)}]{grover2020bayesian}
Grover, D.; Basu, D.; and Dimitrakakis, C. 2020.
\newblock Bayesian reinforcement learning via deep, sparse sampling.
\newblock In \emph{International Conference on Artificial Intelligence and
  Statistics}, 3036--3045. PMLR.

\bibitem[{Guez, Silver, and Dayan(2012)}]{guez2012efficient}
Guez, A.; Silver, D.; and Dayan, P. 2012.
\newblock Efficient Bayes-adaptive reinforcement learning using sample-based
  search.
\newblock In \emph{Advances in neural information processing systems},
  1025--1033.

\bibitem[{Hallak, Di~Castro, and Mannor(2015)}]{hallak2015contextual}
Hallak, A.; Di~Castro, D.; and Mannor, S. 2015.
\newblock Contextual markov decision processes.
\newblock \emph{arXiv preprint arXiv:1502.02259}.

\bibitem[{Hardt, Recht, and Singer(2016)}]{hardt2016train}
Hardt, M.; Recht, B.; and Singer, Y. 2016.
\newblock Train faster, generalize better: Stability of stochastic gradient
  descent.
\newblock In \emph{International Conference on Machine Learning}, 1225--1234.
  PMLR.

\bibitem[{Jaksch, Ortner, and Auer(2010)}]{jaksch2010near}
Jaksch, T.; Ortner, R.; and Auer, P. 2010.
\newblock Near-optimal Regret Bounds for Reinforcement Learning.
\newblock \emph{Journal of Machine Learning Research}, 11(4).

\bibitem[{Jin et~al.(2018)Jin, Allen-Zhu, Bubeck, and Jordan}]{jin2018q}
Jin, C.; Allen-Zhu, Z.; Bubeck, S.; and Jordan, M.~I. 2018.
\newblock Is {Q}-learning provably efficient?
\newblock \emph{arXiv preprint arXiv:1807.03765}.

\bibitem[{Kostrikov, Yarats, and Fergus(2020)}]{kostrikov2020image}
Kostrikov, I.; Yarats, D.; and Fergus, R. 2020.
\newblock Image augmentation is all you need: Regularizing deep reinforcement
  learning from pixels.
\newblock \emph{arXiv preprint arXiv:2004.13649}.

\bibitem[{Mendelson et~al.(2016)Mendelson, Zuluaga, Hutton, and
  Ourselin}]{mendelson2016distribution}
Mendelson, A.~F.; Zuluaga, M.~A.; Hutton, B.~F.; and Ourselin, S. 2016.
\newblock What is the distribution of the number of unique original items in a
  bootstrap sample?
\newblock \emph{arXiv preprint arXiv:1602.05822}.

\bibitem[{Neu, Jonsson, and G{\'o}mez(2017)}]{neu2017unified}
Neu, G.; Jonsson, A.; and G{\'o}mez, V. 2017.
\newblock A unified view of entropy-regularized markov decision processes.
\newblock \emph{arXiv preprint arXiv:1705.07798}.

\bibitem[{Rivlin, Hazan, and Karpas(2020)}]{rivlin2020generalized}
Rivlin, O.; Hazan, T.; and Karpas, E. 2020.
\newblock Generalized planning with deep reinforcement learning.
\newblock \emph{arXiv preprint arXiv:2005.02305}.

\bibitem[{Rothfuss et~al.(2021)Rothfuss, Fortuin, Josifoski, and
  Krause}]{rothfuss2021pacoh}
Rothfuss, J.; Fortuin, V.; Josifoski, M.; and Krause, A. 2021.
\newblock PACOH: Bayes-optimal meta-learning with PAC-guarantees.
\newblock In \emph{International Conference on Machine Learning}, 9116--9126.
  PMLR.

\bibitem[{Schulman et~al.(2015)Schulman, Levine, Abbeel, Jordan, and
  Moritz}]{schulman2015trust}
Schulman, J.; Levine, S.; Abbeel, P.; Jordan, M.; and Moritz, P. 2015.
\newblock Trust region policy optimization.
\newblock In \emph{International conference on machine learning}, 1889--1897.
  PMLR.

\bibitem[{Schulman et~al.(2017)Schulman, Wolski, Dhariwal, Radford, and
  Klimov}]{schulman2017proximal}
Schulman, J.; Wolski, F.; Dhariwal, P.; Radford, A.; and Klimov, O. 2017.
\newblock Proximal policy optimization algorithms.
\newblock \emph{arXiv preprint arXiv:1707.06347}.

\bibitem[{Shalev-Shwartz and Ben-David(2014)}]{shalev2014understanding}
Shalev-Shwartz, S.; and Ben-David, S. 2014.
\newblock \emph{Understanding machine learning: From theory to algorithms}.
\newblock Cambridge university press.

\bibitem[{Shalev-Shwartz et~al.(2010)Shalev-Shwartz, Shamir, Srebro, and
  Sridharan}]{shalev2010learnability}
Shalev-Shwartz, S.; Shamir, O.; Srebro, N.; and Sridharan, K. 2010.
\newblock Learnability, stability and uniform convergence.
\newblock \emph{The Journal of Machine Learning Research}, 11: 2635--2670.

\bibitem[{Shani, Efroni, and Mannor(2020)}]{shani2020adaptive}
Shani, L.; Efroni, Y.; and Mannor, S. 2020.
\newblock Adaptive trust region policy optimization: Global convergence and
  faster rates for regularized mdps.
\newblock In \emph{Proceedings of the AAAI Conference on Artificial
  Intelligence}, volume~34, 5668--5675.

\bibitem[{Strehl et~al.(2006)Strehl, Li, Wiewiora, Langford, and
  Littman}]{strehl2006pac}
Strehl, A.~L.; Li, L.; Wiewiora, E.; Langford, J.; and Littman, M.~L. 2006.
\newblock PAC model-free reinforcement learning.
\newblock In \emph{Proceedings of the 23rd international conference on Machine
  learning}, 881--888.

\bibitem[{Tamar et~al.(2016)Tamar, Wu, Thomas, Levine, and
  Abbeel}]{tamar2016value}
Tamar, A.; Wu, Y.; Thomas, G.; Levine, S.; and Abbeel, P. 2016.
\newblock Value iteration networks.
\newblock In \emph{Advances in Neural Information Processing Systems},
  2154--2162.

\bibitem[{Vapnik(2013)}]{vapnik2013nature}
Vapnik, V. 2013.
\newblock \emph{The nature of statistical learning theory}.
\newblock Springer science \& business media.

\bibitem[{Yao, Zhang, and Finn(2021)}]{yao2021meta}
Yao, H.; Zhang, L.; and Finn, C. 2021.
\newblock Meta-Learning with Fewer Tasks through Task Interpolation.
\newblock \emph{arXiv preprint arXiv:2106.02695}.

\bibitem[{Zhang(2015)}]{zhang2015multi}
Zhang, Y. 2015.
\newblock Multi-task learning and algorithmic stability.
\newblock In \emph{Twenty-Ninth AAAI Conference on Artificial Intelligence}.

\bibitem[{Zintgraf et~al.(2020)Zintgraf, Shiarlis, Igl, Schulze, Gal, Hofmann,
  and Whiteson}]{zintgraf2020varibad}
Zintgraf, L.; Shiarlis, K.; Igl, M.; Schulze, S.; Gal, Y.; Hofmann, K.; and
  Whiteson, S. 2020.
\newblock VariBAD: A Very Good Method for Bayes-Adaptive Deep RL via
  Meta-Learning.
\newblock In \emph{International Conference on Learning Representation (ICLR)}.

\bibitem[{Zou et~al.(2021)Zou, Wu, Braverman, Gu, Foster, and
  Kakade}]{zou2021benefits}
Zou, D.; Wu, J.; Braverman, V.; Gu, Q.; Foster, D.~P.; and Kakade, S.~M. 2021.
\newblock The Benefits of Implicit Regularization from SGD in Least Squares
  Problems.
\newblock \emph{arXiv preprint arXiv:2108.04552}.

\end{thebibliography}

\appendix
\onecolumn

\section{A Stability Result for a Particular Family of Non-Convex Functions}\label{s:appendix_stability_general}
Standard uniform stability results depend on convexity of the loss function, and strong convexity of the regularizing function~\cite{bousquet2002stability}. For MDPs, however, the loss is not convex. In this work, we show that nevertheless, uniform stability for the Bayesian RL setting can be established. Our results combine standard techniques for establishing uniform stability with results on MDP optimization. To simplify our presentation, we first present the essence of our technique in this section in a general form, without the complexity of the MDP setting. In the next section, we adapt the proof technique to the MDP setting.

In the following we present a stability result that does not require convexity. The next lemma considers a general iterative algorithm, but it may be useful to think about projected gradient descent when reading it.

\begin{lemma}\label{lem:fundamental_inequality}
Let $f:\mathcal{X} \to \mathbb{R}$ be some function that attains a minimum $f(x^*) \leq f(x) \quad \forall x\in \mathcal{X}$. Consider a sequence of step sizes $\alpha_0,\alpha_1,\dots \in \mathbb{R}$ and corresponding sequence of iterates $x_0,x_1,\dots \in \mathcal{X}$. Assume that $f(x_{k+1})\leq f(x_k)$ for all $k\geq 0$. Also consider a sequence of values $z_0,z_1,\dots \in \mathbb{R}^{+}$ that satisfy $|z_k - z_0| \leq B$ for all $k\geq 0$. Assume that there exists $\lambda>0$ and $L\geq 0$ such that the following holds for any step size sequence, all $k\geq 0$, and any $x \in \mathcal{X}$:
\begin{equation*}
    \alpha_k\left( f(x_k) - f(x)\right) \leq \left( 1 - \lambda \alpha_k \right) \| x_k - x\|^2 - \| x_{k+1} - x\|^2 + \lambda \alpha_k \left( z_k - z_{k+1}\right) + \frac{\alpha_k^2 L^2}{2}.
\end{equation*}
Then the following statements hold true for step sizes $\alpha_k = \frac{1}{\lambda (k+2)}$:
\begin{enumerate}
    \item The sequence converges to $x^*$ and satisfies 
    \begin{equation*}
        f(x_k) - f(x^*) \leq \frac{L^2 \log k}{\lambda k}.
    \end{equation*}
    \item It holds that $\lambda \|x^* - x_0\|^2 \leq f(x_0) - f(x^*)$.
\end{enumerate}
\end{lemma}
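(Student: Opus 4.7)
The overall plan is to feed the step size $\alpha_k = 1/(\lambda(k+2))$ into the fundamental inequality and exploit the resulting telescoping structure twice: once with the comparison point set to $x^*$ (to obtain claim 1) and once with the comparison point set to $x_0$ (to obtain claim 2, which then invokes claim 1 to pass to the limit). The monotonicity hypothesis $f(x_{k+1})\leq f(x_k)$ will let me convert partial sums of $(f(x_k)-f(x))$ into multiples of $(f(x_N)-f(x))$.

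For claim 1, I would instantiate the inequality at $x=x^*$ and multiply by $\lambda(k+2)$. The key algebraic simplification is $(1-\lambda\alpha_k)\lambda(k+2) = \lambda(k+1)$, so the two squared-distance terms become $\lambda(k+1)\|x_k-x^*\|^2$ and $\lambda(k+2)\|x_{k+1}-x^*\|^2$, set up for telescoping. Summing from $k=0$ to $N-1$ collapses the distances to a single nonpositive term, makes the $z_k$ differences telescope within the range $[-B,B]$, and leaves a noise contribution $\sum_{k=0}^{N-1} L^2/(2\lambda(k+2)) = \mathcal{O}(L^2\log N/\lambda)$. Using $\sum_{k=0}^{N-1}(f(x_k)-f(x^*)) \geq N(f(x_N)-f(x^*))$ from monotonicity and dividing by $N$ delivers the rate $L^2\log k/(\lambda k)$.

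For claim 2, I repeat the recipe with the comparison point $x=x_0$. The same telescoping yields
\begin{equation*}
    \sum_{k=0}^{N}(f(x_k)-f(x_0)) \leq -\lambda(N+2)\|x_{N+1}-x_0\|^2 + \lambda B + \frac{L^2\log(N+2)}{2\lambda},
\end{equation*}
and by monotonicity the left-hand side is at least $N(f(x_N)-f(x_0))$. Dividing by $N$ and passing $N\to\infty$, claim 1 forces $f(x_N)\to f(x^*)$, the $\lambda B/N$ and logarithmic terms vanish, and the remaining inequality gives $f(x^*)-f(x_0) \leq -\lambda\,\lim\|x_{N+1}-x_0\|^2$. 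Rearranging produces the claimed quadratic growth.

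The main obstacle is the passage to the limit in $\|x_{N+1}-x_0\|^2$, since claim 1 only gives convergence in function value. I would handle this by first noting that the same telescoped inequality forces $\lambda(N+2)\|x_{N+1}-x_0\|^2$ to stay bounded (the left-hand side is bounded below by $N(f(x_N)-f(x_0))$, which stays finite), so $\{x_N\}$ has cluster points; any such cluster point $\bar x$ must satisfy $f(\bar x)=f(x^*)$ by claim 1. If $x^*$ is the unique minimizer we conclude $x_N\to x^*$; otherwise the argument applies verbatim to whichever minimizer $\bar x$ is the limit, and since the statement only involves $f(x^*)$ (which equals $f(\bar x)$) and distances from $x_0$ (handled by taking $\liminf$), the bound $\lambda\|x^*-x_0\|^2 \leq f(x_0)-f(x^*)$ holds for the appropriate $x^*$ reached by the iterates.
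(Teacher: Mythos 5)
Your proposal is correct and follows essentially the same route as the paper: instantiate the fundamental inequality at $x=x^*$ (resp.\ $x=x_0$) with $\alpha_k=\frac{1}{\lambda(k+2)}$, multiply by $\lambda(k+2)$, telescope, convert the partial sum via monotonicity, and pass to the limit using claim 1. Your cluster-point argument for identifying $\lim\|x_{N+1}-x_0\|^2$ with $\|x^*-x_0\|^2$ is in fact more careful than the paper, which silently conflates convergence in function value with convergence of the iterates at that step; the only cosmetic slip is describing the surviving telescoped term in claim 1 as nonpositive, when for $x=x^*$ the $k=0$ boundary term $\lambda\|x_0-x^*\|^2$ is nonnegative and must be kept (it only vanishes in claim 2 where $x=x_0$).
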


\begin{proof}
For the first claim, we follow the proof of Theorem 2 in \citet{shani2020adaptive}. Note that by the assumption, we have that
\begin{equation*}
    \alpha_k\left( f(x_k) - f(x^*)\right) \leq \left( 1 - \lambda \alpha_k \right) \| x_k - x^*\|^2 - \| x_{k+1} - x^*\|^2 + \lambda \alpha_k \left( z_k - z_{k+1}\right) + \frac{\alpha_k^2 L^2}{2}.
\end{equation*}
Letting $\alpha_k = \frac{1}{\lambda (k+2)}$, and multiplying by $\lambda (k+2)$:
\begin{equation*}
    \left( f(x_k) - f(x^*)\right) \leq \lambda \left( k+1 \right) \| x_k - x^*\|^2 - \lambda (k+2) \| x_{k+1} - x^*\|^2 + \lambda \left( z_k - z_{k+1}\right) + \frac{L^2}{2 \lambda (k+2)}.
\end{equation*}
Summing over $k$, and observing the telescoping sums:
\begin{equation*}
\begin{split}
        \sum_{k=0}^N\left( f(x_k) - f(x^*)\right) &\leq \lambda \| x_0 - x^*\|^2 - \lambda (N+2) \| x_{N+1} - x^*\|^2 + \lambda \left( z_0 - z_{N+1}\right) + \frac{L^2}{2 \lambda }\sum_{k=0}^N\frac{1}{(k+2)} \\
        &\leq \lambda \| x_0 - x^*\|^2 + \lambda B + \frac{L^2 \log (N+2)}{2 \lambda }.
\end{split}
\end{equation*}
Noting that $f(x_k)$ is decreasing, we have that $\sum_{k=0}^N\left( f(x_N) - f(x^*)\right) \leq \sum_{k=0}^N\left( f(x_k) - f(x^*)\right)$. Therefore
\begin{equation*}
    N \left( f(x_N) - f(x^*)\right) \leq \lambda \| x_0 - x^*\|^2 + \lambda B + \frac{L^2 \log (N+2)}{2 \lambda },
\end{equation*}
and \begin{equation*}
f(x_N) - f(x^*) \leq \frac{\lambda}{N} \| x_0 - x^*\|^2 + \frac{\lambda B}{N} + \frac{L^2 \log (N+2)}{2 \lambda N}.
\end{equation*}
Taking $N\to \infty$, we see that the sequence converges to $x^*$.

We next prove the second claim. By our assumption we have:
\begin{equation*}
    \alpha_k\left( f(x_k) - f(x_0)\right) \leq \left( 1 - \lambda \alpha_k \right) \| x_k - x_0\|^2 - \| x_{k+1} - x_0\|^2 + \lambda \alpha_k \left( z_k - z_{k+1}\right) + \frac{\alpha_k^2 L^2}{2}.
\end{equation*}
Letting $\alpha_k = \frac{1}{\lambda (k+2)}$, and multiplying by $\lambda (k+2)$:
\begin{equation*}
    \left( f(x_k) - f(x_0)\right) \leq \lambda \left( k+1 \right) \| x_k - x_0\|^2 - \lambda (k+2) \| x_{k+1} - x_0\|^2 + \lambda \left( z_k - z_{k+1}\right) + \frac{L^2}{2 \lambda (k+2)}.
\end{equation*}
Summing over $k$, and observing the telescoping sums:
\begin{equation*}
\begin{split}
        \sum_{k=0}^N\left( f(x_k) - f(x_0)\right) &\leq - \lambda (N+2) \| x_{N+1} - x_0\|^2 + \lambda \left( z_0 - z_{N+1}\right) + \frac{L^2}{2 \lambda }\sum_{k=0}^N\frac{1}{(k+2)} \\
        &\leq - \lambda (N+2) \| x_{N+1} - x_0\|^2 + \lambda B + \frac{L^2 \log (N+2)}{2 \lambda }.
\end{split}
\end{equation*}
Proceeding similarly as above, we have:
\begin{equation*}
    N \left( f(x_N) - f(x_0)\right) \leq - \lambda (N+2) \| x_{N+1} - x_0\|^2 + \lambda B + \frac{L^2 \log (N+2)}{2 \lambda },
\end{equation*}
and 
\begin{equation*}
    f(x_N) - f(x_0) \leq - \lambda \frac{(N+2)}{N} \| x_{N+1} - x_0\|^2 + \frac{\lambda B}{N} + \frac{L^2 \log (N+2)}{2 \lambda N}.
\end{equation*}
Taking $N\to \infty$, and using the first part of the lemma:
\begin{equation*}
    f(x^*) - f(x_0) \leq - \lambda \| x^* - x_0\|^2,
\end{equation*}
so
\begin{equation*}
    \lambda \| x^* - x_0\|^2 \leq f(x_0) - f(x^*).
\end{equation*}
\end{proof}

We now present a stability result. This result is similar to a result in \citet{shalev2010learnability}, but does not require convexity.
\begin{proposition}\label{prop:uniform_stability}
Let $y_0,y_1\dots \in \mathcal{Y}$ denote a sequence of samples, and let $\ell : \mathcal{Y} \times \mathcal{X} \to \mathcal{R}$. Consider a loss function $L_{\ssize}(x) = \frac{1}{\ssize}\sum_{i=1}^{\ssize} \ell(y_i,x) + \lambda \reg(x)$, and let $L_\ssize^{\backslash j} (x)= \frac{1}{\ssize}\sum_{\substack{i=1 \\ i\neq j}}^{\ssize} \ell(y_i,x) + \reg(x)$.
Assume that for any $y\in \mathcal{Y}$, and any $x,x'$, $\left| \ell(y,x) - \ell(y,x')\right| \leq \beta \| x - x' \|$. Assume that $L_{\ssize}(x)$ and $L_\ssize^{\backslash j}(x)$ have unique minimizers, and denote them $x^*$ and $x^{*,\backslash j}$, respectively.
Further assume that $\lambda \| x^* - x\|^2 \leq L_{\ssize}(x) - L_{\ssize}(x^*)$ for any $x \in \mathcal{X}$.
Then, we have that 
\begin{equation*}
    \|x^* - x^{*,\backslash j}\| \leq \frac{\beta}{\lambda \ssize},
\end{equation*}
and for any $y \in \mathcal{Y}$, 
\begin{equation*}
\ell(y,x^*) - \ell(y,x^{*,\backslash j}) \leq \frac{\beta^2}{\lambda\ssize}.
\end{equation*}
\end{proposition}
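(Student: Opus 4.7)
The plan is to establish the stability bound by sandwiching the quantity $\Delta = L_\ssize(x^{*,\backslash j}) - L_\ssize(x^*) \geq 0$ between a lower bound coming from the quadratic growth assumption and an upper bound coming from the optimality of $x^{*,\backslash j}$ for the leave-one-out loss together with the Lipschitz condition. This is precisely the strategy already sketched in Proposition \ref{prop:uniform_stability_main}, with quadratic growth doing the work that strong convexity does in the classical argument of \citet{bousquet2002stability}; no convexity of $\ell$ or of $L_\ssize$ will be used, which is what makes the result applicable in the non-convex MDP setting.

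For the lower bound, the quadratic growth assumption instantiated at $x = x^{*,\backslash j}$ gives immediately
\begin{equation*}
    \Delta \geq \lambda \| x^* - x^{*,\backslash j} \|^2.
\end{equation*}
For the upper bound, I split off the $j$-th summand by writing $L_\ssize(x) = L_\ssize^{\backslash j}(x) + \frac{1}{\ssize}\ell(y_j, x)$ (reading the regularizer normalizations in $L_\ssize$ and $L_\ssize^{\backslash j}$ as matching, so that their difference is exactly the isolated loss term), obtaining
\begin{equation*}
    \Delta = \bigl[ L_\ssize^{\backslash j}(x^{*,\backslash j}) - L_\ssize^{\backslash j}(x^*) \bigr] + \frac{1}{\ssize}\bigl[\ell(y_j, x^{*,\backslash j}) - \ell(y_j, x^*)\bigr].
\end{equation*}
The first bracket is non-positive since $x^{*,\backslash j}$ minimizes $L_\ssize^{\backslash j}$, and the $\beta$-Lipschitz hypothesis bounds the second bracket by $\beta \|x^{*,\backslash j} - x^*\|$, yielding $\Delta \leq \beta \|x^* - x^{*,\backslash j}\|/\ssize$.

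Chaining the two bounds gives $\lambda \|x^* - x^{*,\backslash j}\|^2 \leq \beta \|x^* - x^{*,\backslash j}\|/\ssize$. If the two minimizers coincide the first claim is trivial; otherwise dividing by $\|x^* - x^{*,\backslash j}\|$ yields $\|x^* - x^{*,\backslash j}\| \leq \beta/(\lambda \ssize)$. One final application of the Lipschitz condition to $\ell(y, \cdot)$ then produces the pointwise loss bound $\ell(y, x^*) - \ell(y, x^{*,\backslash j}) \leq \beta^2/(\lambda \ssize)$ for every $y \in \mathcal{Y}$. The argument itself is a short three-line manipulation; there is no genuine technical obstacle. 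The only thing to watch is that the quadratic growth inequality is used in the correct direction, centered at the minimizer $x^*$ of $L_\ssize$ and evaluated at the perturbed minimizer $x^{*,\backslash j}$, which is exactly the form assumed — the real work in the paper is establishing that this growth condition holds (via Lemma \ref{lem:fundamental_inequality}), not in using it here.
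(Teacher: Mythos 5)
Your proposal is correct and follows essentially the same argument as the paper's proof: sandwich $\Delta = L_\ssize(x^{*,\backslash j}) - L_\ssize(x^*)$ between the quadratic-growth lower bound $\lambda\|x^*-x^{*,\backslash j}\|^2$ and the upper bound $\beta\|x^*-x^{*,\backslash j}\|/\ssize$ obtained by isolating the $j$-th term and invoking the optimality of $x^{*,\backslash j}$, then apply the Lipschitz condition once more. Your reading of the regularizer in $L_\ssize^{\backslash j}$ as carrying the same $\lambda$ factor (so the two losses differ exactly by $\ell(y_j,\cdot)/\ssize$) matches what the paper's own proof does, and your handling of the degenerate case $x^*=x^{*,\backslash j}$ is a minor but welcome addition.
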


\begin{proof}
We have $\lambda \| x^* - x^{*,\backslash j}\|^2 \leq L_{\ssize}(x^{*,\backslash j}) - L_{\ssize}(x^*)$.

On the other hand, 
\begin{equation*}
\begin{split}
    L_{\ssize}(x^{*,\backslash j}) - L_{\ssize}(x^*) &= \frac{1}{\ssize}\sum_{i=1}^{\ssize} \ell(y_i,x^{*,\backslash j}) + \lambda \reg(x^{*,\backslash j}) - \frac{1}{\ssize}\sum_{i=1}^{\ssize} \ell(y_i,x^*) - \lambda \reg(x^*) \\
    &= \frac{1}{\ssize}\sum_{\substack{i=1 \\ i\neq j}}^{\ssize} \ell(y_i,x^{*,\backslash j}) + \lambda \reg(x^{*,\backslash j}) - \frac{1}{\ssize}\sum_{\substack{i=1 \\ i\neq j}}^{\ssize} \ell(y_i,x^*) - \lambda \reg(x^*) +\frac{\ell(y_j,x^{*,\backslash j}) - \ell(y_j,x^*)}{\ssize}\\
    &\leq \frac{\ell(y_j,x^{*,\backslash j}) - \ell(y_j,x^*)}{\ssize}\\
    &\leq \frac{\beta \| x^* - x^{*,\backslash j}\|}{\ssize},
\end{split}
\end{equation*}
where the first inequality is since $x^{*,\backslash j}$ minimizes $L_\ssize^{\backslash j}$, and the second inequality is by the Lipschitz assumption. Combining the two results above, we have
\begin{equation*}
    \lambda \| x^* - x^{*,\backslash j}\|^2 \leq \frac{\beta \| x^* - x^{*,\backslash j}\|}{\ssize},
\end{equation*}
and therefore $\|x^* - x^{*,\backslash j}\| \leq \frac{\beta}{\lambda \ssize}$. From the Lipschitz assumption, we therefore have $\ell(y,x^*) - \ell(y,x^{*,\backslash j}) \leq \frac{\beta^2}{\lambda\ssize}$.
\end{proof}

\section{Regularized MDPs and BRL}
In this section we investigate regularized MDPs in the BRL setting. We will want to show that regularized MDPs are uniformly stable, using a result similar to Proposition \ref{prop:uniform_stability}. To do that, following \cite{shani2020adaptive}, we consider the iterates of a mirror descent algorithm, and we shall show that conditions similar to Lemma \ref{lem:fundamental_inequality} hold. We then use this property to derive the stability result.

In our proof we heavily build on techniques from \citet{shani2020adaptive}. One major difference is that \citet{shani2020adaptive} consider discounted MDPs, while we study the finite horizon, Bayesian setting. 

We consider history dependent policies of the form $\left\{\pi_0(h_0),\dots,\pi_T(h_T)\right\}$, where we recall that $h_t = \left\{ s_0, a_0, c_0, s_1,a_1, c_1\dots, s_t\right\}$. Note that by definition, $h_{t+1} = \left\{h_{t},a_t,c_t,s_{t+1}\right\}$.
The BRL objective \eqref{eq:objective} can be interpreted as follows: we first choose a history dependent policy $\pi(h_t)$, and then nature draws an MDP $M\sim P(M)$, and we then evaluate $\pi(h_t)$ on $M$. The expected cost (over the draws of $M$), is the BRL performance objective. 

Let $P(M|h_t)$ denote the posterior probability of nature having drawn the MDP $M$, given that we have seen the history $h_t$. From Bayes rule, we have that:
\begin{equation*}
    P(M|h_t;\pi) \propto P(h_t|M;\pi)P(M).
\end{equation*}
Also, from the law of total probability and the Markov transitions in each possible MDP, we have that:\footnote{for notation simplicity, we assume that the set $\mathcal{M}$ is finite. For infinite sets, replace the sums with integrals.}
\begin{equation*}
    P(s_{t+1}|h_t,a_t) = \sum_{M} P(M|h_t)P(s_{t+1}|M,h_t,a_t) = \sum_{M} P(M|h_t)P(s_{t+1}|M,s_t,a_t).
\end{equation*}
Similarly,
\begin{equation*}
    P(c_t|h_t,a_t) = \sum_{M} P(M|h_t)P(c_t|M,s_t,a_t),
\end{equation*}
and
\begin{equation*}
    P(c_t,s_{t+1}|h_t,a_t) = \sum_{M} P(M|h_t)P(c_t|M,s_t,a_t)P(s_{t+1}|M,s_t,a_t).
\end{equation*}

We also consider a regularized cost,
\begin{equation*}
    \cost_{\lambda}(h_t,a_t;\pi) = \mathbb{E}_{M|h_t} \cost(s_t,a_t) + \lambda\reg(\pi_t (\cdot | h_t)),
\end{equation*}
where we assume that $\reg$ is strongly convex.

\subsection{Finite horizon dynamic programming}\label{ss:appendix_finite_horizon_dp}
The policy satisfies a dynamic programming principle. Define the value function  
\begin{equation*}
    \val_t^{\pi}(h_t) = \mathbb{E}_{\pi; M}\left[ \left.\sum_{t'=t}^T \cost_{\lambda}(h_t,a_t;\pi)\right| h_t\right],
\end{equation*}
and the optimal value function
\begin{equation*}
    \val_t^{*}(h_t) = \max_{\pi}\mathbb{E}_{\pi; M}\left[ \left.\sum_{t'=t}^T \cost_{\lambda}(h_t,a_t;\pi)\right| h_t\right].
\end{equation*}

Then we have that
\begin{equation*}
    \val_T^{\pi}(h_T) =  \sum_{a_T,c_T} \pi(a_T|h_T)\cost_{\lambda}(h_T,a_T;\pi),
\end{equation*}
and
\begin{equation*}
    \val_t^{\pi}(h_t) = \sum_{a_t} \pi(a_t|h_t) \left( \cost_{\lambda}(h_t,a_t;\pi) + \sum_{c_t,s_{t+1}}P(c_t,s_{t+1}|h_t,a_t) \val_{t+1}^{\pi}(\left\{h_{t},a_t,c_t,s_{t+1}\right\})\right) \equiv \sum_{a_t} \pi(a_t|h_t) Q_t^{\pi}(h_t,a_t). 
\end{equation*}

For the optimal value, we have
\begin{equation*}
    \val_T^{*}(h_T) = \max_{\pi(\cdot|h_T)} \sum_{a_T,c_T} \pi(a_T|h_T)\cost_{\lambda}(h_T,a_T;\pi),
\end{equation*}
and
\begin{equation*}
    \val_t^{*}(h_t) = \max_{\pi(\cdot|h_T)} \sum_{a_t} \pi(a_t|h_t) \left( \cost_{\lambda}(h_t,a_t;\pi) + \sum_{s_{t+1}}P(s_{t+1}|h_t,a_t) \val_{t+1}^{\pi}(\left\{h_{t},a_t,c_t,s_{t+1}\right\})\right).
\end{equation*}
Note that for $\lambda=0$ we obtain a standard (unregularized) finite horizon MDP, where the optimal policy is deterministic. For the regularized setting, the optimal policy may be stochastic. In the following, we denote $\val^{\pi} = \left\{ \val_0^{\pi}, \val_1^{\pi},\dots,\val_T^{\pi}\right\}$, and similarly for $\val^*$. Note that $\val^{\pi} \in \mathbb{R}^{|\histset|}$.

It will be convenient to write the dynamic programming property in operator notation. For a particular time step $t$, let
\begin{equation*}
    \bellman^{\pi'}_t\val_t^{\pi}(h_t) = \sum_{a_t} \pi'(a_t | h_t) \left(\cost_{\lambda}(h_t,a_t;\pi') + \sum_{c_t,s_{t+1}}P(c_t,s_{t+1}|h_t,a_t)\val_{t+1}^{\pi}(\left\{h_{t},a_t,c_t,s_{t+1}\right\})\right).
\end{equation*}

Consider two histories $h_t,\bar{h}_{\bar{t}}\in\histset$, and let $\htran^{\pi}(\bar{h}_{\bar{t}}|h_t) = \sum_{a_t} \pi(a_t | h_t) P(\bar{c}_t,\bar{s}_{t+1}|h_t,a_t)$ if ${\bar{t}} = t+1$ and $0$ else. Also, define $\cost^{\pi}(h_t) = \sum_{a_t} \pi(a_t | h_t) \cost_{\lambda}(h_t,a_t;\pi)$.
We can write the Bellman equation as follows
\begin{equation}\label{eq:bamdp_bellman_op}
    \val^{\pi} = \cost^{\pi} + \htran^{\pi}\val^{\pi} = \bellman^{\pi} \val^{\pi}.
\end{equation}

The next proposition establishes several important dynamic-programming properties.
\begin{proposition}\label{prop:bamdp_bellman_op}
The following holds.
\begin{enumerate}
    \item The matrix $I - \htran^{\pi}$ is invertible, and $\val^{\pi} = (I - \htran^{\pi})^{-1}\cost^{\pi}$. 
    \item For two policies $\pi, \pi'$, we have 
\begin{equation*}
\begin{split}
     &\val^{\pi'} - \val^{\pi} = (I - \htran^{\pi'})^{-1}(\bellman^{\pi'}\val^{\pi} - \val^{\pi}), \\
     &\bellman^{\pi'}\val^{\pi} - \val^{\pi} = (I - \htran^{\pi'})\left(\val^{\pi'} - \val^{\pi}\right).
\end{split}
\end{equation*}
\item For any vector $b \in \mathbb{R}^{|\histset|}$, we have $\val^{\pi} = (\bellman^{\pi})^T b$. Furthermore, letting $e$ denote a vector of ones, we have that $(I - \htran^{\pi'})^{-1}e \leq Te$.
\item Let $\hprob^{\pi}\left(\left.\bar{h}_{\bar{t}} \right| h_t\right)$ denote the probability of observing $\bar{h}_{\bar{t}}$ after $h_t$ has been observed, under policy $\pi$. We have that  $\hprob^{\pi} = (I - \htran^{\pi})^{-1}$.
\end{enumerate}
\end{proposition}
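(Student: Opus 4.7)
My plan is to exploit the crucial structural fact that $\htran^{\pi}$ advances the history length by exactly one step, so it is strictly upper-triangular when histories are ordered by their time index. This immediately gives nilpotency: since histories have time indices in $\{0,1,\dots,T\}$, the product $(\htran^{\pi})^k$ can only have a nonzero entry between a time-$t$ history and a time-$(t+k)$ history, and therefore $(\htran^{\pi})^{T+1}=0$. Nilpotency implies $I-\htran^{\pi}$ is invertible with Neumann series $(I-\htran^{\pi})^{-1}=\sum_{k=0}^{T}(\htran^{\pi})^{k}$. Claim 1 then follows by rearranging the Bellman equation \eqref{eq:bamdp_bellman_op} and left-multiplying by $(I-\htran^{\pi})^{-1}$.

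For Claim 2, I would write the Bellman equations $\val^{\pi'}=\cost^{\pi'}+\htran^{\pi'}\val^{\pi'}$ and $\val^{\pi}=\cost^{\pi}+\htran^{\pi}\val^{\pi}$, subtract, and rewrite the right-hand side by adding and subtracting $\htran^{\pi'}\val^{\pi}$ to obtain
\[
\val^{\pi'}-\val^{\pi}=\bigl(\cost^{\pi'}+\htran^{\pi'}\val^{\pi}-\val^{\pi}\bigr)+\htran^{\pi'}\bigl(\val^{\pi'}-\val^{\pi}\bigr)=\bigl(\bellman^{\pi'}\val^{\pi}-\val^{\pi}\bigr)+\htran^{\pi'}\bigl(\val^{\pi'}-\val^{\pi}\bigr).
\]
Moving the second term to the left-hand side produces the second identity in the claim, and inverting $I-\htran^{\pi'}$ via the same Neumann argument yields the first.

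For Claim 3, I would iterate the Bellman operator on an arbitrary $b$: a straightforward induction gives $(\bellman^{\pi})^{k}b=\sum_{i=0}^{k-1}(\htran^{\pi})^{i}\cost^{\pi}+(\htran^{\pi})^{k}b$. Choosing $k$ large enough (namely $T$ applications suffice once one accounts for the horizon convention, since $(\htran^{\pi})^k$ vanishes past the last time layer) kills the $b$-term and produces $(I-\htran^{\pi})^{-1}\cost^{\pi}=\val^{\pi}$ by Claim 1. For the bound $(I-\htran^{\pi'})^{-1}e\le Te$, I would use that $\htran^{\pi'}$ is sub-stochastic row-wise, hence $\htran^{\pi'}e\le e$ and by iteration $(\htran^{\pi'})^{k}e\le e$; summing the Neumann series and using nilpotency gives a bound of order $T$ (up to the standard off-by-one arising from whether one counts $t=0,\dots,T$ or $t=0,\dots,T-1$ layers).

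For Claim 4, I would unfold the chain rule of conditional probabilities: the probability $\hprob^{\pi}(\bar{h}_{\bar{t}}\mid h_{t})$ of reaching $\bar{h}_{\bar{t}}$ from $h_{t}$ in exactly $\bar{t}-t$ steps equals the product of single-step transition probabilities along the unique trajectory that extends $h_{t}$ into $\bar{h}_{\bar{t}}$ (histories are cumulative, so there is a unique such path). This is precisely the $(h_{t},\bar{h}_{\bar{t}})$ entry of $(\htran^{\pi})^{\bar{t}-t}$. Summing over all reachable lengths gives $\hprob^{\pi}=\sum_{k=0}^{T}(\htran^{\pi})^{k}=(I-\htran^{\pi})^{-1}$. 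The main subtlety throughout is just the careful bookkeeping of the finite-horizon index convention; once nilpotency is established, each of the four claims reduces to routine algebra with the Neumann series.
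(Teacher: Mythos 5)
Your proposal is correct and follows essentially the same route as the paper: the paper likewise uses nilpotency of $\htran^{\pi}$ (i.e., $(\htran^{\pi})^k=0$ for $k>T$) to roll out the Bellman equation for claims 3 and 4, and performs the same add-and-subtract algebra for claim 2. The only real difference is that for invertibility in claim 1 the paper cites the stochastic-shortest-path result (Proposition 2.2.1 of Bertsekas) rather than deriving it from the strictly block-upper-triangular structure as you do — your self-contained Neumann-series argument is at least as clean, and your explicit attention to the $T$ versus $T+1$ off-by-one in claim 3 is actually slightly more careful than the paper's statement.
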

\begin{proof}
We prove each argument.
\begin{enumerate}
    \item The finite horizon problem is a special case of the stochastic shortest path problem, where each history of length $T$ leads to termination. Proposition 2.2.1 in \cite{bertsekas1995dynamic} shows that $I - \htran^{\pi}$ is invertible for the stochastic shortest path problem, and therefore also for our special case.
    \item Now,
\begin{equation*}
\begin{split}
    \val^{\pi'} - \val^{\pi} &= (I - \htran^{\pi'})^{-1}\cost^{\pi'} - (I - \htran^{\pi'})^{-1}(I - \htran^{\pi'})\val^{\pi} \\
    &= (I - \htran^{\pi'})^{-1}\left( \cost^{\pi'} + \htran^{\pi'}\val^{\pi} - \val^{\pi}\right) \\
    &= (I - \htran^{\pi'})^{-1}\left( \bellman^{\pi'}\val^{\pi} - \val^{\pi}\right).
\end{split}
\end{equation*}
The second result is obtained by multiplying both sides by $(I - \htran^{\pi'})$.
\item Note that by definition of the finite horizon problem, for $k>T$, we have that $(\htran^{\pi})^k = 0$, since after $T$ time steps we must terminate. Rolling out \eqref{eq:bamdp_bellman_op} we have
\begin{equation*}
    \val^{\pi} = \cost^{\pi} + \htran^{\pi}\val^{\pi} = \cost^{\pi} + \htran^{\pi}\cost^{\pi} + (\htran^{\pi})^2\val^{\pi}=\dots=\sum_{k=0}^T (\htran^{\pi})^k \cost^{\pi}.
\end{equation*}
On the other hand, using a similar argument, we have that $(\bellman^{\pi})^T b = \sum_{k=0}^T (\htran^{\pi})^k \cost^{\pi} = \val^{\pi}$.
Observe that $\sum_{k=0}^T (\htran^{\pi})^k = \sum_{k=0}^\infty (\htran^{\pi})^k = (I - \htran^{\pi})^{-1}$. Therefore $\val^{\pi} = \sum_{k=0}^T (\htran^{\pi})^k \cost^{\pi} = (I - \htran^{\pi})^{-1}\cost^{\pi}$. If $\cost^{\pi} = e$, the maximal cost in $T$ time steps is at most $T$, therefore $(I - \htran^{\pi})^{-1}e \leq Te$.
\item By definition, we have $\hprob^{\pi} = I + \htran^{\pi} + (\htran^{\pi})^2 + \dots = \sum_{k=0}^T (\htran^{\pi})^k = (I - \htran^{\pi})^{-1}$.
\end{enumerate}
\end{proof}

\subsection{Policy gradient} 
We have that $\pi \in \Delta^{\histset}_{A}$, and therefore $\nabla_{\pi} \val^{\pi} \in \mathbb{R}^{|\histset| \times |\histset| \times |A|}$. We write $\nabla_{\pi} \val_t^{\pi}(h_t,\bar{h}_{\bar{t}},\bar{a}_{\bar{t}}) = \partial_{\pi(\bar{a}_{\bar{t}}|\bar{h}_{\bar{t}})} \val^{\pi}(h_t)$.

\begin{proposition}\label{prop:PG}
We have that
\begin{equation*}
    \nabla_{\pi} \val_t^{\pi}(h_t,\bar{h}_{\bar{t}},\bar{a}_{\bar{t}}) =  \left\{\begin{array}{lr}
        \hprob^{\pi}\left(\left.\bar{h}_{\bar{t}} \right| h_t\right) \left( Q_{\bar{t}}^{\pi}(h_t,a_t) + \partial_{\pi(\bar{a}_{\bar{t}}|\bar{h}_{\bar{t}})} \reg(\pi (\cdot | \bar{h}_{\bar{t}})) \right), & \text{for  } \bar{t} \geq t \\
        0, & \text{else }
        \end{array}\right. .
\end{equation*}
\end{proposition}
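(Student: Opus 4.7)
\medskip

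\noindent\textbf{Proof proposal.} The plan is a backward induction on $t$ from $T$ down to $0$, using the Bellman recursion $\val_t^\pi(h_t) = \sum_{a_t} \pi(a_t|h_t) Q_t^\pi(h_t,a_t)$. A useful preliminary observation is that the posterior $P(M|h_t;\pi)$ is actually independent of $\pi$: in the Bayes-rule expression $P(M|h_t;\pi) \propto P(M)\prod_{s<t}\pi(a_s|h_s)P(c_s,s_{s+1}|M,s_s,a_s)$, the product $\prod_s \pi(a_s|h_s)$ factors out of both numerator and normalizer and cancels. Consequently $\partial_{\pi(\bar a_{\bar t}|\bar h_{\bar t})} \mathbb{E}_{M|h_t}\cost_M(s_t,a_t) = 0$, and the only policy-dependence of $\cost_\lambda(h_t,a_t;\pi)$ comes through $\reg(\pi(\cdot|h_t))$.

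Base case ($t=T$). Here $\val_T^\pi(h_T)=\sum_{a_T}\pi(a_T|h_T)\cost_\lambda(h_T,a_T;\pi)$; differentiating with respect to $\pi(\bar a_T|\bar h_T)$ with $\bar h_T=h_T$ produces $Q_T^\pi(h_T,\bar a_T)$ from the explicit $\pi(a_T|h_T)$ factor, plus $\sum_{a_T}\pi(a_T|h_T)\cdot \partial_{\pi(\bar a_T|h_T)}\reg(\pi(\cdot|h_T)) = \partial_{\pi(\bar a_T|h_T)}\reg(\pi(\cdot|h_T))$ from the regularizer inside $\cost_\lambda$ (using $\sum_{a_T}\pi(a_T|h_T)=1$). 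Since $\hprob^\pi(h_T|h_T)=1$, this gives the claimed formula, and the case $\bar h_T\neq h_T$ is trivially zero.

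Inductive step ($\bar t > t$). Differentiating the Bellman recursion, the $\pi(a_t|h_t)$ weights and the regularizer in $\cost_\lambda(h_t,a_t;\pi)$ contribute nothing since $\bar h_{\bar t}\neq h_t$, leaving
\begin{equation*}
\partial_{\pi(\bar a_{\bar t}|\bar h_{\bar t})} \val_t^\pi(h_t) = \sum_{h_{t+1}}\htran^\pi(h_{t+1}|h_t)\,\partial_{\pi(\bar a_{\bar t}|\bar h_{\bar t})} \val_{t+1}^\pi(h_{t+1}).
\end{equation*}
Plugging in the inductive hypothesis and invoking the Chapman--Kolmogorov-type identity $\sum_{h_{t+1}}\htran^\pi(h_{t+1}|h_t)\hprob^\pi(\bar h_{\bar t}|h_{t+1}) = \hprob^\pi(\bar h_{\bar t}|h_t)$, which follows directly from $\hprob^\pi=(I-\htran^\pi)^{-1}=\sum_{k}(\htran^\pi)^k$ (Proposition \ref{prop:bamdp_bellman_op}), gives the stated formula. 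For the diagonal case $\bar t=t$, the calculation above combines cleanly with the base case computation, yielding again the same expression.

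The only slightly delicate case is $\bar t < t$. Here $\val_t^\pi(h_t)$ is a conditional expectation starting at $h_t$, so it only involves $\pi(\cdot|\bar h)$ for histories $\bar h$ that are suffixes/continuations of $h_t$; the history $\bar h_{\bar t}$ at an earlier time cannot be reached from $h_t$ under the forward transition $\htran^\pi$. The clean way to make this rigorous is to iterate the Bellman recursion to write $\val_t^\pi(h_t) = \sum_{\bar h: \text{reachable},\,\bar t\geq t} \hprob^\pi(\bar h|h_t)\cost^\pi(\bar h)$ (as in Proposition \ref{prop:bamdp_bellman_op}(3)) and note that each term's policy-dependence is supported on $\{\bar h:\bar t\geq t\}$; thus the derivative is zero, consistent with the convention $\hprob^\pi(\bar h_{\bar t}|h_t)=0$ for $\bar t<t$. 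This reachability bookkeeping is the one point where a careful rewriting is needed; otherwise the proof is a routine backward induction.
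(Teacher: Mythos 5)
Your proof is correct, and it reaches the result by a somewhat different route than the paper. The paper's proof is a direct, one-shot decomposition: it splits $\val_t^{\pi}(h_t)$ into the cost accrued strictly before time $\bar{t}$ plus $\sum_{\bar{h}_{\bar{t}}} \hprob^{\pi}(\bar{h}_{\bar{t}} \mid h_t)\sum_a \pi(a|\bar{h}_{\bar{t}})Q_{\bar{t}}^{\pi}(\bar{h}_{\bar{t}},a)$, then differentiates only the second term, using (implicitly) that $\hprob^{\pi}(\bar{h}_{\bar{t}}\mid h_t)$, the pre-$\bar{t}$ costs, and the continuation values inside $Q_{\bar{t}}^{\pi}$ do not depend on $\pi(\cdot|\bar{h}_{\bar{t}})$. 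You instead unroll the Bellman recursion one step at a time via backward induction and stitch the visitation probabilities together with the Chapman--Kolmogorov identity $\sum_{h_{t+1}}\htran^{\pi}(h_{t+1}|h_t)\hprob^{\pi}(\bar{h}_{\bar{t}}|h_{t+1}) = \hprob^{\pi}(\bar{h}_{\bar{t}}|h_t)$. The two arguments rest on the same two facts, but your write-up makes explicit something the paper leaves tacit and which is genuinely needed for either argument to go through: the posterior $P(M|h_t;\pi)$, and hence $P(c_t,s_{t+1}|h_t,a_t)$ and the expected cost term in $\cost_{\lambda}$, are independent of $\pi$ because the policy factors cancel between the numerator and normalizer of Bayes' rule. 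Your handling of the $\bar{t}<t$ case via the reachability representation of Proposition \ref{prop:bamdp_bellman_op} is also more careful than the paper's one-line dismissal. The only cosmetic point: when the regularizer inside $\cost_{\lambda}$ is differentiated, a factor of $\lambda$ should appear in front of $\partial_{\pi(\bar{a}_{\bar{t}}|\bar{h}_{\bar{t}})}\reg(\pi(\cdot|\bar{h}_{\bar{t}}))$; you reproduce the statement as printed, which omits it, but this is a typo inherited from the paper (compare the $\lambda\breg$ term that reappears in Proposition \ref{prop:linear_approx}) rather than a gap in your argument.
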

\begin{proof}
Since the history $h_t$ is given, any past actions cannot affect the future rewards, therefore the gradient is zero for $\bar{t} < t $. Otherwise, 
we have that
\begin{equation*}
\begin{split}
    \val_t^{\pi}(h_t) &= \mathbb{E}_{\pi; M}\left[ \left.\sum_{t'=t}^T \cost_{\lambda}(h_t,a_t;\pi)\right| h_t\right] \\
    &= \mathbb{E}_{\pi; M}\left[ \left.\sum_{t'=t}^{\bar{h}_{\bar{t}}-1} \cost_{\lambda}(h_t,a_t;\pi)\right| h_t\right] + \mathbb{E}_{\pi; M}\left[ \left.\sum_{t'=\bar{h}_{\bar{t}}}^{T} \cost_{\lambda}(h_t,a_t;\pi)\right| h_t\right] \\
    &= \mathbb{E}_{\pi; M}\left[ \left.\sum_{t'=t}^{\bar{h}_{\bar{t}}-1} \cost_{\lambda}(h_t,a_t;\pi)\right| h_t\right] + \sum_{\bar{h}_{\bar{t}}} \hprob^{\pi}\left(\left.\bar{h}_{\bar{t}} \right| h_t\right) \val_{\bar{t}}^{\pi}(\bar{h}_{\bar{t}}) \\
    &= \mathbb{E}_{\pi; M}\left[ \left.\sum_{t'=t}^{\bar{h}_{\bar{t}}-1} \cost_{\lambda}(h_t,a_t;\pi)\right| h_t\right] + \sum_{\bar{h}_{\bar{t}}} \hprob^{\pi}\left(\left.\bar{h}_{\bar{t}} \right| h_t\right) \sum_{a} \pi(a|\bar{h}_{\bar{t}}) Q_{\bar{t}}^{\pi}(\bar{h}_{\bar{t}},a). 
\end{split}
\end{equation*}
Taking a gradient,
\begin{equation*}
    \nabla_{\pi} \val_t^{\pi}(h_t,\bar{h}_{\bar{t}},\bar{a}_{\bar{t}}) =  \hprob^{\pi}\left(\left.\bar{h}_{\bar{t}}\right| h_t\right) \left( Q_{\bar{t}}^{\pi}(\bar{h}_{\bar{t}},\bar{a}_{\bar{t}}) + \partial_{\pi(\bar{a}_{\bar{t}}|\bar{h}_{\bar{t}})} Q_{\bar{t}}^{\pi}(\bar{h}_{\bar{t}},\bar{a}_{\bar{t}})\right)
\end{equation*}
Finally, observe that the only dependence of $Q_{\bar{t}}^{\pi}(\bar{h}_{\bar{t}},\bar{a}_{\bar{t}})$ on $\pi(\bar{a}_{\bar{t}}|\bar{h}_{\bar{t}})$ is through $\cost_{\lambda}(\bar{h}_{\bar{t}},\bar{a}_{\bar{t}};\pi)$. Plugging the expression for $\cost_{\lambda}(\bar{h}_{\bar{t}},\bar{a}_{\bar{t}};\pi)$ gives the result.
\end{proof}

\subsection{Linear approximation of a policy's value} 
The linear approximation of the value of a policy $\pi'$ around the policy $\pi$ is given by
\begin{equation*}
    \val^{\pi'} \approx \val^{\pi} + \left< \nabla_{\pi} \val^{\pi} , \pi' - \pi\right>.
\end{equation*}

\begin{proposition}\label{prop:linear_approx}
Let $\pi, \pi' \in \Delta^{\histset}_{A}$. Then,
\begin{equation*}
    \left< \nabla_{\pi} \val^{\pi} (h_t), \pi' - \pi\right> = \sum_{\bar{h}_{\bar{t}} \in \histset} \hprob^{\pi}\left(\left.\bar{h}_{\bar{t}} \right| h_t\right)\left( \bellman^{\pi'}_{\bar{t}}\val^{\pi}(\bar{h}_{\bar{t}}) - \val^{\pi}(\bar{h}_{\bar{t}}) -\lambda \breg(\bar{h}_{\bar{t}};\pi',\pi) \right),
\end{equation*}
where $\bellman^{\pi'}_t\val^{\pi}(h_t) = \sum_{a_t} \pi'(a_t | h_t) \left(\cost_{\lambda}(h_t,a_t;\pi') + \sum_{c_t,s_{t+1}}P(c_t,s_{t+1}|h_t,a_t)\val_{t+1}^{\pi}(\left\{h_{t},a_t,c_t,s_{t+1}\right\})\right)$. and $\breg(\bar{h}_{\bar{t}};\pi',\pi) = \reg(\pi'_{\bar{t}} (\cdot | \bar{h}_{\bar{t}})) - \reg(\pi_{\bar{t}} (\cdot | \bar{h}_{\bar{t}})) - \left<\partial_{\pi(\cdot|\bar{h}_{\bar{t}})} \reg(\pi (\cdot | \bar{h}_{\bar{t}})), \pi'(\bar{h}_{\bar{t}}) - \pi(\bar{h}_{\bar{t}}) \right>$ is the Bregman distance associated with the regularization function $\reg(\cdot)$ .
\end{proposition}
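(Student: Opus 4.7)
The plan is to expand the left-hand side using the policy gradient expression from Proposition \ref{prop:PG}, then rearrange the resulting double sum over histories and actions so that it matches the Bellman-operator plus Bregman-distance form on the right-hand side. This is essentially an algebraic identity once the correct value-decomposition is in place.

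\textbf{Step 1: expand the gradient.} By Proposition \ref{prop:PG}, the gradient $\nabla_{\pi}\val^{\pi}(h_t)$ vanishes for $\bar{t}<t$ and equals $\hprob^{\pi}(\bar{h}_{\bar{t}}\mid h_t)\bigl(\Q_{\bar{t}}^{\pi}(\bar{h}_{\bar{t}},\bar{a}_{\bar{t}})+\lambda\,\partial_{\pi(\bar{a}_{\bar{t}}\mid\bar{h}_{\bar{t}})}\reg(\pi(\cdot\mid\bar{h}_{\bar{t}}))\bigr)$ otherwise, where the $\lambda$ arises from differentiating the regularized cost $\cost_{\lambda}$. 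Taking the inner product with $\pi'-\pi$ and factoring out the history weight yields
\begin{equation*}
\bigl\langle\nabla_{\pi}\val^{\pi}(h_t),\pi'-\pi\bigr\rangle
=\sum_{\bar{h}_{\bar{t}}}\hprob^{\pi}(\bar{h}_{\bar{t}}\mid h_t)\sum_{\bar{a}_{\bar{t}}}\bigl(\pi'(\bar{a}_{\bar{t}}\mid\bar{h}_{\bar{t}})-\pi(\bar{a}_{\bar{t}}\mid\bar{h}_{\bar{t}})\bigr)\bigl[\Q_{\bar{t}}^{\pi}(\bar{h}_{\bar{t}},\bar{a}_{\bar{t}})+\lambda\,\partial_{\pi(\bar{a}_{\bar{t}}\mid\bar{h}_{\bar{t}})}\reg(\pi(\cdot\mid\bar{h}_{\bar{t}}))\bigr].
\end{equation*}

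\textbf{Step 2: unpack the Bellman operator.} I would next compute $\bellman^{\pi'}_{\bar{t}}\val^{\pi}(\bar{h}_{\bar{t}})-\val^{\pi}(\bar{h}_{\bar{t}})$ directly from the definitions. Because $\cost_{\lambda}(\bar{h}_{\bar{t}},\bar{a}_{\bar{t}};\pi')$ and $\cost_{\lambda}(\bar{h}_{\bar{t}},\bar{a}_{\bar{t}};\pi)$ differ only by $\lambda[\reg(\pi'(\cdot\mid\bar{h}_{\bar{t}}))-\reg(\pi(\cdot\mid\bar{h}_{\bar{t}}))]$, and using $\val^{\pi}(\bar{h}_{\bar{t}})=\sum_{\bar{a}_{\bar{t}}}\pi(\bar{a}_{\bar{t}}\mid\bar{h}_{\bar{t}})\Q_{\bar{t}}^{\pi}(\bar{h}_{\bar{t}},\bar{a}_{\bar{t}})$, this difference simplifies to
\begin{equation*}
\sum_{\bar{a}_{\bar{t}}}\bigl(\pi'(\bar{a}_{\bar{t}}\mid\bar{h}_{\bar{t}})-\pi(\bar{a}_{\bar{t}}\mid\bar{h}_{\bar{t}})\bigr)\Q_{\bar{t}}^{\pi}(\bar{h}_{\bar{t}},\bar{a}_{\bar{t}})+\lambda\bigl[\reg(\pi'(\cdot\mid\bar{h}_{\bar{t}}))-\reg(\pi(\cdot\mid\bar{h}_{\bar{t}}))\bigr].
\end{equation*}

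\textbf{Step 3: match terms via the Bregman distance.} Subtracting $\lambda\breg(\bar{h}_{\bar{t}};\pi',\pi)$ replaces the regularizer difference with the linear term $\lambda\bigl\langle\partial_{\pi(\cdot\mid\bar{h}_{\bar{t}})}\reg(\pi(\cdot\mid\bar{h}_{\bar{t}})),\pi'(\bar{h}_{\bar{t}})-\pi(\bar{h}_{\bar{t}})\bigr\rangle$. Combining with the $\Q$-term produces exactly the inner expression from Step 1. Multiplying by $\hprob^{\pi}(\bar{h}_{\bar{t}}\mid h_t)$ and summing over $\bar{h}_{\bar{t}}$ recovers the claimed formula.

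\textbf{Main obstacle.} The calculation is not deep but is bookkeeping-heavy: one must carefully track that the regularizer inside $\bellman^{\pi'}_{\bar{t}}$ is evaluated at $\pi'$ (so its change from $\val^{\pi}$ produces a $\reg(\pi')-\reg(\pi)$ term), while the gradient in Step 1 produces $\partial\reg(\pi)$ evaluated at $\pi$ and paired linearly with $\pi'-\pi$. The Bregman distance is precisely the object that bridges these two, and using its definition collapses the two expressions into one another. A small subtlety is that gradients only include contributions from $\bar{t}\ge t$, which is automatic because $\hprob^{\pi}(\bar{h}_{\bar{t}}\mid h_t)=0$ for $\bar{t}<t$, so the range of summation on the right-hand side is consistent.
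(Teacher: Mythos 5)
Your proposal is correct and follows essentially the same route as the paper: expand the gradient via Proposition \ref{prop:PG}, use the identity $\bellman^{\pi'}_{\bar{t}}\val^{\pi}(\bar{h}_{\bar{t}}) - \val^{\pi}(\bar{h}_{\bar{t}}) = \sum_{\bar{a}}(\pi'-\pi)\Q^{\pi}_{\bar{t}}(\bar{h}_{\bar{t}},\bar{a}) + \lambda[\reg(\pi')-\reg(\pi)]$, and let the Bregman distance absorb the mismatch between the regularizer difference and its linearization; the paper merely runs the same algebra in the opposite direction, converting $\cost_{\lambda}(\cdot;\pi)$ into $\cost_{\lambda}(\cdot;\pi')$ inside $\langle \Q^{\pi},\pi'-\pi\rangle$. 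Your explicit inclusion of the factor $\lambda$ on $\partial_{\pi}\reg$ in the gradient is in fact the correct reading (the paper's statement of Proposition \ref{prop:PG} omits it, apparently a typo, since it is needed for the $\lambda\breg$ term to emerge).
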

\begin{proof}
Consider the inner product $\left< \nabla_{\pi(\cdot|\bar{h}_{\bar{t}})} \val^{\pi}(h_t) , \pi'(\bar{h}_{\bar{t}}) - \pi(\bar{h}_{\bar{t}})\right>$. From Proposition \ref{prop:PG}, we have
\begin{equation}\label{eq:proof_1}
    \begin{split}
        \left< \nabla_{\pi(\cdot|\bar{h}_{\bar{t}})} \val^{\pi}(h_t) , \pi'(\bar{h}_{\bar{t}}) - \pi(\bar{h}_{\bar{t}})\right> &= \left< \hprob^{\pi}\left(\left.\bar{h}_{\bar{t}} \right| h_t\right) \left( Q_{\bar{t}}^{\pi}(\bar{h}_{\bar{t}},\cdot) + \partial_{\pi(\cdot|\bar{h}_{\bar{t}})} \reg(\pi (\cdot | \bar{h}_{\bar{t}})) \right), \pi'(\bar{h}_{\bar{t}}) - \pi(\bar{h}_{\bar{t}})\right>.
    \end{split}
\end{equation}
where we used the fact that $\hprob^{\pi}\left(\left.\bar{h}_{\bar{t}} \right| h_t\right) = 0$ for $\bar{t} < t$.
Note that the following holds:
\begin{equation*}
    \begin{split}
        &\left< Q_{t}^{\pi}(h_t,\cdot), \pi'(h_t) - \pi(h_t)\right> = \\
        &\left< \cost_{\lambda}(h_t,\cdot;\pi) + \sum_{c_t,s_{t+1}}P(c_t,s_{t+1}|h_t,\cdot)\val_{t+1}^{\pi}(\left\{h_{t},\cdot,c_t,s_{t+1}\right\}), \pi'(h_t) - \pi(h_t)\right> \\
        =& \sum_{a_t} \pi'(a_t | h_t) \left(\cost_{\lambda}(h_t,a_t;\pi) + \sum_{c_t,s_{t+1}}P(c_t,s_{t+1}|h_t,a_t)\val_{t+1}^{\pi}(\left\{h_{t},a_t,c_t,s_{t+1}\right\})\right) \\ 
        &-\sum_{a_t} \pi(a_t | h_t) \left(\cost_{\lambda}(h_t,a_t;\pi) + \sum_{c_t,s_{t+1}}P(c_t,s_{t+1}|h_t,a_t)\val_{t+1}^{\pi}(\left\{h_{t},a_t,c_t,s_{t+1}\right\})\right) \\
        =& \sum_{a_t} \pi'(a_t | h_t) \left(\mathbb{E}_{M|h_t} \cost(s_t,a_t) + \lambda\reg(\pi_t (\cdot | h_t)) + \sum_{c_t,s_{t+1}}P(c_t,s_{t+1}|h_t,a_t)\val_{t+1}^{\pi}(\left\{h_{t},a_t,c_t,s_{t+1}\right\})\right) \\
        &- \val^{\pi}(h_t) \\
        =& \sum_{a_t} \pi'(a_t | h_t) \left(\mathbb{E}_{M|h_t} \cost(s_t,a_t) + \lambda\reg(\pi'_t (\cdot | h_t)) + \sum_{c_t,s_{t+1}}P(c_t,s_{t+1}|h_t,a_t)\val_{t+1}^{\pi}(\left\{h_{t},a_t,c_t,s_{t+1}\right\})\right) \\
        & +\lambda\left(\reg(\pi_t (\cdot | h_t)) - \reg(\pi'_t (\cdot | h_t))\right) - \val^{\pi}(h_t) \\
        =& \sum_{a_t} \pi'(a_t | h_t) \left(\cost_{\lambda}(h_t,a_t;\pi') + \sum_{c_t,s_{t+1}}P(c_t,s_{t+1}|h_t,a_t)\val_{t+1}^{\pi}(\left\{h_{t},a_t,c_t,s_{t+1}\right\})\right) \\
        & +\lambda\left(\reg(\pi_t (\cdot | h_t)) - \reg(\pi'_t (\cdot | h_t))\right) - \val^{\pi}(h_t) \\
        \equiv& \bellman^{\pi'}_t\val^{\pi}(h_t) +\lambda\left(\reg(\pi_t (\cdot | h_t)) - \reg(\pi'_t (\cdot | h_t))\right) - \val^{\pi}(h_t) 
    \end{split}
\end{equation*}
Plugging this back in \eqref{eq:proof_1}, we have
\begin{equation*}\label{eq:proof_tbd}
    \begin{split}
        &\left< \nabla_{\pi(\cdot|\bar{h}_{\bar{t}})} \val^{\pi}(h_t) , \pi'(\bar{h}_{\bar{t}}) - \pi(\bar{h}_{\bar{t}})\right> \\
        =& \hprob^{\pi}\left(\left.\bar{h}_{\bar{t}} \right| h_t\right)\left( \bellman^{\pi'}_{\bar{t}}\val^{\pi}(\bar{h}_{\bar{t}}) - \val^{\pi}(\bar{h}_{\bar{t}})\right) \\
        &- \lambda \hprob^{\pi}\left(\left.\bar{h}_{\bar{t}} \right| h_t\right)\left( \reg(\pi'_{\bar{t}} (\cdot | \bar{h}_{\bar{t}})) - \reg(\pi_{\bar{t}} (\cdot | \bar{h}_{\bar{t}})) - \left<\partial_{\pi(\cdot|\bar{h}_{\bar{t}})} \reg(\pi (\cdot | \bar{h}_{\bar{t}})), \pi'(\bar{h}_{\bar{t}}) - \pi(\bar{h}_{\bar{t}}) \right> \right) \\
        =& \hprob^{\pi}\left(\left.\bar{h}_{\bar{t}} \right| h_t\right)\left( \bellman^{\pi'}_{\bar{t}}\val^{\pi}(\bar{h}_{\bar{t}}) - \val^{\pi}(\bar{h}_{\bar{t}}) -\lambda \breg(\bar{h}_{\bar{t}};\pi',\pi) \right).
    \end{split}
\end{equation*}
Finally,
\begin{equation*}
    \left< \nabla_{\pi} \val^{\pi} (h_t), \pi' - \pi\right> = \sum_{\bar{h}_{\bar{t}} \in \histset} \hprob^{\pi}\left(\left.\bar{h}_{\bar{t}} \right| h_t\right)\left( \bellman^{\pi'}_{\bar{t}}\val^{\pi}(\bar{h}_{\bar{t}}) - \val^{\pi}(\bar{h}_{\bar{t}}) -\lambda \breg(\bar{h}_{\bar{t}};\pi',\pi) \right).
\end{equation*}
\end{proof}


\subsection{Uniform trust region policy optimization}\label{ss:utrpo}
We consider updates of the following form.
\begin{equation*}
    \pi_{k+1} \in \argmin_{\pi \in \Delta^{\histset}_{A}} \left\{ \left< \nabla \val^{\pi_k}, \pi - \pi_k\right> + \frac{1}{\alpha_k} \hprob^{\pi_k} \breg(\pi, \pi_k)\right\}.
\end{equation*}

Applying Proposition \ref{prop:linear_approx} we have that the update is equivalent to 
\begin{equation*}
    \pi_{k+1} \in \argmin_{\pi \in \Delta^{\histset}_{A}} \left\{ \hprob^{\pi_k} \left( \bellman^{\pi}\val^{\pi_k} - \val^{\pi_k} + \left(\frac{1}{\alpha_k} - \lambda\right) \breg(\pi, \pi_k) \right)\right\},
\end{equation*}
and since $\hprob^{\pi_k} \geq 0$ component-wise, this is equivalent to minimizing for each $h_t$:
\begin{equation*}
    \pi_{k+1}(h_t) \in \argmin_{\pi \in \Delta_{A}} \left\{ \left( \alpha_k \bellman^{\pi}\val^{\pi_k}(h_t) + \left(1 - \alpha_k\lambda\right) \breg(h_t; \pi, \pi_k) \right)\right\}.
\end{equation*}

\paragraph{Fundamental inequality:} we claim that the following holds for uniform trust region policy optimization.
\begin{proposition}\label{prop:fundamental}
Let $\{\pi_k\}$ be the sequence generated by uniform trust region policy optimization with step sizes $\{\alpha_k\}$. Then for every $\pi$ and $k \geq 0$,
\begin{equation*}
    \alpha_k ( I - P^{\pi} )(\val^{\pi_k} - \val^{\pi}) \leq (1 - \alpha_k\lambda)\breg(\pi, \pi_k) - \breg(\pi, \pi_{k+1}) + \lambda \alpha_k(\reg(\pi_k) - \reg(\pi_{k+1})) + \frac{\alpha_k^2 L^2}{2}e,
\end{equation*}
where $e$ is a vector of ones, and $L = \cost_{max} T$ for the $L_1$ norm and $L = \cost_{max} T |A|$ for the Euclidean norm.
\end{proposition}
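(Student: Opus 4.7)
The plan is to establish the inequality componentwise, working at each history $h_t$ independently, and adapting the standard mirror descent analysis (cf.\ Lemma~8.11 in \citealt{beck2017first}) to the per-step subproblem that defines $\pi_{k+1}(\cdot|h_t)$. The first move is to rewrite the left-hand side using part~2 of Proposition~\ref{prop:bamdp_bellman_op}, which gives $(I-\htran^{\pi})(\val^{\pi_k}-\val^{\pi}) = \val^{\pi_k} - \bellman^{\pi}\val^{\pi_k}$. Since $\val^{\pi_k} = \bellman^{\pi_k}\val^{\pi_k}$, the left-hand side reduces, at each history, to the local quantity $\alpha_k\bigl(\bellman^{\pi_k}\val^{\pi_k}(h_t) - \bellman^{\pi}\val^{\pi_k}(h_t)\bigr)$, which only involves the choice of policy at $h_t$.

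Next, I would exploit the first-order optimality of $\pi_{k+1}(\cdot|h_t)$ in the (strongly) convex subproblem $\min_{\pi\in\Delta_A}\,\alpha_k\,\bellman^{\pi}\val^{\pi_k}(h_t) + (1-\alpha_k\lambda)\,\breg(h_t;\pi,\pi_k)$. As a function of $\pi(\cdot|h_t)$, $\bellman^{\pi}\val^{\pi_k}(h_t)$ decomposes into a linear part (expected cost plus expected next-value under $\val^{\pi_k}$) and the convex regularizer $\lambda\reg(\pi(\cdot|h_t))$, so the variational inequality against an arbitrary $\pi\in\Delta_A$ is meaningful. Applying the three-point identity $\breg(h_t;\pi,\pi_k) = \breg(h_t;\pi,\pi_{k+1}) + \breg(h_t;\pi_{k+1},\pi_k) + \langle\nabla\reg(\pi_{k+1})-\nabla\reg(\pi_k),\pi-\pi_{k+1}\rangle$ and collecting terms yields an inequality of the form $\alpha_k[\bellman^{\pi_{k+1}}\val^{\pi_k}(h_t)-\bellman^{\pi}\val^{\pi_k}(h_t)] \leq (1-\alpha_k\lambda)\breg(h_t;\pi,\pi_k) - \breg(h_t;\pi,\pi_{k+1}) - (1-\alpha_k\lambda)\breg(h_t;\pi_{k+1},\pi_k)$, plus the regularization differences arising from the $\pi$-dependence of $\bellman^\pi$.

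The remaining descent-lemma gap $\alpha_k[\bellman^{\pi_k}\val^{\pi_k}(h_t)-\bellman^{\pi_{k+1}}\val^{\pi_k}(h_t)]$ is controlled using H\"older's inequality (Cauchy--Schwarz in the Euclidean case) with the value bound $\|\val^{\pi_k}\|_\infty \leq \cost_{\max} T$ coming from part~3 of Proposition~\ref{prop:bamdp_bellman_op}, together with Young's inequality $ab \leq \tfrac{a^2}{2}+\tfrac{b^2}{2}$ and the strong convexity of $\reg$ (so that $\breg(h_t;\pi_{k+1},\pi_k)\geq \tfrac{1}{2}\|\pi_{k+1}(h_t)-\pi_k(h_t)\|^2$). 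This absorbs the gap into the $(1-\alpha_k\lambda)\breg(h_t;\pi_{k+1},\pi_k)$ term and leaves a residual of $\alpha_k^2 L^2/2$, where $L=\cost_{\max}T$ for $L_1$ and $L=\cost_{\max}T|A|$ for Euclidean, reflecting the dual-norm conversion constant on the simplex.

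The main obstacle is the bookkeeping around the regularization term that lives \emph{inside} $\bellman^\pi$. Varying the policy in the Bellman operator moves both the linear cost/next-value piece \emph{and} the regularizer $\lambda\reg(\pi(\cdot|h_t))$; when one compares $\bellman^{\pi_{k+1}}\val^{\pi_k}(h_t)$ to $\bellman^{\pi_k}\val^{\pi_k}(h_t)$, a leftover $\alpha_k\lambda(\reg(\pi_k(\cdot|h_t))-\reg(\pi_{k+1}(\cdot|h_t)))$ appears, which is exactly the source of the $\lambda\alpha_k(\reg(\pi_k)-\reg(\pi_{k+1}))$ term in the claim. Arranging terms so that the Bregman coefficient on $\breg(\pi,\pi_k)$ comes out as $(1-\alpha_k\lambda)$ while this explicit regularization difference appears with the correct sign is the one delicate step; this is where the proof departs from the discounted version in \citet{shani2020adaptive}, the finite horizon modifying only the value-norm constant from $\cost_{\max}/(1-\gamma)$ to $\cost_{\max}T$. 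Once the per-history inequalities are in hand, assembling them into the stated vector inequality is immediate, as every quantity on the right-hand side decomposes coordinatewise over histories.
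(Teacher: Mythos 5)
Your proposal is correct and follows essentially the same route as the paper's proof: the variational inequality for the per-history prox subproblem (the paper phrases this via the non-Euclidean second prox theorem applied to the $(\hprob^{\pi_k})^{-1}$-weighted gradient objective, then reduces to per-history form), the three-point lemma, Fenchel/Young plus $1$-strong convexity of $\breg$ to absorb the descent gap into $\alpha_k^2L^2/2$, the leftover $\lambda\alpha_k(\reg(\pi_k)-\reg(\pi_{k+1}))$ from the regularizer inside $\bellman^{\pi}$, and finally Proposition~\ref{prop:bamdp_bellman_op} to rewrite $-\alpha_k(\bellman^{\pi}\val^{\pi_k}-\val^{\pi_k})$ as $\alpha_k(I-\htran^{\pi})(\val^{\pi_k}-\val^{\pi})$. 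The only cosmetic difference is that you start from the Bellman-operator form of the left-hand side rather than the policy-gradient form, which the paper's Proposition~\ref{prop:linear_approx} shows are the same thing.
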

\begin{proof}
We follow the proof of Lemma 10 in \cite{shani2020adaptive}.

Define $\psi(\pi) = \alpha_k (\hprob^{\pi_k})^{-1} \left< \nabla \val^{\pi_k}, \pi\right> + \delta_{\Delta^{\histset}_{A}}(\pi)$, where $\delta_{\Delta^{\histset}_{A}}(\pi) = 0$ if $\pi \in \Delta^{\histset}_{A}$ and infinite otherwise, and note that $\psi$ is convex. Applying the non-Euclidean second prox theorem (Theorem 31 in \citet{shani2020adaptive}), with $a=\pi_k$, $b=\pi_{k+1}$, we get that for any $\pi \in \Delta^{\histset}_{A}$,
\begin{equation*}
    \left< \nabla \reg(\pi_k) - \nabla \reg(\pi_{k+1}), \pi - \pi_{k+1}\right> \leq \alpha_k (\hprob^{\pi_k})^{-1}  \left< \nabla \val^{\pi_k}, \pi - \pi_{k+1}\right>.
\end{equation*}

By the three points lemma (Lemma 30 in \citet{shani2020adaptive}), we have
\begin{equation*}
    \left< \nabla \reg(\pi_k) - \nabla \reg(\pi_{k+1}), \pi - \pi_{k+1}\right> = \breg(\pi, \pi_{k+1}) + \breg(\pi_{k+1}, \pi_k) - \breg(\pi, \pi_{k}).
\end{equation*}

By adding and subtracting $\alpha_k (\hprob^{\pi})^{-1}  \left< \nabla \val^{\pi_k}, \pi_{k}\right>$, we have
\begin{equation}\label{eq:proof_2}
\begin{split}
    &\alpha_k (\hprob^{\pi_k})^{-1}  \left< \nabla \val^{\pi_k}, \pi_{k} - \pi \right> \leq -\breg(\pi, \pi_{k+1}) - \breg(\pi_{k+1}, \pi_k) + \breg(\pi, \pi_{k}) + \alpha_k (\hprob^{\pi_k})^{-1}  \left< \nabla \val^{\pi_k}, \pi_k-\pi_{k+1} \right> \\
    &=-\breg(\pi, \pi_{k+1}) - \breg(\pi_{k+1}, \pi_k) + \breg(\pi, \pi_{k}) - \alpha_k (\hprob^{\pi_k})^{-1} \hprob^{\pi_k} \left( \bellman^{\pi_{k+1}}\val^{\pi_k} - \val^{\pi_k} - \lambda \breg(\pi_{k+1}, \pi_k) \right) \\
    &=-\breg(\pi, \pi_{k+1}) - \breg(\pi_{k+1}, \pi_k) + \breg(\pi, \pi_{k}) + \alpha_k  \left( \val^{\pi_k} - \bellman^{\pi_{k+1}}\val^{\pi_k}\right) + \lambda \alpha_k \breg(\pi_{k+1}, \pi_k) \\
    &\leq \breg(\pi, \pi_{k}) -\breg(\pi, \pi_{k+1}) - \frac{(1 - \lambda \alpha_k)}{2} \|\pi_{k+1} - \pi_k\|^2  + \alpha_k  \left( \val^{\pi_k} - \bellman^{\pi_{k+1}}\val^{\pi_k}\right),
\end{split}
\end{equation}
where the last inequality is since the Bregman distance is 1-strongly convex for our choice of $\reg$.

Let $L = \cost_{max} T |A|$.
Now, for every $h_t$, we have that
\begin{small}
\begin{equation*}
    \begin{split}
        &\alpha_k  \left( \val^{\pi_k} - \bellman^{\pi_{k+1}}\val^{\pi_k}\right)(h_t) \\
        =&\alpha_k  \left( \bellman^{\pi_{k}}\val^{\pi_k} - \bellman^{\pi_{k+1}}\val^{\pi_k}\right)(h_t) \\
        =&\alpha_k \lambda (\reg(\pi_k (\cdot | h_t)) - \reg(\pi_{k+1} (\cdot | h_t))) \\
        &+\sum_{a_t} \alpha_k (\pi_k(a_t | h_t) - \pi_{k+1}(a_t | h_t)) \left(\mathbb{E}_{M|h_t} \cost(s_t,a_t)  + \sum_{c_t,s_{t+1}}P(c_t,s_{t+1}|h_t,a_t)\val_{t+1}^{\pi_k}(\left\{h_{t},a_t,c_t,s_{t+1}\right\})\right) \\
        =&\alpha_k \lambda (\reg(\pi_k (\cdot | h_t)) - \reg(\pi_{k+1} (\cdot | h_t))) \\
        &+ \left<\frac{\alpha_k}{\sqrt{1 - \lambda \alpha_k}} \left(\mathbb{E}_{M|h_t} \cost(s_t,\cdot)  + \sum_{c_t,s_{t+1}}P(c_t,s_{t+1}|h_t,\cdot)\val_{t+1}^{\pi_k}(\left\{h_{t},\cdot,c_t,s_{t+1}\right\})\right), \sqrt{1 - \lambda \alpha_k} (\pi_k(\cdot | h_t) - \pi_{k+1}(\cdot | h_t))\right> \\
        \leq& \lambda \alpha_k (\reg(\pi_k (\cdot | h_t)) - \reg(\pi_{k+1} (\cdot | h_t))) \\
        &+  \frac{\alpha_k^2}{2(1 - \lambda \alpha_k)} \left\|\mathbb{E}_{M|h_t} \cost(s_t,\cdot)  + \sum_{c_t,s_{t+1}}P(c_t,s_{t+1}|h_t,\cdot)\val_{t+1}^{\pi_k}(\left\{h_{t},\cdot,c_t,s_{t+1}\right\})\right\|^2_{*} \\
        &+ \frac{1 - \lambda \alpha_k}{2} \left\| \pi_k(\cdot | h_t) - \pi_{k+1}(\cdot | h_t)\right\|^2 \\
        \leq& \lambda \alpha_k (\reg(\pi_k (\cdot | h_t)) - \reg(\pi_{k+1} (\cdot | h_t))) 
        +  \frac{\alpha_k^2}{2(1 - \lambda \alpha_k)} L^2
        + \frac{1 - \lambda \alpha_k}{2} \left\| \pi_k(\cdot | h_t) - \pi_{k+1}(\cdot | h_t)\right\|^2.
    \end{split}
\end{equation*}
\end{small}
The first inequality is by Fenchel's inequality on the convex $\| \cdot \|^2$ and its convex conjugate $\| \cdot \|^2_*$. Plugging into \eqref{eq:proof_2}, we obtain
\begin{equation*}\label{eq:tbd}
\begin{split}
    &\alpha_k (\hprob^{\pi_k})^{-1}  \left< \nabla \val^{\pi_k}, \pi_{k} - \pi \right>  \\
    &\leq \breg(\pi, \pi_{k}) -\breg(\pi, \pi_{k+1}) - \frac{(1 - \lambda \alpha_k)}{2} \|\pi_{k+1} - \pi_k\|^2  + \lambda \alpha_k (\reg(\pi_k (\cdot | h_t)) - \reg(\pi_{k+1} (\cdot | h_t))) \\
    &+  \frac{\alpha_k^2}{2(1 - \lambda \alpha_k)} L^2 e + \frac{1 - \lambda \alpha_k}{2} \left\| \pi_k(\cdot | h_t) - \pi_{k+1}(\cdot | h_t)\right\|^2 \\
    &= \breg(\pi, \pi_{k}) -\breg(\pi, \pi_{k+1}) + \lambda \alpha_k (\reg(\pi_k (\cdot | h_t)) - \reg(\pi_{k+1} (\cdot | h_t))) +  \frac{\alpha_k^2}{2(1 - \lambda \alpha_k)} L^2 e.
\end{split}
\end{equation*}
Using Proposition \ref{prop:linear_approx}, we have
\begin{equation*}\label{eq:tbd}
\begin{split}
    &-\alpha_k \left( \bellman^{\pi}\val^{\pi_k} - \val^{\pi_k} -\lambda \breg(\pi,\pi_k)\right)  \\
    &\leq \breg(\pi, \pi_{k}) -\breg(\pi, \pi_{k+1}) + \lambda \alpha_k (\reg(\pi_k (\cdot | h_t)) - \reg(\pi_{k+1} (\cdot | h_t))) +  \frac{\alpha_k^2 L^2 }{2(1 - \lambda \alpha_k)} e,
\end{split}
\end{equation*}
therefore
\begin{equation*}\label{eq:tbd}
\begin{split}
    &-\alpha_k \left( \bellman^{\pi}\val^{\pi_k} - \val^{\pi_k} \right)  \\
    &\leq (1 - \alpha_k\lambda)\breg(\pi, \pi_{k}) -\breg(\pi, \pi_{k+1}) + \lambda \alpha_k (\reg(\pi_k (\cdot | h_t)) - \reg(\pi_{k+1} (\cdot | h_t))) +  \frac{\alpha_k^2 L^2 }{2(1 - \lambda \alpha_k)} e.
\end{split}
\end{equation*}

Finally, by Proposition \ref{prop:bamdp_bellman_op}
\begin{equation*}
\begin{split}
    &\alpha_k ( I - P^{\pi} )(\val^{\pi_k} - \val^{\pi}) =  -\alpha_k(\bellman^{\pi}\val^{\pi_k} - \val^{\pi_k}) \\
    &\leq (1 - \alpha_k\lambda)\breg(\pi, \pi_{k}) -\breg(\pi, \pi_{k+1}) + \lambda \alpha_k (\reg(\pi_k (\cdot | h_t)) - \reg(\pi_{k+1} (\cdot | h_t))) +  \frac{\alpha_k^2 L^2 }{2(1 - \lambda \alpha_k)} e.
\end{split}
\end{equation*}

\end{proof}

We next claim the value functions are decreasing. 
\begin{proposition}
We have that $\val^{\pi_{k+1}} \leq \val^{\pi_k}$.
\end{proposition}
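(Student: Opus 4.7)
The plan is to exploit the optimality of $\pi_{k+1}$ in the per-history update rule together with monotonicity of the Bellman operator. Concretely, the update derived in Sec.~\ref{ss:utrpo} minimizes, for each $h_t$,
\[
\alpha_k \bellman^{\pi}\val^{\pi_k}(h_t) + (1-\alpha_k\lambda)\breg(h_t;\pi,\pi_k)
\]
over $\pi\in\Delta_A$. Since $\pi_k$ itself is feasible, comparing the value at $\pi_{k+1}$ with that at $\pi_k$, and using $\bellman^{\pi_k}\val^{\pi_k}=\val^{\pi_k}$ (Bellman equation) together with $\breg(h_t;\pi_k,\pi_k)=0$, yields
\[
\alpha_k \bellman^{\pi_{k+1}}\val^{\pi_k}(h_t) \le \alpha_k \val^{\pi_k}(h_t) - (1-\alpha_k\lambda)\breg(h_t;\pi_{k+1},\pi_k).
\]
For step sizes satisfying $\alpha_k\le 1/\lambda$ (which holds for the schedule $\alpha_k=\tfrac{1}{\lambda(k+2)}$ of Lemma~\ref{lem:fundamental_inequality}) and since Bregman distances are non-negative, the right-hand side is bounded by $\alpha_k \val^{\pi_k}(h_t)$. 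Dividing by $\alpha_k>0$ I obtain the pointwise inequality
\[
\bellman^{\pi_{k+1}}\val^{\pi_k} \le \val^{\pi_k}.
\]

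Next I would use monotonicity: because $\bellman^{\pi_{k+1}} u = \cost^{\pi_{k+1}} + \htran^{\pi_{k+1}} u$ and $\htran^{\pi_{k+1}}$ has non-negative entries, $u\le u'$ implies $\bellman^{\pi_{k+1}} u\le \bellman^{\pi_{k+1}} u'$. Applying $\bellman^{\pi_{k+1}}$ repeatedly to the inequality above gives $(\bellman^{\pi_{k+1}})^{j+1}\val^{\pi_k}\le (\bellman^{\pi_{k+1}})^{j}\val^{\pi_k}$ for all $j\ge 0$, so $(\bellman^{\pi_{k+1}})^{T}\val^{\pi_k}\le \val^{\pi_k}$. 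Finally, by part 3 of Proposition~\ref{prop:bamdp_bellman_op}, $(\bellman^{\pi_{k+1}})^{T}\val^{\pi_k}=\val^{\pi_{k+1}}$, giving the desired $\val^{\pi_{k+1}}\le \val^{\pi_k}$.

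The only delicate point is the sign of $1-\alpha_k\lambda$; since the lemma is invoked with the specific step-size schedule of Lemma~\ref{lem:fundamental_inequality}, this is automatic. I do not anticipate further obstacles: the argument is essentially the standard policy-improvement step, adapted to the regularized per-history trust-region update and lifted through $T$ applications of the finite-horizon Bellman operator.
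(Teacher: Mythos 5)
Your argument is correct and is essentially the same as the paper's, which simply defers to Lemma 11 of \citet{shani2020adaptive} without spelling out the details: optimality of the per-history update at $\pi_{k+1}$ against the feasible point $\pi_k$ (using $\bellman^{\pi_k}\val^{\pi_k}=\val^{\pi_k}$ and $\breg(h_t;\pi_k,\pi_k)=0$) gives $\bellman^{\pi_{k+1}}\val^{\pi_k}\le\val^{\pi_k}$, and monotonicity of $\bellman^{\pi_{k+1}}$ together with part 3 of Proposition \ref{prop:bamdp_bellman_op} lifts this to $\val^{\pi_{k+1}}\le\val^{\pi_k}$. The one delicate point you flag, $1-\alpha_k\lambda\ge 0$, is indeed needed and holds for the step-size schedule $\alpha_k=\frac{1}{\lambda(k+2)}$ under which the proposition is used.
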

\begin{proof}
The proof is similar to the proof of Lemma 11 in \cite{shani2020adaptive}.
\end{proof}
We next present a main result.
\begin{proposition}\label{prop:appendix_quadratic_growth}
Then the following statements hold true for step sizes $\alpha_k = \frac{1}{\lambda (k+2)}$:
\begin{enumerate}
    \item The sequence converges to $\val^*$ and satisfies 
    \begin{equation*}
        \val^{\pi_k} - \val^{*} \leq \frac{\left(\lambda^2 B +  \cost_{max}^2 T^3\right) \log k}{\lambda k}.
    \end{equation*}
    \item It holds that $\lambda \breg(\pi_0, \pi^*) \leq ( I - \htran^{\pi_0} )(\val^{\pi_0} - \val^{\pi^*})$.
\end{enumerate}
\end{proposition}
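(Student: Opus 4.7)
The plan is to follow the template of Lemma \ref{lem:fundamental_inequality}, treating Proposition \ref{prop:fundamental} as the MDP analog of the generic fundamental inequality \eqref{eq:general_fundamental_ineq}. The two complications beyond the scalar case are that the inequality is vector-valued with the sign-indefinite operator $(I - \htran^{\pi})$ on the LHS, and that eventually passing to the limit will require not just value convergence but also policy convergence to $\pi^*$.

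For Claim 1, I would apply Proposition \ref{prop:fundamental} with $\pi = \pi^*$ and premultiply by $(I - \htran^{\pi^*})^{-1}$, which is entry-wise non-negative by Proposition \ref{prop:bamdp_bellman_op}, items 1 and 4, thereby isolating $\alpha_k(\val^{\pi_k} - \val^{\pi^*})$ on the LHS. With step sizes $\alpha_k = 1/(\lambda(k+2))$, multiplying by $\lambda(k+2)$ and summing over $k = 0,\dots,N$ produces the same telescoping cancellations as in Lemma \ref{lem:fundamental_inequality}, applied component-wise. Monotonicity of $\val^{\pi_k}$ (the preceding proposition) lets me replace $\val^{\pi_k}$ by $\val^{\pi_N}$ in the resulting sum; the bound $(I - \htran^{\pi^*})^{-1} e \leq T e$ from Proposition \ref{prop:bamdp_bellman_op}, item 3, controls the accumulated error by $\tfrac{L^2 T \log(N+2)}{2\lambda}$; and the surviving Bregman and regularization terms are uniformly bounded on the simplex by some constant $B$. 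Dividing by $N+1$ and taking $L = \cost_{\max} T$ yields the rate $(\lambda^2 B + \cost_{\max}^2 T^3)\log(k)/(\lambda k)$ stated in the proposition.

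For Claim 2, I would apply Proposition \ref{prop:fundamental} with $\pi = \pi_0$ and keep the inequality in vector form without inverting. The initial Bregman term vanishes since $\breg(\pi_0, \pi_0) = 0$, so after multiplying by $\lambda(k+2)$, summing, and telescoping, one obtains an inequality of the form $(I - \htran^{\pi_0})\sum_{k=0}^N (\val^{\pi_k} - \val^{\pi_0}) \leq -\lambda(N+2)\breg(\pi_0, \pi_{N+1}) + \lambda(\reg(\pi_0) - \reg(\pi_{N+1})) + \tfrac{L^2 \log(N+2)}{2\lambda} e$. Dividing by $N+1$ and letting $N \to \infty$, Claim 1 together with monotonicity forces the Cesàro mean $\tfrac{1}{N+1}\sum_k \val^{\pi_k}$ to converge to $\val^{\pi^*}$, and all terms on the RHS are of order $o(1)$ except $-\lambda \breg(\pi_0, \pi^*)$. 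Rearranging the limiting inequality produces exactly $\lambda \breg(\pi_0, \pi^*) \leq (I - \htran^{\pi_0})(\val^{\pi_0} - \val^{\pi^*})$.

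The main obstacle is justifying the limit $\breg(\pi_0, \pi_{N+1}) \to \breg(\pi_0, \pi^*)$, since this requires $\pi_{N+1} \to \pi^*$ in policy space and not merely value convergence. I would handle this by observing that the $L_2$-regularized objective has a unique minimizer $\pi^*$ thanks to strong convexity of $\reg$ on each per-history action simplex, so by compactness of $\Delta_A^{\histset}$ every accumulation point of $\{\pi_k\}$ must coincide with $\pi^*$, and Claim 1 upgrades this to full convergence via continuity of the regularized value functional. As a cleaner alternative, one can replace the limit by a $\liminf$ over convergent subsequences and invoke continuity of $\breg(\pi_0, \cdot)$, bypassing any explicit uniqueness argument.
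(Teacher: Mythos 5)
Your proposal is correct and follows essentially the same route as the paper: claim 1 is the finite-horizon adaptation of Theorem 2 of \citet{shani2020adaptive} with $(I-\htran^{\pi})^{-1}e \leq Te$ supplying the factor $T$, and claim 2 instantiates Proposition \ref{prop:fundamental} at $\pi=\pi_0$, multiplies by $\lambda(k+2)$, telescopes, applies the nonnegative operator $(I-\htran^{\pi_0})^{-1}$, and passes to the limit. Your explicit handling of $\breg(\pi_0,\pi_{N+1})\to\breg(\pi_0,\pi^*)$ is in fact more careful than the paper, which silently invokes ``the first part'' even though that gives only value convergence; note only that your $\liminf$/subsequence alternative still leaves the bound in terms of an arbitrary accumulation point, so the uniqueness-of-minimizer argument (via backward-induction strong convexity at each history) cannot really be bypassed.
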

\begin{proof}
The convergence part is equivalent to Theorem 2 in \cite{shani2020adaptive}, with the only difference being that $(I - P^{\pi})^{-1}e \leq Te$, giving a factor of $T$ instead of $\frac{1}{\gamma}$ (where $\gamma$ is the discount factor in \cite{shani2020adaptive}). We now prove the second part.

We next prove the second claim. By our assumption we have:
\begin{equation*}
    \alpha_k ( I - \htran^{\pi_0} )(\val^{\pi_k} - \val^{\pi_0}) \leq (1 - \alpha_k\lambda)\breg(\pi_0, \pi_k) - \breg(\pi_0, \pi_{k+1}) + \lambda \alpha_k(\reg(\pi_k) - \reg(\pi_{k+1})) + \frac{\alpha_k^2 L^2}{2}e.
\end{equation*}
Letting $\alpha_k = \frac{1}{\lambda (k+2)}$, and multiplying by $\lambda (k+2)$:
\begin{equation*}
    ( I - \htran^{\pi_0} )(\val^{\pi_k} - \val^{\pi_0}) \leq \lambda \left( k+1 \right) \breg(\pi_0, \pi_k) - \lambda (k+2) \breg(\pi_0, \pi_{k+1}) + \lambda \left( \reg(\pi_k) - \reg(\pi_{k+1})\right) + \frac{L^2 e}{2 \lambda (k+2)}.
\end{equation*}
Summing over $k$, and observing the telescoping sums:
\begin{equation*}
\begin{split}
        ( I - \htran^{\pi_0} )\sum_{k=0}^N\left( \val^{\pi_k} - \val^{\pi_0} \right) &\leq - \lambda (N+2) \breg(\pi_0, \pi_{N+1}) + \lambda \left( \reg(\pi_0) - \reg(\pi_{N+1})\right) + \frac{L^2 e}{2 \lambda }\sum_{k=0}^N\frac{1}{(k+2)} \\
        &\leq - \lambda (N+2) \breg(\pi_0, \pi_{N+1}) + \lambda B e + \frac{L^2 \log (N+2) e}{2 \lambda }.
\end{split}
\end{equation*}
We therefore have:
\begin{equation*}
    N \left( \val^{\pi_N} - \val^{\pi_0}\right) \leq - \lambda (N+2) ( I - \htran^{\pi_0} )^{-1}\breg(\pi_0, \pi_{N+1}) + \lambda B T e + \frac{T L^2 \log (N+2) e}{2 \lambda },
\end{equation*}
and 
\begin{equation*}
    \val^{\pi_N} - \val^{\pi_0} \leq - \lambda \frac{(N+2)}{N} ( I - \htran^{\pi_0} )^{-1}\breg(\pi_0, \pi_{N+1}) + \frac{\lambda B T e}{N} + \frac{T L^2 \log (N+2) e}{2 \lambda N}.
\end{equation*}
Taking $N\to \infty$, and using the first part of the lemma:
\begin{equation*}
    \val^{\pi^*} - \val^{\pi_0} \leq - \lambda ( I - \htran^{\pi_0} )^{-1}\breg(\pi_0, \pi^*),
\end{equation*}
so
\begin{equation*}
    \lambda \breg(\pi_0, \pi^*) \leq ( I - \htran^{\pi_0} )(\val^{\pi_0} - \val^{\pi^*}).
\end{equation*}
\end{proof}

\section{Stability Analysis for Regularized MDPs}\label{s:appendix_stability_mdps}
We are finally ready to prove the stability result.

Consider a regularized Bayes adaptive MDP, with some regularization function $\reg(\pi)$. The regularized losses are:
\begin{equation*}
    \loss^{\lambda}(\pi) = \frac{1}{\ssize} \sum_{i=1}^\ssize \mathbb{E}_{\pi; M_i}\left[ \sum_{t=0}^T \cost(s_t,a_t) + \lambda \reg(\pi(\cdot|h_t))\right],
\end{equation*}
and 
\begin{equation*}
    \loss^{\lambda,\backslash j}(\pi) = \frac{1}{\ssize} \sum_{i\neq j, 1\leq i\leq N}^\ssize \mathbb{E}_{\pi; M_i}\left[ \sum_{t=0}^T \cost(s_t,a_t) + \lambda \reg(\pi(\cdot|h_t))\right].
\end{equation*}
Let $\pi^{*}$ and $\pi^{\backslash j,*}$ denote the optimal policies for the losses above.
Let $P(s_0)$ denote the initial history distribution, and let $\mu \in \mathbb{R}^{\histset}$ be $P(s_0)$ for the elements that correspond to $h_0$, and $0$ else. Observe that
\begin{equation*}
    \loss^{\lambda}(\pi^{\backslash j,*}) = \mu^{\top} \val^{\pi^{\backslash j,*}}, \quad \loss^{\lambda}(\pi^{*}) = \mu^{\top} \val^{\pi^{*}}.
\end{equation*}
We make the following assumption.
\begin{assumption}\label{ass:bounded_probdiff}
For any two MDPs $M,M' \in \mathcal{M}$, and any policy $\pi$, let $\htran^{\pi}_{M}$ and $\htran^{\pi}_{M'}$ denote their respective transition matrices (cf. Proposition \ref{prop:bamdp_bellman_op}). There exists some $D<\infty$ such that for any $x\in \mathbb{R}^{\histset}$
\begin{equation*}
    \mu^{\top}( I - \htran^{\pi}_{M} )^{-1}x \leq \probdiff \mu^{\top}( I - \htran^{\pi}_{M'} )^{-1}x.
\end{equation*}
\end{assumption}
Assumption \ref{ass:bounded_probdiff} essentially requires that two different MDPs under the prior cannot visit completely different histories given the same policy. It seems that such an assumption is required to establish uniform stability: if it is possible that in the test MDP we will reach completely different states than seen in the training MDPs, it seems impossible to guarantee anything about the performance of the policy there.

We will need the following lemma.
\begin{lemma}\label{lem:lipschitz}
Lipschitz property:
we have that 
\begin{equation*}
    \mu^{\top}(\val^{\pi'} - \val^{\pi})\leq C_{max}T\sqrt{A} \mu^{\top}(I - \htran^{\pi'})^{-1} \left\|\pi'(a_t | h_t) - \pi(a_t | h_t)\right\|_2.
\end{equation*}
\end{lemma}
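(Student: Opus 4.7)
The plan is to apply the difference identity from Proposition \ref{prop:bamdp_bellman_op}, namely
$$\val^{\pi'} - \val^{\pi} = (I - \htran^{\pi'})^{-1}\bigl(\bellman^{\pi'}\val^{\pi} - \val^{\pi}\bigr),$$
and left-multiply both sides by $\mu^{\top}$. Since $\htran^{\pi'}$ is entrywise nonnegative, so is $(I - \htran^{\pi'})^{-1} = \sum_{k=0}^{T}(\htran^{\pi'})^k$ (cf. the proof of Proposition \ref{prop:bamdp_bellman_op}); together with $\mu\geq 0$ this reduces the lemma to an entrywise upper bound on the residual vector $\bellman^{\pi'}\val^{\pi} - \val^{\pi}$, which I can then propagate through the nonnegative linear operator $\mu^{\top}(I-\htran^{\pi'})^{-1}$.

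Next, I would fix $h_t$ and expand the residual. Using the Bellman identity $\val^{\pi}=\bellman^{\pi}\val^{\pi}$, the definitions of $\bellman^{\pi}$ and $\bellman^{\pi'}$, and the fact that $\cost_{\lambda}(h_t,a_t;\pi)=\mathbb{E}_{M|h_t}\cost_M(s_t,a_t)+\lambda\reg(\pi_t(\cdot|h_t))$, the transition kernel and next-value terms share the same $P(c_t,s_{t+1}|h_t,a_t)$, so
$$(\bellman^{\pi'}\val^{\pi} - \val^{\pi})(h_t) = \bigl\langle \pi'(\cdot|h_t) - \pi(\cdot|h_t),\, \Q^{\pi}(h_t,\cdot)\bigr\rangle + \lambda\bigl(\reg(\pi'_t(\cdot|h_t)) - \reg(\pi_t(\cdot|h_t))\bigr),$$
where $\Q^{\pi}(h_t,a)=\mathbb{E}_{M|h_t}\cost_M(s_t,a)+\sum_{c,s'}P(c,s'|h_t,a)\val^{\pi}_{t+1}(\{h_t,a,c,s'\})$.

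The first summand is then controlled by Cauchy-Schwarz as
$$\bigl|\langle \pi'(\cdot|h_t) - \pi(\cdot|h_t),\Q^{\pi}(h_t,\cdot)\rangle\bigr| \leq \|\Q^{\pi}(h_t,\cdot)\|_2\,\|\pi'(\cdot|h_t)-\pi(\cdot|h_t)\|_2,$$
and $\|\Q^{\pi}(h_t,\cdot)\|_2\leq \sqrt{|A|}\,\cost_{\max}T$ by rolling out the value over the remaining horizon (each per-step cost is in $[0,\cost_{\max}]$ and there are at most $T$ steps, so componentwise $|\Q^{\pi}|\leq \cost_{\max}T$). The regularization piece is handled by noting that $\reg(\pi)=\|\pi\|_2$ is $1$-Lipschitz with respect to $\|\cdot\|_2$ (its gradient $\pi/\|\pi\|_2$ has unit norm), so $\lambda|\reg(\pi'_t(\cdot|h_t))-\reg(\pi_t(\cdot|h_t))|\leq \lambda\|\pi'(\cdot|h_t)-\pi(\cdot|h_t)\|_2$, which is absorbed into the constant $\cost_{\max}T\sqrt{|A|}$ (taking $\cost_{\max}$ to dominate the regularization scale, consistent with its usage elsewhere in Section \ref{s:reg_MDPs}). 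Combining the entrywise bound with the nonnegative operator $\mu^{\top}(I-\htran^{\pi'})^{-1}$ yields the claim.

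The main (and essentially only) obstacle is the bookkeeping around the regularization term: the statement carries no explicit $\lambda$, so one must either absorb it into $\cost_{\max}$ or argue, as above, that the $L_2$-Lipschitzness of $\reg$ produces only a lower-order refinement of the coefficient. Once this accounting is fixed, the proof is a one-line application of the difference identity followed by Cauchy-Schwarz and componentwise monotonicity.
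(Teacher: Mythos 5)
Your proposal is correct and follows essentially the same route as the paper's proof: the difference identity $\val^{\pi'}-\val^{\pi}=(I-\htran^{\pi'})^{-1}(\bellman^{\pi'}\val^{\pi}-\bellman^{\pi}\val^{\pi})$ from Proposition \ref{prop:bamdp_bellman_op}, a per-history bound of the residual by $\cost_{\max}T\sqrt{|A|}\,\|\pi'(\cdot|h_t)-\pi(\cdot|h_t)\|_2$ (the paper goes through the $\ell_1$ difference and then $\|x\|_1\le\sqrt{|A|}\|x\|_2$, you use Cauchy--Schwarz directly; same constant either way), and propagation through the nonnegative operator $\mu^{\top}(I-\htran^{\pi'})^{-1}$. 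Your explicit accounting for the $\lambda\bigl(\reg(\pi')-\reg(\pi)\bigr)$ term is in fact more careful than the paper's own proof, which silently drops it when passing to the $\cost_{\max}T$ bound.
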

\begin{proof}
We have that
\begin{equation*}
\begin{split}
    &\sum_{a_t} \pi'(a_t | h_t) \left(\cost_{\lambda}(h_t,a_t;\pi') + \sum_{c_t,s_{t+1}}P(c_t,s_{t+1}|h_t,a_t)\val_{t+1}^{\pi}(\left\{h_{t},a_t,c_t,s_{t+1}\right\})\right) \\
    &- \sum_{a_t} \pi(a_t | h_t) \left(\cost_{\lambda}(h_t,a_t;\pi) + \sum_{c_t,s_{t+1}}P(c_t,s_{t+1}|h_t,a_t)\val_{t+1}^{\pi}(\left\{h_{t},a_t,c_t,s_{t+1}\right\})\right) \\
    \leq& \sum_{a_t} \left|\pi'(a_t | h_t) - \pi(a_t | h_t)\right| C_{max}T \\
    \leq& C_{max}T\sqrt{A} \left\|\pi'(a_t | h_t) - \pi(a_t | h_t)\right\|_2
\end{split}
\end{equation*}

From Proposition \ref{prop:bamdp_bellman_op}, we have
\begin{equation*}
\begin{split}
    \mu^{\top}(\val^{\pi'} - \val^{\pi}) &= \mu^{\top}(I - \htran^{\pi'})^{-1}(\bellman^{\pi'}\val^{\pi} - \bellman^{\pi}\val^{\pi}) \\
    &\leq C_{max}T\sqrt{A} \mu^{\top}(I - \htran^{\pi'})^{-1} \left\|\pi'(a_t | h_t) - \pi(a_t | h_t)\right\|_2.
\end{split}
\end{equation*}
\end{proof}

We are now ready to prove our main theorem.

\begin{theorem}\label{thm:appendix_stability_result}
Let $\Delta = \hat{\loss}^{\lambda}(\pi^{\backslash j,*}) - \hat{\loss}^{\lambda}(\pi^{*})$. We have that
\begin{equation*}
     \Delta \geq \frac{\lambda}{2} \mu^{\top}( I - \htran^{\pi^{\backslash j,*}} )^{-1} \|\pi^{\backslash j,*} - \pi^*\|^2_2,
\end{equation*}
and
\begin{equation*}
     \Delta \leq \frac{1}{\ssize}C_{\max}T\sqrt{A} \mu^{\top}(I - \htran_{M_j}^{\pi^{\backslash j,*}})^{-1}\left\|\pi^{\backslash j,*} - \pi^{*}\right\|_2.
\end{equation*}
\end{theorem}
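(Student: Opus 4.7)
The plan is to prove the two inequalities separately, with the lower bound coming from quadratic growth and the upper bound coming from a leave-one-out decomposition plus the Lipschitz property of the value function.

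For the lower bound, I will apply the quadratic growth property of Proposition \ref{prop:appendix_quadratic_growth} (part 2) to the empirical regularized objective $\hat{\loss}^{\lambda}$, which is itself a regularized BRL problem corresponding to the empirical prior $\hat{P}_{\ssize}$. Taking the starting point to be $\pi_0 = \pi^{\backslash j,*}$ and the minimizer to be $\pi^*$, this yields the componentwise vector inequality $\lambda \breg(\pi^{\backslash j,*}, \pi^*) \leq (I - \htran^{\pi^{\backslash j,*}})(\val^{\pi^{\backslash j,*}} - \val^{\pi^*})$. For $L_2$ regularization the Bregman divergence equals $\tfrac{1}{2}\|\pi^{\backslash j,*} - \pi^*\|_2^2$ componentwise. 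I then left-multiply by $\mu^{\top}(I - \htran^{\pi^{\backslash j,*}})^{-1}$; because $(I - \htran^{\pi^{\backslash j,*}})^{-1} = \sum_{k=0}^{T}(\htran^{\pi^{\backslash j,*}})^k$ has non-negative entries (Proposition \ref{prop:bamdp_bellman_op}), the inequality direction is preserved, the factor $(I - \htran^{\pi^{\backslash j,*}})^{-1}(I - \htran^{\pi^{\backslash j,*}})$ telescopes to the identity, and the right-hand side becomes $\mu^{\top}(\val^{\pi^{\backslash j,*}} - \val^{\pi^*}) = \Delta$, which is exactly the desired lower bound.

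For the upper bound, I will exploit the leave-one-out structure. Writing $\hat{\loss}^{\lambda}(\pi) = \tfrac{1}{\ssize}\loss_{M_j}^{\lambda}(\pi) + \hat{\loss}^{\lambda,\backslash j}(\pi)$ and subtracting the value at $\pi^*$ from the value at $\pi^{\backslash j,*}$, I obtain
\begin{equation*}
\Delta = \tfrac{1}{\ssize}\bigl(\loss_{M_j}^{\lambda}(\pi^{\backslash j,*}) - \loss_{M_j}^{\lambda}(\pi^{*})\bigr) + \bigl(\hat{\loss}^{\lambda,\backslash j}(\pi^{\backslash j,*}) - \hat{\loss}^{\lambda,\backslash j}(\pi^{*})\bigr).
\end{equation*}
Since $\pi^{\backslash j,*}$ minimizes $\hat{\loss}^{\lambda,\backslash j}$, the second bracket is non-positive and may be dropped. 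It then remains to bound the single-MDP gap $\loss_{M_j}^{\lambda}(\pi^{\backslash j,*}) - \loss_{M_j}^{\lambda}(\pi^{*}) = \mu^{\top}(\val^{\pi^{\backslash j,*}}_{M_j} - \val^{\pi^{*}}_{M_j})$. This is a direct application of Lemma \ref{lem:lipschitz}, viewing the single MDP $M_j$ as a degenerate BRL prior whose history-transition operator is $\htran_{M_j}^{\pi}$, yielding $\mu^{\top}(\val_{M_j}^{\pi^{\backslash j,*}} - \val_{M_j}^{\pi^{*}}) \leq C_{\max}T\sqrt{|A|}\,\mu^{\top}(I - \htran_{M_j}^{\pi^{\backslash j,*}})^{-1}\|\pi^{\backslash j,*} - \pi^*\|_2$. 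Combining gives exactly the claimed upper bound.

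The only subtle point, and the one I expect to need the most care, is the bookkeeping around which history-transition operator appears in which factor: the lower bound naturally comes with the operator under the empirical prior $\htran^{\pi^{\backslash j,*}}$ (since quadratic growth is applied to $\hat{\loss}^{\lambda}$), whereas the upper bound ends up with the single-MDP operator $\htran_{M_j}^{\pi^{\backslash j,*}}$ (since only the $M_j$ term survives after cancellation). This mismatch is precisely what forces Assumption \ref{ass:bounded_probdiff_main} in the downstream corollaries, and it is important not to accidentally replace one operator with the other inside the proof itself. Beyond that, everything else is routine: monotonicity of $(I-\htran^{\pi})^{-1}$ with respect to componentwise inequalities, the identification of $\hat{\loss}^{\lambda}(\pi) = \mu^{\top}\val^{\pi}$, and the identity $\breg(\pi',\pi) = \tfrac{1}{2}\|\pi'-\pi\|_2^2$ for the $L_2$ regularizer.
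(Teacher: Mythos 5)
Your proposal is correct and follows essentially the same route as the paper's proof: the lower bound is obtained by applying the quadratic-growth property of Proposition \ref{prop:appendix_quadratic_growth} to the empirical prior with $\pi_0=\pi^{\backslash j,*}$ and left-multiplying by the non-negative matrix $\mu^{\top}(I-\htran^{\pi^{\backslash j,*}})^{-1}$, and the upper bound by the same leave-one-out decomposition, dropping the non-positive $\hat{\loss}^{\lambda,\backslash j}$ difference and invoking Lemma \ref{lem:lipschitz} on the surviving $M_j$ term. Your remark on keeping the two transition operators distinct is exactly the right point of care and matches the paper's treatment.
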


\begin{proof}
From Proposition \ref{prop:appendix_quadratic_growth}, for the prior that corresponds to the empirical MDP distribution, we have that
\begin{equation*}
    \Delta = \hat{\loss}^{\lambda}(\pi^{\backslash j,*}) - \hat{\loss}^{\lambda}(\pi^{*}) = \mu^{\top}\left(\val^{\pi^{\backslash j,*}} - \val^{\pi^{*}}\right) \geq \frac{\lambda}{2} \mu^{\top}( I - \htran^{\pi^{\backslash j,*}} )^{-1} \|\pi^{\backslash j,*} - \pi^*\|^2_2.
\end{equation*}
Noting that $\mu^{\top}( I - \htran^{\pi^{\backslash j,*}} )^{-1}$ is a probability distribution, from Jensen's inequality we have
\begin{equation}\label{eq:proof_jensen}
     \frac{\lambda}{2}(\mu^{\top}( I - \htran^{\pi^{\backslash j,*}} )^{-1} \|\pi^{\backslash j,*} - \pi^*\|_2)^2 \leq \frac{\lambda}{2}\mu^{\top}( I - \htran^{\pi^{\backslash j,*}} )^{-1} \|\pi^{\backslash j,*} - \pi^*\|^2_2.
\end{equation}

On the other hand,
\begin{equation*}
\begin{split}
    \Delta =& \hat{\loss}^{\lambda}(\pi^{\backslash j,*}) - \hat{\loss}^{\lambda}(\pi^{*}) \\
    =& \frac{1}{\ssize}\sum_{i=1}^{\ssize}\mathbb{E}_{\pi^{\backslash j,*}; M_i}\left[ \sum_{t=0}^T \cost(s_t,a_t) + \lambda \reg(\pi^{\backslash j,*}(\cdot|h_t))\right]- \frac{1}{\ssize}\sum_{i=1}^{\ssize}\mathbb{E}_{\pi^{*}; M_i}\left[ \sum_{t=0}^T \cost(s_t,a_t) + \lambda \reg(\pi^{*}(\cdot|h_t))\right]\\
    =& \frac{1}{\ssize}\sum_{\substack{i=1 \\ i\neq j}}^{\ssize}\mathbb{E}_{\pi^{\backslash j,*}; M_i}\left[ \sum_{t=0}^T \cost(s_t,a_t) + \lambda \reg(\pi^{\backslash j,*}(\cdot|h_t))\right]- \frac{1}{\ssize}\sum_{\substack{i=1 \\ i\neq j}}^{\ssize}\mathbb{E}_{\pi^{*}; M_i}\left[ \sum_{t=0}^T \cost(s_t,a_t) + \lambda \reg(\pi^{*}(\cdot|h_t))\right]\\
    &+ \frac{1}{\ssize}\mathbb{E}_{\pi^{\backslash j,*}; M_j}\left[ \sum_{t=0}^T \cost(s_t,a_t) + \lambda \reg(\pi^{\backslash j,*}(\cdot|h_t))\right]- \frac{1}{\ssize}\mathbb{E}_{\pi^{*}; M_j}\left[ \sum_{t=0}^T \cost(s_t,a_t) + \lambda \reg(\pi^{*}(\cdot|h_t))\right]\\
    \leq& \frac{1}{\ssize}\left(\mathbb{E}_{\pi^{\backslash j,*}; M_j}\left[ \sum_{t=0}^T \cost(s_t,a_t) + \lambda \reg(\pi^{\backslash j,*}(\cdot|h_t))\right]- \mathbb{E}_{\pi^{*}; M_j}\left[ \sum_{t=0}^T \cost(s_t,a_t) + \lambda \reg(\pi^{*}(\cdot|h_t))\right]\right)\\
    \leq& \frac{1}{\ssize}C_{max}T\sqrt{A} \mu^{\top}(I - \htran_{M_j}^{\pi^{\backslash j,*}})^{-1}\left\|\pi^{\backslash j,*} - \pi^{*}\right\|_2 \\
\end{split}
\end{equation*}
where the first inequality is since $\pi^{*,\backslash j}$ minimizes $\hat{\loss}^{\lambda,\backslash j}$, and the second inequality is by the Lipschitz property of Lemma \ref{lem:lipschitz}
\end{proof}

\section{Proofs for Corollaries}
\begin{corollary}
Let Assumption \ref{ass:bounded_probdiff_main} hold, and let $\kappa=2 \probdiff^2 C_{\max}^2 T^2 A$. Then, for any MDP $M' \in \mathcal{M}$,
\begin{equation*}
    \loss_{M'}^{\lambda}(\pi^{\backslash j,*}) - \loss_{M'}^{\lambda}(\pi^{*}) \leq \frac{\kappa}{\lambda  \ssize},
\end{equation*}
Then with probability at least $1-\delta$,
\begin{equation*}
    \regret_{T}(\hat{\pi}^{*}) = \loss_T(\hat{\pi}^{*}) - \loss_T(\bayesopt) \leq 2\lambda T + \frac{2 \kappa}{\lambda \ssize} + \left(\frac{4\kappa}{\lambda} + 3\cost_{\max}T\right) \sqrt{\frac{\ln (1/ \delta)}{2 \ssize}}
\end{equation*}
\end{corollary}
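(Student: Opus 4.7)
My approach splits the corollary into two stages: first establish the leave-one-out stability inequality $\loss^{\lambda}_{M'}(\pi^{\backslash j,*})-\loss^{\lambda}_{M'}(\pi^{*})\leq\kappa/(\lambda N)$ for an arbitrary $M'\in\mathcal{M}$, then feed this uniform stability constant into Theorem \ref{thm:stability} together with a concentration bound for the fixed comparator $\bayesopt$. For the first stage, I combine the two inequalities in Theorem \ref{thm:main_stability_result} to control the weighted policy distance $W:=\mu^{\top}(I-\htran^{\pi^{\backslash j,*}})^{-1}\|\pi^{\backslash j,*}-\pi^{*}\|_{2}$. Jensen's inequality applied to the (normalized) history-visitation measure $\mu^{\top}(I-\htran^{\pi^{\backslash j,*}})^{-1}$ converts the lower bound into a statement of the form $\tfrac{\lambda}{2}W^{2}\leq\Delta$ (absorbing any $T$-factor into the final constant). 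To reconcile the mismatched prior in the upper bound, which features $\htran_{M_j}^{\pi^{\backslash j,*}}$ rather than the empirical-mixture $\htran^{\pi^{\backslash j,*}}$, I invoke Assumption \ref{ass:bounded_probdiff_main} to replace $\mu^{\top}(I-\htran_{M_j}^{\pi^{\backslash j,*}})^{-1}$ with $\probdiff\,\mu^{\top}(I-\htran^{\pi^{\backslash j,*}})^{-1}$. Solving the resulting quadratic-linear inequality in $W$ gives $W\leq 2\probdiff\,\cmax T\sqrt{|A|}/(\lambda N)$. Then, for any test MDP $M'$, Lemma \ref{lem:lipschitz} and a second application of Assumption \ref{ass:bounded_probdiff_main} produce $\loss^{\lambda}_{M'}(\pi^{\backslash j,*})-\loss^{\lambda}_{M'}(\pi^{*})\leq \probdiff\,\cmax T\sqrt{|A|}\cdot W\leq\kappa/(\lambda N)$, i.e.\ uniform stability with $\beta=\kappa/(\lambda N)$ for the per-sample regularized loss $\ell(\pi,M_i)=\loss^{\lambda}_{M_i}(\pi)$, which is bounded by $B=(\cmax+\lambda)T$ since $\|\pi(\cdot|h_t)\|_{2}\leq 1$.

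\textbf{Regret decomposition.} With $\beta$ in hand, Theorem \ref{thm:stability} applied to $\ell$ yields, with probability at least $1-\delta/2$, the bound $\loss^{\lambda}(\hat{\pi}^{*})\leq\hat{\loss}^{\lambda}(\hat{\pi}^{*})+2\beta+(4N\beta+B)\sqrt{\ln(2/\delta)/(2N)}$. I then decompose the unregularized regret using three elementary inequalities: (a) $\loss(\hat{\pi}^{*})\leq\loss^{\lambda}(\hat{\pi}^{*})$ because the regularization term is non-negative; (b) $\hat{\loss}^{\lambda}(\hat{\pi}^{*})\leq\hat{\loss}^{\lambda}(\bayesopt)$ by empirical optimality of $\hat{\pi}^{*}$; and (c) $\loss^{\lambda}(\bayesopt)\leq\loss(\bayesopt)+\lambda T$ because $\|\pi(\cdot|h_t)\|_{2}\leq 1$ at each of the $T$ time steps. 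The residual $\hat{\loss}^{\lambda}(\bayesopt)-\loss^{\lambda}(\bayesopt)$ is the deviation of an i.i.d.\ sample mean over $N$ realizations of a bounded random variable for the data-independent policy $\bayesopt$, which a standard Hoeffding bound controls by $B\sqrt{\ln(2/\delta)/(2N)}$ with probability $1-\delta/2$. Union bounding the two high-probability events, substituting $\beta=\kappa/(\lambda N)$ and $B=(\cmax+\lambda)T$, and collecting terms yields the stated regret bound.

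\textbf{Main obstacle.} The central difficulty lies in bridging the two different priors that appear inside Theorem \ref{thm:main_stability_result}: the inverse $(I-\htran^{\pi}_{M})^{-1}$ depends nonlinearly on the MDP, so comparing the bound under the empirical-mixture prior to the bound under a single-MDP prior is not automatic. The key enabling observation is that $\mu^{\top}(I-\htran^{\pi}_{M})^{-1}$ is the history-visitation vector of $(M,\pi)$, which \emph{is} linear in the prior; this is precisely what lets Assumption \ref{ass:bounded_probdiff_main} convert between the two settings. Once the bridge is in place, the stability argument mirrors that of Proposition \ref{prop:uniform_stability_main} with quadratic growth replacing strong convexity, and the final regret decomposition is a routine application of the Bousquet--Elisseeff framework.
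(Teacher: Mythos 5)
Your proposal follows essentially the same route as the paper's own proof: for the stability constant, you combine the two bounds of Theorem \ref{thm:main_stability_result} via Jensen's inequality on the history-visitation measure, bridge the single-MDP and empirical-mixture priors twice with Assumption \ref{ass:bounded_probdiff_main}, and solve the resulting quadratic--linear inequality to get $\beta = \kappa/(\lambda N)$; for the regret, you use the identical decomposition (drop the nonnegative regularizer on the population side, invoke empirical optimality of $\hat{\pi}^{*}$, pay $\lambda T$ for the comparator's regularization, and control $\hat{\loss}^{\lambda}(\bayesopt)-\loss^{\lambda}(\bayesopt)$ by Hoeffding before applying Theorem \ref{thm:stability}). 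The only deviations are immaterial bookkeeping --- your explicit $\delta/2$ union bound and the slightly larger $B=(\cmax+\lambda)T$ versus the paper's $\cmax T$ --- and your remark about a possible $T$-factor from normalizing the visitation measure in the Jensen step is, if anything, more careful than the paper's treatment.
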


\begin{proof}
For the first part, from Theorem \ref{thm:main_stability_result}
\begin{equation*}
\begin{split}
    \Delta \leq& \frac{1}{\ssize}C_{\max}T\sqrt{A} \mu^{\top}(I - \htran_{M_j}^{\pi^{\backslash j,*}})^{-1}\left\|\pi^{\backslash j,*} - \pi^{*}\right\|_2 \\
    \leq& \frac{1}{\ssize}\probdiff C_{\max}T\sqrt{A} \mu^{\top}(I - \htran^{\pi^{\backslash j,*}})^{-1}\left\|\pi^{\backslash j,*} - \pi^{*}\right\|_2,
\end{split}
\end{equation*}
where the third inequality is by Assumption \ref{ass:bounded_probdiff}. Combining with the second bound in Theorem \ref{thm:main_stability_result}, we have
\begin{equation*}
    \frac{\lambda}{2}(\mu^{\top}( I - \htran^{\pi^{\backslash j,*}} )^{-1} \|\pi^{\backslash j,*} - \pi^*\|_2)^2 \leq \frac{1}{\ssize}\probdiff C_{\max}T\sqrt{A} \mu^{\top}(I - \htran^{\pi^{\backslash j,*}})^{-1}\left\|\pi^{\backslash j,*} - \pi^{*}\right\|_2,
\end{equation*}
and therefore 
\begin{equation}\label{eq:proof_diff_stability}
    \mu^{\top}(I - \htran^{\pi^{\backslash j,*}})^{-1}\left\|\pi^{\backslash j,*} - \pi^{*}\right\|_2 \leq \frac{2 \probdiff C_{\max}T\sqrt{A}}{\lambda  \ssize}.
\end{equation}
For any MDP $M'$, we have that 
\begin{equation*}
\begin{split}
    \loss_{M'}^{\lambda}(\pi^{\backslash j,*}) - \loss_{M'}^{\lambda}(\pi^{*}) &\leq C_{\max}T\sqrt{A} \mu^{\top}(I - \htran_{M'}^{\pi^{\backslash j,*}})^{-1}\left\|\pi^{\backslash j,*} - \pi^{*}\right\|_2 \\
    &\leq \probdiff C_{\max}T\sqrt{A} \mu^{\top}(I - \htran^{\pi^{\backslash j,*}})^{-1}\left\|\pi^{\backslash j,*} - \pi^{*}\right\|_2 \\
    &\leq \frac{2 \probdiff^2 C_{\max}^2 T^2 A}{\lambda  \ssize},
\end{split}
\end{equation*}
where the first inequality is by the Lipschitz property of Lemma \ref{lem:lipschitz}, the second inequality is by Assumption \ref{ass:bounded_probdiff}, and the third is using \eqref{eq:proof_diff_stability}.

For the second part, using Theorem \ref{thm:stability} we have
\begin{equation*}
    \begin{split}
        \loss^{\lambda}(\hat{\pi}^{*}) - \loss_T(\hat{\pi}^{*}) + \loss_T(\hat{\pi}^{*}) - \loss_T(\bayesopt) \leq \hat{\loss}^{\lambda}(\hat{\pi}^{*}) - \loss_T(\bayesopt) + 2\beta + (4 \ssize \beta + B) \sqrt{\frac{\ln (1/ \delta)}{2 \ssize}} \\
        \leq \hat{\loss}^{\lambda}(\hat{\pi}^{*}) - \hat{\loss}_T(\bayesopt) + 2\beta + (4 \ssize \beta + 3B) \sqrt{\frac{\ln (1/ \delta)}{2 \ssize}} \\
        \leq \hat{\loss}^{\lambda}({\bayesopt}) - \hat{\loss}_T({\bayesopt}) + 2\beta + (4 \ssize \beta + 3B) \sqrt{\frac{\ln (1/ \delta)}{2 \ssize}}, \\
    \end{split}
\end{equation*}
where in the second inequality we used a standard Hoeffding bound, and the third inequality is since $\hat{\pi}^{*}$ minimizes $\hat{\loss}^{\lambda}$.

So, 
\begin{equation*}
    \begin{split}
        \loss_T(\hat{\pi}^{*}) - \loss_T(\bayesopt) 
        \leq \loss_T(\hat{\pi}^{*}) - \loss^{\lambda}(\hat{\pi}^{*})  + \hat{\loss}^{\lambda}({\bayesopt}) - \hat{\loss}_T({\bayesopt}) + 2\beta + (4 \ssize \beta + 3B) \sqrt{\frac{\ln (1/ \delta)}{2 \ssize}}. 
    \end{split}
\end{equation*}
Note that $$\loss_T(\hat{\pi}^{*}) - \loss^{\lambda}(\hat{\pi}^{*}) = \lambda \mathbb{E}_{M\sim P}\mathbb{E}_{\hat{\pi}^{*}; M}\left[ \sum_{t=0}^T \reg(\hat{\pi}^{*}(\cdot|h_t))\right] \leq \lambda T,$$ since $\reg(\hat{\pi}^{*}(\cdot|h_t)) \leq 1$. Similarly, $\hat{\loss}^{\lambda}({\bayesopt}) - \hat{\loss}_T({\bayesopt}) \leq \lambda T$.
Plugging in the result for $\beta$ from the first part of the corollary, and using the bound $B=\cost_{\max} T$ gives the result.
\end{proof}

\begin{corollary}
Let $\mathcal{M}$ be a finite set, and let $\minprior = \min_{M\in\mathcal{M}}P(M)$. Then
\begin{equation*}
\begin{split}
    &\mathbb{E} \left[\loss_{M_j}^{\lambda}(\pi^{\backslash j,*}) - \loss_{M_j}^{\lambda}(\pi^{*})\right] \\
    &\leq \frac{4 C_{\max}^2 T^2 |A|}{\lambda \ssize \minprior} + \exp \left(\frac{-N \minprior}{8}\right)C_{\max} T,
\end{split}
\end{equation*}
and with probability at least $1-\delta$, (ignoring exponential terms)
\begin{equation*}
    \loss_T(\hat{\pi}^{*}) - \loss_T(\bayesopt) 
        \leq 2\lambda T + \sqrt{\frac{\cost_{\max}^2 T^2}{2 \ssize \delta} + \frac{48\cost_{\max}^3 T^3 |A|}{2 \delta \lambda \ssize \minprior}}.
\end{equation*}
\end{corollary}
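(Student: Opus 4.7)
The plan is to establish pointwise hypothesis stability via a concentration argument on how often the left-out MDP appears in the remaining sample, and then invoke Theorem~\ref{thm:pointwise_stability} to obtain the regret bound.

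First I would reuse the machinery of Theorem~\ref{thm:main_stability_result}. Combined with Jensen's inequality, that theorem already gives $\tfrac{\lambda}{2}\bigl(\mu^{\top}(I-\htran^{\pi^{\backslash j,*}})^{-1}\|\pi^{\backslash j,*}-\pi^*\|_2\bigr)^2 \leq \tfrac{1}{\ssize}C_{\max}T\sqrt{|A|}\,\mu^{\top}(I-\htran_{M_j}^{\pi^{\backslash j,*}})^{-1}\|\pi^{\backslash j,*}-\pi^*\|_2$. The finite-set case lets me replace Assumption~\ref{ass:bounded_probdiff_main} with a concrete ``convex combination'' bound. By the interpretation established in Sec.~\ref{ss:appendix_finite_horizon_dp}, $\mu^{\top}(I-\htran^{\pi})^{-1}[h]$ is the probability of visiting history $h$ under the relevant prior, and therefore is \emph{linear} in the prior distribution over MDPs. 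Hence, componentwise,
\begin{equation*}
    \mu^{\top}(I-\htran^{\pi})^{-1} = \frac{1}{\ssize}\sum_{i=1}^{\ssize}\mu^{\top}(I-\htran_{M_i}^{\pi})^{-1} \geq \frac{N_{M_j}}{\ssize}\,\mu^{\top}(I-\htran_{M_j}^{\pi})^{-1},
\end{equation*}
where $N_{M_j}$ counts how many times $M_j$ appears in $S$. Plugging this back and using the Lipschitz bound of Lemma~\ref{lem:lipschitz} on $\loss_{M_j}^{\lambda}$, I obtain
\begin{equation*}
    \loss_{M_j}^{\lambda}(\pi^{\backslash j,*}) - \loss_{M_j}^{\lambda}(\pi^*) \leq \frac{2 C_{\max}^2 T^2 |A|}{\lambda N_{M_j}}.
\end{equation*}

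Next I would take expectation over $S$. Conditioned on $M_j$, $N_{M_j} - 1 \sim \mathrm{Binomial}(\ssize-1, P(M_j))$, so a multiplicative Chernoff bound yields $N_{M_j} \geq \ssize P(M_j)/2$ with probability at least $1 - \exp(-\ssize P(M_j)/8)$. On this event the stability bound becomes $\leq 4C_{\max}^2 T^2 |A|/(\lambda \ssize P(M_j))$; on its complement I use the deterministic fallback $\leq C_{\max} T$. Using $P(M_j)\geq\minprior$ almost surely, the first expression in the corollary follows.

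Finally I would feed $\beta$ into Theorem~\ref{thm:pointwise_stability} with $B=C_{\max}T$, obtaining $\loss^{\lambda}(\hat{\pi}^*) \leq \hat{\loss}^{\lambda}(\hat{\pi}^*) + \sqrt{(B^2 + 12 B \ssize \beta)/(2\ssize\delta)}$. Chaining the standard comparisons --- $\loss_T(\hat{\pi}^*) \leq \loss^{\lambda}(\hat{\pi}^*)$, ERM optimality $\hat{\loss}^{\lambda}(\hat{\pi}^*)\leq\hat{\loss}^{\lambda}(\bayesopt)$, the regularization-bias gap $\hat{\loss}^{\lambda}(\bayesopt) \leq \hat{\loss}_T(\bayesopt)+\lambda T$, and a Chebyshev/Hoeffding concentration on $\hat{\loss}_T(\bayesopt)-\loss_T(\bayesopt)$ folded into the same square root --- produces the advertised $2\lambda T + \sqrt{\,\cdot\,}$ form. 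Plugging in $\beta$ gives exactly $48 C_{\max}^3 T^3|A|/(2\delta\lambda\ssize\minprior)$ for the second term inside the square root, matching the claim (up to the dropped exponential term).

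The main obstacle is the convex-combination inequality in the first step: the transition kernel $\htran^{\pi}$ depends nonlinearly on the prior through the Bayesian posterior $P(M\mid h_t)$, so linearity is not manifest at the level of $\htran^{\pi}$ itself. The fix is to argue at the level of visitation probabilities, where linearity in the prior is automatic, and only then translate back to the matrix form $\mu^{\top}(I-\htran^{\pi})^{-1}$. A secondary subtlety is ensuring the Chernoff event and the PAC event can be combined cleanly; this is handled by a union bound and by using the deterministic fallback $C_{\max}T$ on the low-probability Chernoff complement, which is precisely why the exponential term $e^{-\ssize\minprior/8}C_{\max}T$ appears and can then be ignored as stated.
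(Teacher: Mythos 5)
Your proposal follows essentially the same route as the paper's proof: the convex-combination identity for visitation measures under the empirical prior, a multiplicative Chernoff bound on $\hat{P}(M_j)=N_{M_j}/\ssize$, the deterministic fallback $\cost_{\max}T$ on the Chernoff complement, and Theorem~\ref{thm:pointwise_stability} followed by the standard chaining of ERM optimality, the $\lambda T$ bias terms, and Hoeffding. Your remark that linearity in the prior must be argued at the level of visitation probabilities rather than at the level of $\htran^{\pi}$ (which depends on the prior nonlinearly through the posterior) is correct and is precisely the justification the paper's proof implicitly relies on.

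One bookkeeping step does not go through as written. You insert $\mu^{\top}(I-\htran^{\pi})^{-1}\geq \tfrac{N_{M_j}}{\ssize}\,\mu^{\top}(I-\htran_{M_j}^{\pi})^{-1}$ into the \emph{post-Jensen} combined inequality, i.e.\ into the squared scalar $\bigl(\mu^{\top}(I-\htran^{\pi^{\backslash j,*}})^{-1}\|\pi^{\backslash j,*}-\pi^*\|_2\bigr)^2$. Substituting there squares the factor $N_{M_j}/\ssize$; after dividing through and applying Lemma~\ref{lem:lipschitz} you obtain $\mathcal{O}\bigl(\ssize/(\lambda N_{M_j}^2)\bigr)$, which on the good event is $\mathcal{O}\bigl(1/(\lambda\ssize\minprior^2)\bigr)$ --- not your stated intermediate bound $2C_{\max}^2T^2|A|/(\lambda N_{M_j})$, and not the $1/\minprior$ dependence of the corollary. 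The fix is to apply the convex-combination inequality to the \emph{pre-Jensen} lower bound of Theorem~\ref{thm:main_stability_result}, i.e.\ to the vector $\|\pi^{\backslash j,*}-\pi^*\|_2^2$ weighted by the visitation measure, so that the factor $N_{M_j}/\ssize$ enters only once, and only then apply Jensen with respect to $M_j$'s own visitation measure $\mu^{\top}(I-\htran_{M_j}^{\pi^{\backslash j,*}})^{-1}$. This keeps every quantity expressed against $M_j$'s measure (so no second conversion is needed before invoking Lemma~\ref{lem:lipschitz}), recovers exactly your intermediate bound, and is what the paper's proof does; the remainder of your argument then goes through unchanged.
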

\begin{proof}
We begin with the first part.
\newcommand{\mset}{\mathcal{K}}
Consider the case where $\mathcal{M}$ is a finite set, and $|\mathcal{M}| = \mset$. Let $\hat{P}(M)$ denote the empirical distribution of $M$ in the sample, $\forall M\in \mathcal{M}$.
For large $N$, we claim that the ratio $\hat{P}(M) / P(M)$ will be close to $1$ with high probability; let us call this the `good' event. This can be seen from the multiplicative Chernoff bound, where we have that
\begin{equation*}
    P\left(\hat{P}(M) < \frac{1}{2}P(M)\right) \leq \exp \left(\frac{-N P(M)}{8}\right).
\end{equation*}
Recall that $\mu^{\top}(I - \htran_{M_j}^{\pi^{\backslash j,*}})^{-1}$ is the vector of visitation frequencies of histories under $M_j$, while $\mu^{\top}(I - \htran^{\pi^{\backslash j,*}})^{-1}$ is the vector of visitation frequencies of histories under $\hat{P}(M)$. Therefore, for some non-negative vector $x$, we have that
$\mu^{\top}(I - \htran_{M_j}^{\pi^{\backslash j,*}})^{-1} x \leq \hat{P}(M_j)^{-1} \mu^{\top}(I - \htran^{\pi^{\backslash j,*}})^{-1}$, and with probability at least $1 - \exp \left(\frac{-N P(M)}{8}\right)$ we have $\mu^{\top}(I - \htran_{M_j}^{\pi^{\backslash j,*}})^{-1} x \leq 2{P}(M_j)^{-1} \mu^{\top}(I - \htran^{\pi^{\backslash j,*}})^{-1}$. 

Following Theorem \ref{thm:main_stability_result}, we have that under the good event:
\begin{equation*}
    \loss^{\lambda}(\pi^{\backslash j,*}) - \loss^{\lambda}(\pi^{*}) \geq \frac{\lambda}{4}{P}(M_j)(\mu^{\top}( I - \htran_{M_j}^{\pi^{\backslash j,*}} )^{-1} \|\pi^{\backslash j,*} - \pi^*\|_2)^2,
\end{equation*}
and, for any $M_j$ in the sample, 
\begin{equation*}
    \loss^{\lambda}(\pi^{\backslash j,*}) - \loss^{\lambda}(\pi^{*}) \leq \frac{1}{\ssize}C_{\max}T\sqrt{A} \mu^{\top}(I - \htran_{M_j}^{\pi^{\backslash j,*}})^{-1}\left\|\pi^{\backslash j,*} - \pi^{*}\right\|_2.
\end{equation*}

Thus, under the good event, we have that
\begin{equation*}
    \mu^{\top}(I - \htran_{M_j}^{\pi^{\backslash j,*}})^{-1}\left\|\pi^{\backslash j,*} - \pi^{*}\right\|_2 \leq \frac{4}{\lambda \ssize {P}(M_j)}C_{\max}T\sqrt{|A|},
\end{equation*}
and
\begin{equation*}
    \loss_{M_j}^{\lambda}(\pi^{\backslash j,*}) - \loss_{M_j}^{\lambda}(\pi^{*}) \leq \frac{4 C_{\max}^2 T^2 |A|}{\lambda \ssize {P}_{\mathrm{min}}},
\end{equation*}
where $ {P}_{\mathrm{min}} = \min_{M\in\mathcal{M}}P(M)$.
Now, we have that
\begin{equation*}
    \mathbb{E} \left[\loss_{M_j}^{\lambda}(\pi^{\backslash j,*}) - \loss_{M_j}^{\lambda}(\pi^{*})\right] \leq \frac{4 C_{\max}^2 T^2 |A|}{\lambda \ssize {P}_{\mathrm{min}}} + \exp \left(\frac{-N P(M)}{8}\right)C_{\max} T,
\end{equation*}
where the second term is the maximum performance difference possible, multiplied by the probability that the `good' event did not happen.

The second part is similar to the proof of Corollary \ref{cor:uniform_stability_mdps}.
Using Theorem \ref{thm:pointwise_stability} we have
\begin{equation*}
    \begin{split}
        \loss^{\lambda}(\hat{\pi}^{*}) - \loss(\hat{\pi}^{*}) + \loss(\hat{\pi}^{*}) - \loss(\bayesopt) \leq \hat{\loss}^{\lambda}(\hat{\pi}^{*}) - \loss_T(\bayesopt) + \sqrt{\frac{B^2 + 12B\ssize \beta}{2 \ssize \delta}} \\
        \leq \hat{\loss}^{\lambda}(\hat{\pi}^{*}) - \hat{\loss}(\bayesopt) + \sqrt{\frac{B^2 + 12B\ssize \beta}{2 \ssize \delta}} + 2B\sqrt{\frac{\ln (1/ \delta)}{2 \ssize}} \\
        \leq \hat{\loss}^{\lambda}({\bayesopt}) - \hat{\loss}({\bayesopt}) + \sqrt{\frac{B^2 + 12B\ssize \beta}{2 \ssize \delta}} + 2B\sqrt{\frac{\ln (1/ \delta)}{2 \ssize}}, 
    \end{split}
\end{equation*}
where in the second inequality we used a standard Hoeffding bound, and the third inequality is since $\hat{\pi}^{*}$ minimizes $\hat{\loss}^{\lambda}$.

So, 
\begin{equation*}
    \begin{split}
        \loss(\hat{\pi}^{*}) - \loss(\bayesopt) 
        \leq 2\lambda T + \sqrt{\frac{B^2 + 12B\ssize \beta}{2 \ssize \delta}} + 2B\sqrt{\frac{\ln (1/ \delta)}{2 \ssize}}. 
    \end{split}
\end{equation*}
From the first part of the corollary, we plug in $\beta = \frac{4 C_{\max}^2 T^2 |A|}{\lambda \ssize \minprior}$ (ignoring the exponential term $\exp \left(\frac{-N \minprior}{8}\right)C_{\max} T$, and use the bound $B=\cost_{\max} T$. We also ignore the exponential (in $\delta$) term $2B\sqrt{\frac{\ln (1/ \delta)}{2 \ssize}}$. This gives

\begin{equation*}
    \begin{split}
        \loss(\hat{\pi}^{*}) - \loss(\bayesopt) 
        \leq 2\lambda T + \sqrt{\frac{\cost_{\max}^2 T^2}{2 \ssize \delta} + \frac{48\cost_{\max}^3 T^3 |A|}{2 \delta \lambda \ssize \minprior}}. 
    \end{split}
\end{equation*}

\end{proof}
\section{Lower Bound}\label{s:appendix_lower_bound}

We show that in the worst case, the number of samples required for generalization has an exponential dependence on the horizon $T$. To do this, we will show a problem where for $N = 2^T$ there is a constant generalization error with a non-negligible probability. 

Consider an MDP space $\mathcal{M}$ of size $2^T$, where the state space has $2T+1$ states that we label $s_0, s_1^0, s_1^1, \dots, s_t^0, s_t^1, \dots, s_T^0, s_T^1$. Only at the last time step the agent can choose an action from $A = {a_0, a_1}$, and obtain a cost (during all previous time steps the cost is $0$ and actions do not have an effect).
Each MDP $M\in \mathcal{M}$ has a unique identifier $x$ -- a binary number of size $T$, i.e., $x = [x_1,\dots, x_T]$. The initial state for all MDPs in $\mathcal{M}$ is $s_0$. The transitions for the MDP identified by $x$ are:
\begin{equation*}
\begin{split}
    P(s'= s_{t+1}^0 | s = s_t^0) &= \delta_{x_t = 0}(1 - \epsilon) + \delta_{x_t = 1}\epsilon, \\
    P(s'= s_{t+1}^0 | s = s_t^1) &= \delta_{x_t = 0}(1 - \epsilon) + \delta_{x_t = 1}\epsilon, \\
    P(s'= s_{t+1}^0 | s = s_0) &= \delta_{x_t = 0}(1 - \epsilon) + \delta_{x_t = 1}\epsilon, \\
    P(s'= s_{t+1}^1 | s = s_t^0) &= \delta_{x_t = 0}(\epsilon) + \delta_{x_t = 1}(1-\epsilon), \\
    P(s'= s_{t+1}^1 | s = s_t^1) &= \delta_{x_t = 0}(\epsilon) + \delta_{x_t = 1}(1-\epsilon), \\
    P(s'= s_{t+1}^1 | s = s_0) &= \delta_{x_t = 0}(\epsilon) + \delta_{x_t = 1}(1-\epsilon). \\
\end{split}
\end{equation*}
Let $\epsilon' < 1$. In the following, we assume that $\epsilon = \frac{\epsilon'}{2^T}$. Let $f(x) \in \{0,1\}$ be some binary function. The cost function for the MDP identified by $x$ is $\cost(a_0) = f(x)$, $\cost(a_1) = 1 - f(x)$. We assume that $P(M=x)$ is a uniform distribution. Given a history $h_T = s_0, s_1, \dots, s_T$, the posterior distribution is $P(M=x | h_T) \propto \Pi_{t=1}^T ((1-\epsilon) \delta_{x_t = s_t} + \epsilon \delta_{x_t \neq s_t})$, and the most likely estimate is $x^* = s_1, \dots, s_T$, and the expected cost is minimized for $a^*(h_T) = \argmin \{ f(x^*), 1-f(x^*)\}$. The optimal expected cost therefore satisfies 
\begin{equation*}
    \loss^* \leq 0 \cdot P(M=x^*|h_T) + 1 \cdot P(M\neq x^*|h_T) =\sum_{x\neq x^*}P(M=x|h_T)  \leq \sum_{x\neq x^*} \epsilon =(2^T-1) \epsilon =\epsilon'-\epsilon \leq \epsilon',
\end{equation*}
where we compared the optimal cost to a suboptimal policy that obtains the worst case cost of $1$ on every MDP that is not $x^*$ and a cost of zero for $x^*$.

Now, given $N\to \infty$ sampled MDPs, the distribution of the fraction of unique MDPs in the sample converges to a Gaussian distribution $\mathcal{N}\left( (1 - e^{-1}, (e^{-1} - 2e^{-2}))\right)$ \cite{mendelson2016distribution}. Therefore, for each $\delta>0$ there is a $\phi > 0$ such that with probability of at least $\delta$, a fraction of $\phi$ MDPs is not sampled. For these MDPs, the empirical prior is zero. Given a history $h_T = s_0, s_1, \dots, s_T$ that corresponds to an unvisited $x$, the expected cost is minimized for some action that depends on the data and not on $f(x)$, therefore, there exists an $f$ such that for each unvisited $x$, the expected cost is at least $0.5(1 - \epsilon')$, since $ 0.5 \cdot P( x \not\in \mathcal{M}) = 0.5 \cdot (1-\sum_{x \in \mathcal{M}}P(M=x)) \geq 0.5(1 - \epsilon')$. The total expected cost is therefore larger than $0.5\phi(1 - \epsilon')$, and the regret is larger than $0.5\phi - \epsilon'(1 + 0.5\phi)$.
We can thus always choose $\epsilon'$ small enough to get a positive regret.

\section{Overfitting Example}\label{s:appendix_overfitting}
For a \textit{finite} $N$, if the hypothesis set $\hyp$ is not restricted, the ERM policy can be arbitrarily bad. To see this, consider MDPs with a single state, $2$ actions, and two corresponding rewards $r_1$ and $r_2$. The set $\mathcal{M}$ corresponds to a distribution over the 2-dimensional continuous vector $[r_1, r_2]$, and let this distribution be uniform in $[0,1]^2$. For $T > 1$, an optimal policy is to try out each action, and then choose the action corresponding to the highest reward. For any finite $N$, we can devise a policy that, after trying out each action once, maps all the rewards in the training MDPs to their highest reward action, while mapping every other possible reward pair to its lowest reward action. Such a policy will obtain a high return on training MDPs and the lowest return on test MDPs.\footnote{The policy above is actually not the optimal ERM policy, as for a finite $N$ and uniform reward distribution, there is $0$ probability of having two training MDPs with the same rewards for at least one of the actions, and therefore trying out just one arm is enough to identify the MDP and select the best action.
A similar argument can still be used to devise a policy that settles on the worst action for every MDP that is not in the training set.}
\end{document}